\definecolor{myblue}{RGB}{80,80,160}
\definecolor{mygreen}{RGB}{80,160,80}
\newcounter{counter_exm}\setcounter{counter_exm}{1}
\newtheorem{theorem}{Theorem}
\newtheorem{lemma}[theorem]{Lemma}
\newtheorem{corollary}[theorem]{Corollary}
\newtheorem{definition}[theorem]{Definition}
\newtheorem{example}{Example}[section]
\newcommand\E{\mathbb{E}}
\newcommand\loweps{\underline{\varepsilon}}
\newcommand\higheps{\overline{\varepsilon}}
\newcommand\lowset{\underline{B}}
\newcommand\highset{\overline{B}}
\newcommand\lowelement{\underline{i}}
\newcommand\highelement{\overline{i}}
\newcommand{\email}[1]{\href{mailto:#1}{\nolinkurl{#1}}}
\title{Fair Grading Algorithms for Randomized Exams}
\author{
Jiale Chen\thanks{Department of Management Science and Engineering, Stanford University. Email: \email{jialec@stanford.edu}.}
\and
Jason Hartline\thanks{Department of Computer Science, Northwestern University. Email: \email{hartline@northwestern.edu}.}
\and
Onno Zoeter\thanks{Booking.com. Email: \email{Onno.zoeter@booking.com}.}
}
\date{}
\begin{document}

\maketitle
\begin{abstract}
This paper studies grading algorithms for randomized exams. In a randomized exam, each student is asked a small number of random questions from a large question bank. The predominant grading rule is simple averaging, i.e., calculating grades by averaging scores on the questions each student is asked, which is fair ex-ante, over the randomized questions, but not fair ex-post, on the realized questions.  The fair grading problem is to estimate the average grade of each student on the full question bank.  The maximum-likelihood estimator for the Bradley-Terry-Luce model on the bipartite student-question graph is shown to be consistent with high probability when the number of questions asked to each student is at least the cubed-logarithm of the number of students. In an empirical study on exam data and in simulations, our algorithm based on the maximum-likelihood estimator significantly outperforms simple averaging in prediction accuracy and ex-post fairness even with a small class and exam size.
\end{abstract}

\section{Introduction}

A common approach for deterring cheating in online examinations is to
assign students random questions from a large question bank.  This
random assignment of questions with heterogeneous difficulties leads to
different overall difficulties of the exam that each student faces.
Unfortunately, the predominant grading rule -- simple averaging --
averages all question scores equally and results in an unfair grading
of the students. This paper develops a grading algorithm that utilizes
structural information of the exam results to infer student abilities
and question difficulties.  From these abilities and difficulties,
fairer and more accurate grades can be estimated.  This grading
algorithm can also be used in the design of short exams that
maintain a desired level of accuracy.

During the COVID-19 pandemic, learning management systems (LMS) like
Blackboard, Moodle, Canvas by Instructure, and D2L have benefited
worldwide students and teachers in remote learning
\citep{razaSocialIsolationAcceptance2021}. The current exam module in
these systems includes four steps. In the first step, the instructor
provides a large question bank.  In the second step, the system
assigns each student an independent random subset of the
questions. (Assigning each student an independent random subset of the
questions helps mitigate cheating.)  In the third step, students
answer the questions. In the last step, the system grades each student
proportionally to her accuracy on assigned questions, i.e., by {\em
  simple averaging}.

While randomizing questions and grading with simple averaging is ex-ante fair,
it is not generally ex-post fair.  When questions in the
question bank have varying difficulties, then by random chance a student
could be assigned more easy questions than average or more hard
questions than average.  Ex-post in the random assignment of questions
to students, the simple averaging of scores on each question allows
variation in question difficulties to manifest as ex-post
unfairness in the final grades.

The aim of this paper is to understand grading algorithms that are fair
and accurate. Given a bank of possible questions, a benchmark for
both fairness and accuracy is the counterfactual grade that a student
would get if the student was asked all of the questions in the
question bank. Exams that ask fewer questions to the students may be
inaccurate with respect to this benchmark and the inaccuracy may vary
across students and this variation is unfair. This benchmark allows
for both the comparison of grading algorithms and the design of randomized
exams, i.e., the method for deciding which questions are asked to
which students.

The grading algorithms developed in this paper are based on the
Bradley-Terry-Luce model \citep{Bradley1952-nz} on bipartite
student-question graphs.  This model is also studied in the psychology
literature where it is known as the Rasch model
\citep{raschProbabilisticModelsIntelligence1993}.  This model views
the student answering process as a noisy comparison between a
parameter of the student and a parameter of the
question. Specifically, there is a merit value vector $u$ which describes
the student abilities and question difficulties and is unknown to
the instructor. The probability that student $i$ answers
question $j$ correctly is defined to be
\[f(u_i-u_j)=\frac{\exp(u_i)}{\exp(u_i)+\exp(u_j)},\]
where $f(x)=\frac{1}{1+\exp(-x)}$, and $u_i$, $u_j$ represents the merit value of student $i$ and question $j$ respectively.

The paper develops a grading algorithm that is based on the maximum
likelihood estimator $\boldsymbol{u^*}$ of the merit vector. Compared to simple
averaging which only focuses on student in-degrees and out-degrees, our
grading algorithm incorporates more structural information about the exam
result and, as we show, reduces ex-post unfairness.

\paragraph{Results.}

Our theoretical analysis considers a sequence of distributions over
random question assignment graphs indexed by $n$ and $m$
by setting the number of students to $n$
and number of questions in the question bank to $m\geq n$ and
assigning $d_{n,m}$ random questions uniformly and independently to each
student. The exam
result can be represented by a directed graph, where an edge from a
student to a question represents a correct answer and the opposite
direction represents an incorrect answer.  We prove that the maximum
likelihood estimator exists and is unique within a strongly connected
component (\Cref{thm:equiv}). 
Let $\alpha_{n,m}=\max_{1\leq i,j\leq n+m} u_i-u_j$ be the largest
difference between any pair of merits.  We prove that if
 \[\frac{\exp(\alpha_{n,m})(n+m)\log(n+m)}
    {nd_{n,m}}\rightarrow 0\quad (n,m\rightarrow\infty),\]
then the probability that the exam result graph is strongly connected goes to 1 (\Cref{thm:connectivity}). Thus, the existence and uniqueness of the MLE are
guaranteed under the model. We also prove that if
\begin{equation}
    \exp\left(2(\alpha_{n,m}+1)\right)
    \Delta_{n,m}\rightarrow 0\quad(n,m\rightarrow\infty),
\end{equation}
where $\Delta_{n,m}= \sqrt{\frac{m\log^3 (n+m)}
    {nd_{n,m}\log^2 \left(\frac{n}{m}d_{n,m}\right)}}$,
then the MLEs are uniformly consistent, i.e.,
$\|\boldsymbol{u}^*-\boldsymbol{u}\|_\infty\stackrel{\mathbb{P}}{\longrightarrow}
0$ (\Cref{thm:consistency}).  These theoretical results complement the empirical and simulation
results from the literature on the Rasch model with random missing data.  Our analysis is similar
to that of \citet{hanAsymptoticTheorySparse2020} which studies
Erd\"os-R\'enyi random graphs.

Our empirical analysis considers a study of grading algorithms on both
anonymous exam data and numerical simulations.  The exam data set
consists of 22 questions and 35 students with all students answering
all questions.  From this data set, randomized exams with fewer than 22
questions can be empirically studied and grading algorithms can be
compared.  Our algorithm outperforms simple averaging when students
are asked at least seven questions.  We fit the model parameters
to this real-world dataset and run numerical simulations with the
resulting generative model.  With these simulations, we compare our
algorithm and simple averaging on ex-post bias and ex-post error,
two notion of ex-post unfairness.
For example, when each of the 35 students answers
a random 10 of the 22 questions, we find that the
expected maximum ex-post bias of simple averaging is about $100$ times higher than
that of our algorithm.  The expected output of simple averaging
has about 13\% expected deviation from the benchmark for the most unlucky student,
which would probably lead to a different letter grade for the students,
while the deviation is only about 1.6\% for our algorithm. In the same setting, we found that
our algorithm achieves a factor of 8 percent smaller ex-post error, which is a noisier
concept of ex-post unfairness. After the decomposition of ex-post error into
ex-post bias and variance, we found that our algorithm achieves
a significantly smaller ex-post bias with the cost of a slightly larger variance of the
output, and in combination it reduces the ex-post error.

We also evaluate a related simple problem of exam design via simulation in \Cref{sec:exam design problem}.
Given an infinite question bank, we sample a fixed number of active questions,
then we create a randomized exam with five questions for each student drawn
from this fixed number of active questions.  When there
are only five active questions, our grading algorithm and simple averaging
coincide as all students are asked all active questions.  When there
are an infinite number of active questions the algorithms again
coincide as no two students are asked the same question and our
algorithm and simple averaging are the same.  By simulation we
consider the ex-post bias as a function of the number of active questions and
find that the optimal number of questions is about six to nine for maximum
ex-post bias and about ten to fifteen for average ex-post bias.

\paragraph{Illustrative Example.} 

\Cref{fig:example} illustrates the unfairness that can arise in simple
averaging and the intuition for how our algorithm improves fairness.
In the fair exam grading problem, the instructor first assigns
questions to students according to an undirected student-question
bipartite graph, a.k.a., the task assignment graph
(\Cref{fig:example:assignment}).  The exam result can be represented
by a directed student-question bipartite graph, a.k.a., the exam
result graph (\Cref{fig:example:result}), where a directed edge from a
student to a question represents a correct answer and an opposite
direction represents an incorrect answer. Given the exam result graph,
the instructor uses a grading algorithm to estimate the average
accuracy of students on the whole question bank.

We take student S2 as an example to see how simple averaging, as one
specific grading rule, grade students
(\Cref{fig:example:averaging}). Simple averaging observes that student
S2 has one correct answer and one incorrect answer, in other words,
the corresponding vertex has in-degree one and out-degree one. Thus
simple averaging predicts that the probability of student S2 answering
the remaining question Q3 correctly is 0.5. The reason we believe 0.5
is not a good prediction is, in this exam, every student assigned
question Q3 answers it correctly, including student S5 and S6, who
give incorrect answers to question Q2.  We can infer that Q3 is a
relatively easy question. On the other hand, student S2 who can answer
question Q2 correctly has a relatively higher ability among the
class. Therefore, it is reasonable to believe that student S2 can give
a correct answer to question Q3 and, thus, simple averaging
underestimates her grade.

We show how our algorithm analyze the missing edge between student S2
and question Q3 in this example (\Cref{fig:example:my}).  Our
algorithm finds that student S2 belongs to the strongly connected
component \{S1,S2,Q1,Q2\}, while question Q3 belongs to \{Q3\} and the
missing edge goes across two comparable components. As a property of
the graph, any directed path between student S2 and question Q3 goes
from the student to the question. Our grading rule takes it as a
strong evidence of the S2's ability higher than Q3's difficulty
and predicts that student S2 can answer question Q3 correctly with
probability one.

\begin{figure}[!h]
    \subfigure[Task Assignment Graph]{
        \begin{minipage}{0.45\linewidth}
            \centering
            \scalebox{0.8}{\begin{tikzpicture}[
thick,
roundnode/.style={circle, draw=green!60, fill=green!5, very thick, minimum size=7mm},
squarednode/.style={rectangle, draw=red!60, fill=red!5, very thick, minimum size=7.5mm},
smallroundnode/.style={circle, draw=green!60, fill=green!5, very thick, minimum size=3mm},
smallsquarednode/.style={rectangle, draw=red!60, fill=red!5, very thick, minimum size=3.4mm},
]
%Nodes
\node[roundnode]    (Q1)    at  (-2, 3)     {Q1};
\node[roundnode]    (Q2)    at  (2, 3)      {Q2};
\node[roundnode]    (Q3)    at  (0, -1.5)     {Q3};
\node[squarednode]  (S1)    at  (0, 4)      {S1};
\node[squarednode]  (S2)    at  (0, 2)      {S2};
\node[squarednode]  (S3)    at  (-3, 0.5)     {S3};
\node[squarednode]  (S4)    at  (-1, 0.5)     {S4};
\node[squarednode]  (S5)    at  (1, 0.5)      {S5};
\node[squarednode]  (S6)    at  (3, 0.5)      {S6};

%Lines
\draw[-] (S1) -- (Q1);
\draw[-] (S1) -- (Q2);
\draw[-] (S2) -- (Q1);
\draw[-] (S2) -- (Q2);
\draw[-] (S3) -- (Q1);
\draw[-] (S3) -- (Q3);
\draw[-] (S4) -- (Q1);
\draw[-] (S4) -- (Q3);
\draw[-] (S5) -- (Q2);
\draw[-] (S5) -- (Q3);
\draw[-] (S6) -- (Q2);
\draw[-] (S6) -- (Q3);

\matrix [draw,right] at (3.2, 4) {
  \node [smallroundnode,anchor=east,label=right:Question] {}; \\
  \node [smallsquarednode,anchor=east,label=right:Student] {}; \\
};
\end{tikzpicture}}
            \label{fig:example:assignment}
        \end{minipage}
    }\hfill
    \subfigure[Exam Result Graph]{
        \begin{minipage}{0.45\linewidth}
            \centering
            \scalebox{0.8}{\begin{tikzpicture}[
thick,
roundnode/.style={circle, draw=green!60, fill=green!5, very thick, minimum size=7mm},
squarednode/.style={rectangle, draw=red!60, fill=red!5, very thick, minimum size=7.5mm},
smallroundnode/.style={circle, draw=green!60, fill=green!5, very thick, minimum size=3mm},
smallsquarednode/.style={rectangle, draw=red!60, fill=red!5, very thick, minimum size=3.4mm},
->,shorten >= 1pt,shorten <= 1pt,
]
%Nodes
\node[roundnode]    (Q1)    at  (-2, 3)     {Q1};
\node[roundnode]    (Q2)    at  (2, 3)      {Q2};
\node[roundnode]    (Q3)    at  (0, -1.5)     {Q3};
\node[squarednode]  (S1)    at  (0, 4)      {S1};
\node[squarednode]  (S2)    at  (0, 2)      {S2};
\node[squarednode]  (S3)    at  (-3, 0.5)     {S3};
\node[squarednode]  (S4)    at  (-1, 0.5)     {S4};
\node[squarednode]  (S5)    at  (1, 0.5)      {S5};
\node[squarednode]  (S6)    at  (3, 0.5)      {S6};

%Lines
\draw[->] (S1) -- (Q1);
\draw[<-] (S1) -- (Q2);
\draw[<-] (S2) -- (Q1);
\draw[->] (S2) -- (Q2);
\draw[<-] (S3) -- (Q1);
\draw[->] (S3) -- (Q3);
\draw[<-] (S4) -- (Q1);
\draw[->] (S4) -- (Q3);
\draw[<-] (S5) -- (Q2);
\draw[->] (S5) -- (Q3);
\draw[<-] (S6) -- (Q2);
\draw[->] (S6) -- (Q3);

\matrix [draw,right] at (3.2, 4) {
  \node [smallroundnode,anchor=east,label=right:Question] {}; \\
  \node [smallsquarednode,anchor=east,label=right:Student] {}; \\
};
\end{tikzpicture}}
            \label{fig:example:result}
        \end{minipage}
    }
    \subfigure[simple averaging]{
        \begin{minipage}{0.45\linewidth}
            \centering
            \scalebox{0.8}{\begin{tikzpicture}[
thick,
roundnode/.style={circle, draw=green!60, fill=green!5, very thick, minimum size=7mm},
squarednode/.style={rectangle, draw=red!60, fill=red!5, very thick, minimum size=7.5mm},
smallroundnode/.style={circle, draw=green!60, fill=green!5, very thick, minimum size=3mm},
smallsquarednode/.style={rectangle, draw=red!60, fill=red!5, very thick, minimum size=3.4mm},
->,shorten >= 1pt,shorten <= 1pt,
every fit/.style={ellipse,draw,inner sep=5pt,text width=1.5cm}
]
%Nodes
\node[roundnode]    (Q1)    at  (-2, 3)     {Q1};
\node[roundnode]    (Q2)    at  (2, 3)      {Q2};
\node[roundnode]    (Q3)    at  (0, -1.5)     {Q3};
\node[squarednode]  (S1)    at  (0, 4)      {S1};
\node[squarednode]  (S2)    at  (0, 2)      {S2};
\node[squarednode]  (S3)    at  (-3, 0.5)     {S3};
\node[squarednode]  (S4)    at  (-1, 0.5)     {S4};
\node[squarednode]  (S5)    at  (1, 0.5)      {S5};
\node[squarednode]  (S6)    at  (3, 0.5)      {S6};

%Lines
\draw[->] (S1) -- (Q1);
\draw[<-] (S1) -- (Q2);
\draw[<-] (S2) -- (Q1);
\draw[->] (S2) -- (Q2);
\draw[<-] (S3) -- (Q1);
\draw[->] (S3) -- (Q3);
\draw[<-] (S4) -- (Q1);
\draw[->] (S4) -- (Q3);
\draw[<-] (S5) -- (Q2);
\draw[->] (S5) -- (Q3);
\draw[<-] (S6) -- (Q2);
\draw[->] (S6) -- (Q3);

\matrix [draw,right] at (3.2, 4) {
  \node [smallroundnode,anchor=east,label=right:Question] {}; \\
  \node [smallsquarednode,anchor=east,label=right:Student] {}; \\
};

\node [myblue,fit=(S2)] {};
\draw[dashed,-] (S2) -- node[right] {.5} ++ (Q3);
\end{tikzpicture}}
            \label{fig:example:averaging}
        \end{minipage}
    }\hfill
    \subfigure[Our method]{
        \begin{minipage}{0.45\linewidth}
            \centering
            \scalebox{0.8}{\begin{tikzpicture}[
thick,
roundnode/.style={circle, draw=green!60, fill=green!5, very thick, minimum size=7mm},
squarednode/.style={rectangle, draw=red!60, fill=red!5, very thick, minimum size=7.5mm},
smallroundnode/.style={circle, draw=green!60, fill=green!5, very thick, minimum size=3mm},
smallsquarednode/.style={rectangle, draw=red!60, fill=red!5, very thick, minimum size=3.4mm},
->,shorten >= 1pt,shorten <= 1pt,
every fit/.style={ellipse,draw,inner sep=-3mm, fill=blue!5}
]
%Nodes
\node[roundnode]    (Q1)    at  (-2, 3)     {Q1};
\node[roundnode]    (Q2)    at  (2, 3)      {Q2};
\node[roundnode]    (Q3)    at  (0, -1.5)     {Q3};
\node[squarednode]  (S1)    at  (0, 4)      {S1};
\node[squarednode]  (S2)    at  (0, 2)      {S2};
\node[squarednode]  (S3)    at  (-3, 0.5)     {S3};
\node[squarednode]  (S4)    at  (-1, 0.5)     {S4};
\node[squarednode]  (S5)    at  (1, 0.5)      {S5};
\node[squarednode]  (S6)    at  (3, 0.5)      {S6};

%Lines
\draw[->] (S1) -- (Q1);
\draw[<-] (S1) -- (Q2);
\draw[<-] (S2) -- (Q1);
\draw[->] (S2) -- (Q2);
\draw[<-] (S3) -- (Q1);
\draw[->] (S3) -- (Q3);
\draw[<-] (S4) -- (Q1);
\draw[->] (S4) -- (Q3);
\draw[<-] (S5) -- (Q2);
\draw[->] (S5) -- (Q3);
\draw[<-] (S6) -- (Q2);
\draw[->] (S6) -- (Q3);

\matrix [draw,right] at (3.2, 4) {
  \node [smallroundnode,anchor=east,label=right:Question] {}; \\
  \node [smallsquarednode,anchor=east,label=right:Student] {}; \\
};
\begin{scope}[on background layer]
\node [myblue,fit=(S1)(S2)(Q1)(Q2)] {};
\end{scope}
\draw[dashed,-] (S2) -- node[right] {1} ++ (Q3);
\end{tikzpicture}}
            \label{fig:example:my}
        \end{minipage}
    }
    \caption{A running example of the exam grading problem}
    \label{fig:example}
\end{figure}
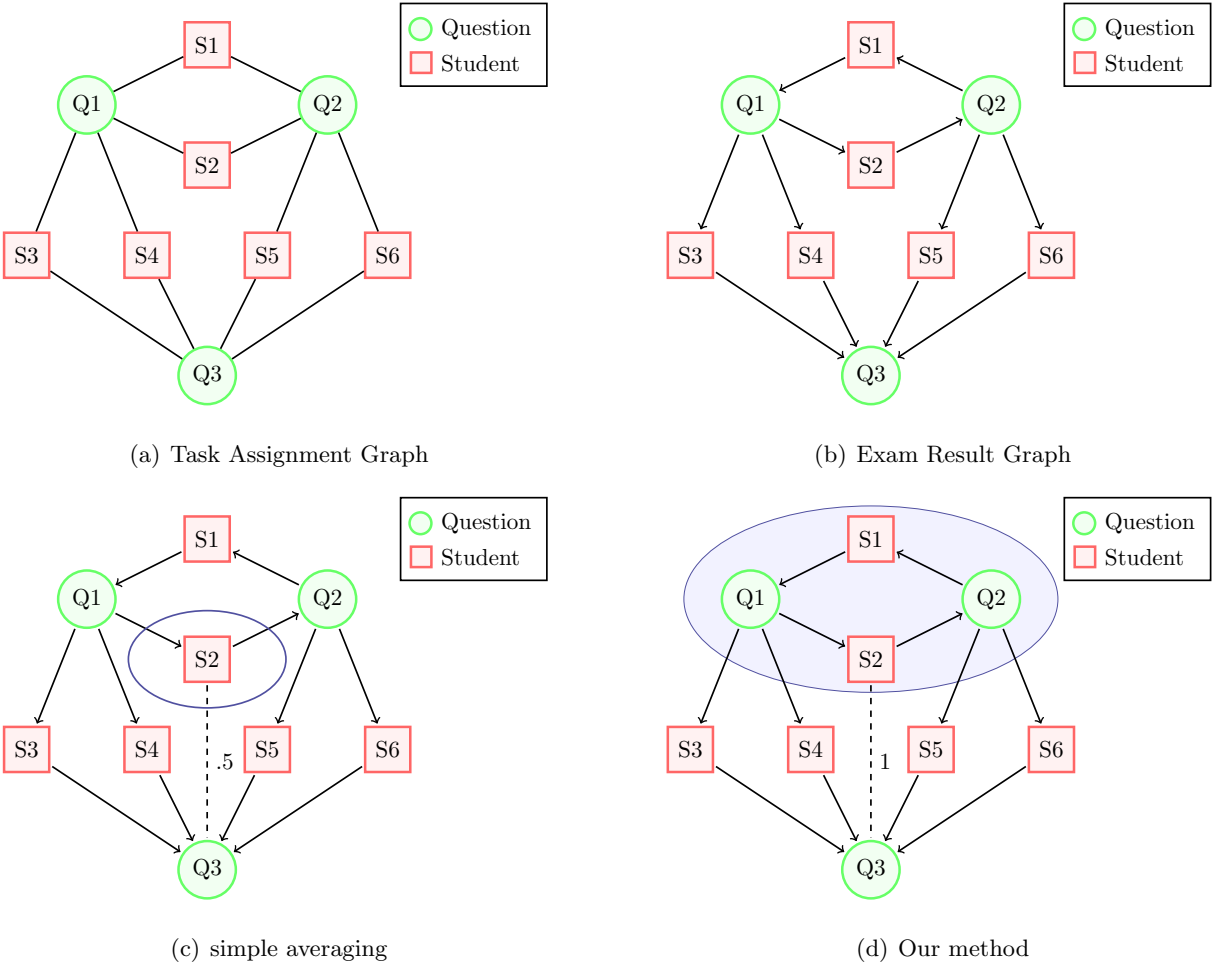
\paragraph{Related Work.}

The literature on peer grading also compares estimation from
structural models and simple averaging.  When peers are assigned to
grade submissions, the quality of peer reviews can vary.  Structural
models can be used to estimate peer quality and calculate grades on
the submissions that put higher weight on peers who give higher
quality reviews.  Alternatively, submission grades can be calculated
by simply averaging the reviews of each peer.  The literature has
mixed results.  \citet{dealfaroCrowdGraderCrowdsourcingEvaluation}
propose an algorithm based on reputation that largely outperforms
simple averaging on synthetic data, and is better on real-world data
when student grading error is not random. \citet{reilyTwoPeersAre2009}
and \citet{hamerMethodAutomaticGrade2005a} also point out that
sophisticated aggregation improves the accuracy compared to simple
averaging and also helps to avoid rogue strategies including laziness
and aggressive grading.  On the other hand,
\citet{sajjadiPeerGradingCourse2016} show that statistical and machine
learning methods do not perform better than simple averaging on their
dataset.  In contrast our result that structural models outperform
simple averaging is replicated on several data sets.  We believe this
difference with the peer grading literature is due to differences in
the degrees of the bipartite graphs considered.  The exam grading
graphs are higher degree than the peer-grading graphs.

In psychometrics, item response theory (IRT) considers mathematical
models that build relationship between unobserved characteristics of
respondents and items and observed outcomes of the responses. The
Rasch model is a commonly used model of IRT that can be applied to
psychometrics, educational research
\citep{raschProbabilisticModelsIntelligence1993}, health sciences
\citep{bezruczkoRaschMeasurementHealth2005}, agriculture
\citep{moralCharacterizationSoilFertility2017}, and market research
\citep{bechtelGeneralizingRaschModel1985}. Previous simulation studies
showed that among different item parameter estimation methods for the
Rasch model, the joint maximum likelihood (JML) method and its variants
provides one of the most efficient estimates
\citep{robitzschComprehensiveSimulationStudy2021}, especially with missing data
\citep{waterburyMissingDataRasch2019,endersAppliedMissingData2010}. In
our setting, randomly assignment of questions to students can be seen
as a special case of missing data. With complete data, the condition for the consistency of the maximum likelihood estimators is analyzed  \citep{habermanMaximumLikelihoodEstimates1977,habermanJOINTCONDITIONALMAXIMUM2004}. With missing data, though plenty of works on simulation
exists, there is a lack of theoretical work that proves mathematically
the consistency of the maximum likelihood estimators.

The Rasch model can be regarded as a special case of the
Bradley-Terry-Luce (BTL) model \citep{Bradley1952-nz} for the pairwise
comparison of respondents with items by restricting the comparison
graph to a bipartite graph. For the BTL model with Erd\"os-R\'enyi graph $G(n,
p_n)$, the maximum likelihood estimator (MLE) can be solved by
an efficient algorithm
\citep{zermeloBerechnungTurnierErgebnisseAls1929c,fordSolutionRankingProblem1957,hunterMMAlgorithmsGeneralized2004},
and is proved to be a consistent method in $l_{\infty}$ norm when
$\lim\inf_{n\to\infty} p_n>0$ \citep{simonsAsymptoticsWhenNumber1999,
  yanSPARSEPAIREDCOMPARISONS2012}, and recently when $p_n\geq \frac{\log
  n^3}{n}$ \citep{hanAsymptoticTheorySparse2020} which is close to the
theoretical lower bound of $\frac{\log n}{n}$, below which the
comparison graph would be disconnected with positive probability and
there is no unique MLE.

In this paper, we follow the method of
\citet{hanAsymptoticTheorySparse2020} to prove the consistency of the
Rasch model with missing data, or BTL model with a sparse bipartite
graph, when each vertex in the left part is assigned small number of
random edges to the vertices in the right part. We also propose an
extension of the algorithm that reasonably deals with the cases where
the MLE does not exists.

\citet{fowlerAreWeFair2022} recently studied unfairness detection of the simple averaging under the same randomized exam setting and argue that ``the exams are reasonably fair''. They use certain IRT model to fit exams based on their real-world data, and find that the simple averaging gives grades that are strongly correlated with the students' inferred abilities. They also simulate under the IRT model, over random assignment and the student answering process. The simulation shows that, if given any fixed assignment we consider the absolute error of the students' expected performance over their answering process, the average absolute error over different assignments reaches a 5-percentage bias. We find similar results in our simulation, and design a method to reduce the corresponding error by a factor of ten. Our method solves one of their future directions by adjusting grades of the students based on their exam variant.

All large-scale standardized tests including the Scholastic Aptitude Test (SAT) and Graduate Record
Examination (GRE) are using item response theory (IRT) to generate score scales for alternative forms
~\citep{an2014item}. This test equating process can be divided into two steps, linking and equating.
Linking refers to how to estimate the IRT parameters of students and questions under the model; and
equating refers to how to adjust the raw grade of the students to adapt to different overall difficulty
levels in different version of the exam, e.g.~\citet{leeIRTLinkingEquating2018a}. One of the most popular
test equating processes is IRT true-score equating with nonequivalent-groups anchor test (NEAT) design. In the
NEAT design, there are two test forms given to two population of students, where a set of common questions is
contained in both forms. Linking performs by putting the estimated parameter of the common items onto the same
scale through a linear transformation, since any linear transformation gives the same probability under the
IRT model. Equating performs by taking the estimated ability of the student from the second form and compute
the expected number of accurate answers in the first form as the adjusted grade. Since these large-scale
standardized tests have a large population of students for each variant of the exam, the above test equating
process works well. 
Our methods can be viewed as adapting the statistical framework of linking and equating to the administration of a single exam for a small population of students.
In our randomized exam setting with small scale, however, every student receives a
different form of the exam, thus it is almost impossible to estimate the parameters for every form separately or to decide an anchor set of question and do the same linking. Our algorithm uses the 
concurrent linking that estimates all parameters at the same time based on the information
in all forms. As for equating, we use a similar method of true-score equating, but 
compute on the whole question bank instead of one specific form.

In the problem of fair allocation of indivisible items, Best-of-Both-Worlds (BoBW) fairness mechanisms (e.g., \citet{azizSimultaneouslyAchievingExante2020, freemanBestBothWorlds2020, babaioffBestofBothWorldsFairShareAllocations2022}) try to provide both ex-ante fairness and ex-post fairness to agents. An ex-ante fair mechanism is easy to be found. For example, giving all items to one random agent guarantees that every agent receives a $\frac{1}{n}$ fraction of the total value in expectation (ex-ante proportionality). However, such a mechanism is clearly not ex-post fair. Likewise, simple averaging gives every student an unbiased grade ex-ante, but neglects the different overall difficulty among students ex-post. We propose another grading rule that evaluates the difficulties of the questions and adjusts the grades according to them, which achieves better ex-post fairness of the students.
\section{Model}\label{sec:model}
Consider a set of students $S$ and a bank of questions $Q$. A merit
vector $\boldsymbol{u}$ is used to describe the key property of the
students and questions. Specifically, for any student $i\in S$, $u_i$
represents the ability of the student; for any question $j\in Q$,
$u_j$ represents the difficulty of the question. We put them in the same
vector for convenience. The merit vector is
unknown when the exam is designed. Denote $w_{ij}$ as the outcome of
the answering process. Then $w_{ij}$s are independent Bernoulli
random variables, where $w_{ij}=1$ represents a correct answer,
$w_{ij}=0$ represents an incorrect answer, and
\[\Pr[w_{ij}=1]=1-\Pr[w_{ij}=0]=\frac{\exp(u_i)}{\exp(u_i)+\exp(u_j)}=f(u_i-u_j),\]
where $f(x)=\frac{1}{1+\exp(-x)}$. The goal of the exam design is to assign a small number of questions to each student (task assignment graph), and based on the exam result (exam result graph), give each student a grade (grading rule) that accurately estimates her performance over the whole question bank (benchmark). We give a formal description of the task assignment graph, exam result graph, benchmark, and grading rule below.

\begin{definition}[Task Assignment Graph]
The task assignment graph $G=(S\cup Q, E)$ is an undirected bipartite
graph, where the left part of the vertices represents the students and
the right part represents the questions, and an edge between $i\in S$
and $j\in Q$ exists if and only if the instructor decides to assign
question $j$ to student $i$.
\end{definition}

\begin{definition}[Exam Result Graph]
The exam result graph $G'=(S\cup Q, E')$ is a directed bipartite graph
constructed from the task assignment graph $G$. All directed edges are
between students and questions. For any edge $(i,j)\in G$ in the
task assignment graph, where $i\in S$ and $j\in Q$, if student $i$
answers question $j$ correctly in the exam, i.e., we observe that
$w_{ij}=1$, there is an edge $i\to j$ in $G'$; if the answer is
incorrect, i.e., we observe that $w_{ij}=0$, there is an edge $j\to i$
in $G'$. For other student-question pairs that do not occur in the
task assignment graph $G$, there is also no edge between them in the
exam result graph $G'$.
\end{definition}

% Before the exam is taken, the instructor needs to decide the task assignment graph. There are some heuristic standards for generating such graphs. The first one is to assign only a small part of questions to a student, i.e., upper bounding the maximum degree among the students. It lowers the burden of the students. The second one is try to assign different questions to different students, i.e., increasing the edge expansion of $G$. It reduces the communication between students and provides the instructor with an overview of who are the better students and which are the more difficult questions. The third one is to be careful about the size of question bank. For example, given a question bank with infinite size, following first two standards would make the exam result graph disconnected. Then we lose the most helpful overlapping information. One way to solve this problem is to sample some questions from the bank.

To evaluate different exam designs and grading rules, we propose the following benchmark.

\begin{definition}[Benchmark]
In an ideal case where we know the distribution over the outcome of the answering processes $w_{ij}$s, the instructor would measure the students' performance by their expected accuracy on a random question in the bank. Formally, the benchmark for any student $i$'s grade is
\begin{equation}\label{def:benchmark}
    \mathrm{opt}_i=\E_{j\sim\mathcal U(Q)}[w_{ij}]=\frac{1}{|Q|}\sum_{j\in Q}f(u_i-u_j).
\end{equation}
\end{definition}

The benchmark is an ideal way to grade the student if the instructor has complete information on all answering processes. On the other hand, when the instructor only observes one sample of each $w_{ij}$ involved in the exam, we will use a grading rule to grade the students.

\begin{definition}[Grading Rule]
In an exam, the instructor gives a grade for each student based on the exam result graph. A grading rule is a mapping $g\colon G'\to \mathbb{R}^{S}$ from the exam result graph to the grades for each student.
\end{definition}

One interpretation of the grade is as an estimation of the benchmark, i.e., students' expected accuracy on a random question in the bank, which combines the two important criteria of fairness and accuracy. To evaluate the exam design, we compare the performance of the grading rule to the benchmark and aggregate the error among all students. Specifically, there are three stages of the exam design, before the randomization of the task assignment graph, after the randomization of the task assignment graph and before the student answering process, and after the student answering process.
In each stage, we might care about the maximum or average unfairness among students.

\begin{definition}[Ex-ante Bias]
For a given algorithm $\mathrm{alg}$, the ex-ante bias for student $i$ is defined as the mean square error of the algorithm's expected performance compared to the benchmark, over a random family $\mathcal G$ of task assignment graphs, i.e., $\left(\E_{G\sim\mathcal G}\E_{w}[\mathrm{alg}_i]-\mathrm{opt}_i\right)^2$.
\end{definition}
\begin{definition}[Ex-post Bias]
For a given algorithm $\mathrm{alg}$ and a fixed task assignment graph $G$, the ex-post bias for student $i$ is defined as the mean square error of the algorithm's expected performance compared to the benchmark on $G$, i.e., $\left(\E_{w}[\mathrm{alg}_i]-\mathrm{opt}_i\right)^2$.
\label{definition:ex-post bias}
\end{definition}
\begin{definition}[Ex-post Error]
For a given algorithm $\mathrm{alg}$, a fixed task assignment graph $G$, and a fixed realization of the student answering process $w$, the ex-post error for student $i$ is defined as the mean square error of the algorithm's performance compared to the benchmark on $G$ and $w$, i.e., $\left(\mathrm{alg}_i-\mathrm{opt}_i\right)^2$.
\label{definition:ex-post error}
\end{definition}

The difference between ex-ante bias, ex-post bias, and ex-post error is that ex-ante bias takes
expectation over both random graphs and the noisy answering process, ex-post bias takes expectation over the noisy answering process, while ex-post error directly measures the error. Thus ex-post error is harder than ex-post bias which is harder than ex-ante bias to achieve.

\begin{example}[Simple Averaging]
Simple averaging is a commonly used grading rule in exams. It calculates the average accuracy on the questions the student receives. Formally, given a exam result graph $G'$, the simple averaging grades student $i$ by
\begin{equation}
    \mathrm{avg}_i=\frac{\deg^{+}_i}{\deg^{-}_i+\deg^{+}_i}=\frac{\sum_{j}1_{(i,j)\in E'}}{\sum_{j}1_{(i,j)\in E}},
\end{equation}
where $\deg^{+}$ and $\deg^{-}$ represents the outdegree and indegree of the vertex in $G'$, respectively.
\label{example:Avg}
\end{example}
\begin{theorem}
The simple averaging is ex-ante fair over any family of bipartite graphs $\mathcal G$ that is symmetric with respect to the questions, i.e., its ex-ante bias is 0.
\end{theorem}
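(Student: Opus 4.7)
The plan is to establish $\E_{G\sim\mathcal G}\E_w[\mathrm{avg}_i]=\mathrm{opt}_i$ via a two-stage conditioning argument: first fix the task assignment graph $G$ and integrate out the answering process, then integrate over $G$ using the symmetry of $\mathcal G$. Let $N_G(i)$ denote the set of questions assigned to student $i$ in $G$, so $|N_G(i)|=\deg^{-}_i+\deg^{+}_i$ in the notation of \Cref{example:Avg}. Since the $w_{ij}$ are independent Bernoulli random variables with $\E[w_{ij}]=f(u_i-u_j)$, the definition of $\mathrm{avg}_i$ gives
\[
\E_w[\mathrm{avg}_i \mid G] \;=\; \frac{1}{|N_G(i)|}\sum_{j\in N_G(i)} f(u_i - u_j).
\]

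For the outer expectation, I would use the question-symmetry assumption. Invariance of the distribution of $G$ under every permutation $\pi$ of $Q$ implies the same invariance for $N_G(i)$. Consequently, conditional on $|N_G(i)|=k$, the set $N_G(i)$ is uniformly distributed over all $k$-subsets of $Q$, and each specific $j\in Q$ lies in $N_G(i)$ with probability $k/|Q|$. Applying linearity of expectation,
\[
\E_G\!\left[\frac{1}{k}\sum_{j\in N_G(i)} f(u_i-u_j)\,\Big|\,|N_G(i)|=k\right] \;=\; \frac{1}{k}\sum_{j\in Q}\frac{k}{|Q|}\,f(u_i-u_j) \;=\; \mathrm{opt}_i.
\]
Since this identity holds for every $k\geq 1$, the tower property gives $\E_{G\sim\mathcal G}\E_w[\mathrm{avg}_i]=\mathrm{opt}_i$, and hence the ex-ante bias $\bigl(\E_G\E_w[\mathrm{avg}_i]-\mathrm{opt}_i\bigr)^2$ is zero.

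I do not anticipate a genuine obstacle; the argument is essentially a symmetrization. The only subtle point is the edge case $|N_G(i)|=0$, where $\mathrm{avg}_i$ is undefined. I would handle this by restricting $\mathcal G$ to graphs in which every student has positive degree, which matches the paper's setting where each student is assigned at least $d_{n,m}\geq 1$ random questions. The moral of the proof is that under question-symmetry, conditioning on the number of questions asked to student $i$ makes the identity of those questions uniformly distributed over $Q$, so the empirical accuracy on her assigned questions is an unbiased estimator of her benchmark accuracy on the full bank.
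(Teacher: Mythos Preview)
Your proposal is correct and follows essentially the same symmetrization idea as the paper: both reduce to showing that question-symmetry of $\mathcal G$ makes each question contribute with weight $1/|Q|$ in expectation. The paper's version is marginally more direct---it writes $\E_G\bigl[\tfrac{1_{(i,j)\in E}}{\sum_{j'}1_{(i,j')\in E}}\bigr]$ and observes that by symmetry this quantity is the same for every $j$, hence equals $1/|Q|$ since the ratios sum to $1$---thereby bypassing your conditioning on $|N_G(i)|=k$; but the substance is identical.
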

\begin{proof}
\begin{equation}
\begin{aligned}
\forall i,~\E_{G\sim \mathcal G}\E_{w}\left[\mathrm{avg_i}\right]
&=\E_{G\sim \mathcal G}\E_{w}\left[\frac{\sum_{j}1_{(i,j)\in E'}}{\sum_{j}1_{(i,j)\in E}}\right]=\E_{G\sim \mathcal G}\E_{w}\left[\frac{\sum_{j}w_{ij}1_{(i,j)\in E}}{\sum_{j}1_{(i,j)\in E}}\right]\\
&=\E_{G\sim \mathcal G}\left[\frac{\sum_{j}\E[w_{ij}]1_{(i,j)\in E}}{\sum_{j}1_{(i,j)\in E}}\right]=\sum_j\E[w_{ij}]\E_{G\sim \mathcal G}\left[\frac{1_{(i,j)\in E}}{\sum_{j}1_{(i,j)\in E}}\right]=\mathrm{opt_i}
% &=\sum_{s=1}^n\E_{G\sim \mathcal G}\left[\frac{\sum_{j}\E[w_{ij}]1_{(i,j)\in E}}{s}\middle|\sum_{j}1_{(i,j)\in E}=s\right]\Pr_{G\sim\mathcal G}\left[\sum_{j}1_{(i,j)\in E}=s\right]\\
%&=\mathrm{opt_i}\sum_{s=1}^n\Pr_{G\sim\mathcal G}\left[\sum_{j}1_{(i,j)\in E}=s\right]=
\end{aligned}
\end{equation}
\end{proof}
In other words, simple averaging can be seen as an ex-ante unbiased estimator of the benchmark.
However, ex-post, i.e., on one specific task assignment graph, simple averaging is unfair. Intuitively, some unlucky students might be assigned harder questions and receive a significantly lower average grade than the benchmark, and the opposite happens to some lucky students. We will visualize this phenomenon in \Cref{fig:fixed} in \Cref{subsec:visual}.

Based on the above definitions, we now formalize the procedure and goal of the exam grading problem.
\begin{enumerate}[i.]
\setlength{\itemsep}{0pt}
\setlength{\parsep}{0pt}
    \item The instructor chooses a task assignment graph $G$.
    \item The students receive questions according to $G$ and give their answer sheet back, thus the instructor receives the exam result graph $G'$.
    \item The instructor uses a grading rule $g$ to grade the students based on $G'$.
    \item We want the grade $g(G')$ to have a small maximum (average) ex-post bias or ex-post error.
\end{enumerate}

\section{Method}
In this section, we propose our method for the exam grading
problem. According to our formalization of the problem, any method
contains two parts: generating the task assignment graph $G$, and
choosing the grading rule $g$. We describe each of them respectively.
\subsection{Task Assignment Graph}
For simplicity, we assume both the student set $S$ and the question
set $Q$ is finite. To generate the task assignment graph, we
sample $m$ different questions u.a.r.\ from the question bank, and
independently assign each student $d$ different questions u.a.r.\ from
those $m$ questions.

\begin{algorithm}
\caption{Task assignment graph generation}\label{alg:graph}
\begin{algorithmic}
\Require finite sets $S$ and $Q$, question sample size $1\leq m\leq |Q|$, degree constraint $1\leq d\leq m$
\Ensure a task assignment graph $G=(S\cup Q,E)$
\State $\tilde{Q}\gets$ a set of $m$ questions sampled u.a.r. without replacement from $Q$
\ForAll {$i\in S$}
    \State $J\gets$ a set of $d$ questions sampled u.a.r. without replacement from $\tilde{Q}$
    \State $E \gets E\cup\{(i,j)|j\in J\}$
\EndFor
\end{algorithmic}
\end{algorithm}

\subsection{Grading Rule}\label{sec:method:grading rule}
Recall that a grading rule maps from an exam result graph $G'$ to a
vector of probabilities. In contrast with simple averaging which only
considers the local information (the in-degrees and out-degrees of the
students), we use structural information of the exam result graph for
analysis. Our grading rule is an aggregation of a prediction matrix
$h\in [0,1]^{S\times Q}$, where $h_{ij}$ represents the algorithm's
prediction on the probability that student $i$ answers correctly
question $j$. The grade for student $i$ will be the average of
$h_{ij}s$ over all $j\in Q$,
i.e. $\mathrm{alg}_i=\frac{1}{|Q|}\sum_{j\in Q}h_{ij}$. We use
$u\rightsquigarrow v$ to represent the existence of a directed path in
$G'$ that starts with $u$ and ends with $v$, and
$u\centernot\rightsquigarrow v$ for nonexistence. The algorithm
classifies the elements $h_{ij}$s into four cases: existing edge
$(i,j)\in E$, same component $i\rightsquigarrow j\land
j\rightsquigarrow i$, comparable components $i\rightsquigarrow j\oplus
j\rightsquigarrow i$, and incomparable components
$i\centernot\rightsquigarrow j\land j\centernot\rightsquigarrow i$.

\paragraph{Existing Edge}
For $(i,j)\in E$, we observe $w_{ij}$ from the exam result graph $G'$, hence $h_{ij}=w_{ij}$.

\paragraph{Same Component}
For student $i\in S$ and question $j\in Q$ satisfy $i\rightsquigarrow j\land j\rightsquigarrow i$, they are in the same strongly connected component in $G'$. We make all predictions in the component simultaneously, by inferring the student abilities and question difficulties from the structure of the component. Formally, denote $V'$ as the vertex set of the component. From \Cref{thm:equiv}, the strong connectivity guarantees the existence of the maximum likelihood estimators (MLEs) $\boldsymbol{u^*}\in\mathbb R^{V'}$. We can use a minorization–maximization algorithm from~\citet{hunterMMAlgorithmsGeneralized2004} to calculate the MLEs and set $h_{ij}=f(u_i^*-u_j^*)$ for any missing edge $(i,j)$ between students and questions inside this component.

\paragraph{Comparable Components} W.l.o.g., we assume $i\rightsquigarrow j$ and $j\centernot\rightsquigarrow i$, thus every directed path between those two vertices starts with the student and ends with the question, showing strong evidence of a correct answer. In other words, considering the strongly connected components they belong to, the component that contains the student has a ``higher level'' in the condensation graph of $G'$ and can reach the component that contains the question, i.e., they belong to comparable components in the condensation graph. In this case, we set $h_{ij}=1$. Similarly, if $j\rightsquigarrow i$ and $i\centernot\rightsquigarrow j$, we set $h_{ij}=0$

\paragraph{Incomparable Components} For a student $i$ and question $j$ that satisfy $i\centernot\rightsquigarrow j\land j\centernot\rightsquigarrow i$, i.e., they lie in incomparable components, we use the average of the predictions in the above three cases as the prediction for $h_{ij}$.

\begin{algorithm}
\caption{Grade Generation}\label{alg:grade}
\begin{algorithmic}
\Require an exam result graph $G'(S\cup Q, E')$
\Ensure a grade vector $g\in[0,1]^{S}$ for students
\State From the exam result graph $G'$, we can get the task assignment graph $G(S\cap Q, E)$.
\ForAll {$(i,j)\in E$}
\Comment{Case 1: Existing Edge}
    \State $h_{ij}\gets w_{ij}$.
\EndFor
\ForAll {Strongly Connected Component $\tilde G(\tilde S\cup\tilde Q,\tilde E)$}
\Comment{Case 2: Same Component}
    \State $\boldsymbol{u^*}\gets$ the MLEs of the merit parameters of $\tilde S\cup\tilde Q$.
    \ForAll{$(i,j)\in (\tilde S\times\tilde Q)\setminus \tilde E$}
        \State $h_{ij}\gets f(u^*_i-u_j^*)$.
    \EndFor
\EndFor
\ForAll{$(i,j)\in (S\times Q)\setminus E\land i\rightsquigarrow j\land j\centernot\rightsquigarrow i$}
\Comment{Case 3: Comparable Component}
    \State $h_{ij}\gets 1$.
\EndFor
\ForAll{$(i,j)\in (S\times Q)\setminus E \land j\rightsquigarrow i\land i\centernot\rightsquigarrow j$}
    \State $h_{ij}\gets 0$.
\EndFor
\ForAll{$(i,j)\in (S\times Q)\setminus E\land i\centernot\rightsquigarrow j\land j\centernot\rightsquigarrow i$}
\Comment{Case 4: Incomparable Component}
    \State $h_{ij}\gets$ the average of the existing $h_{i\cdot}$ in previous steps.
\EndFor
\ForAll{$i\in S$}
\Comment{Grade Aggregation}
\State $g_i\gets$ the average of $h_{ij}$ for all $j$s.
\EndFor
\end{algorithmic}
\end{algorithm}

\section{Theory}\label{sec:theory}
In this section, we show several properties of our algorithm.
Recall that the Bradley-Terry-Luce model describes the outcome of pairwise comparisons as follows.
In a comparison between subject $i$ and subject $j$, subject $i$ beats subject $j$ with probability
\[p_{ij}=\frac{\exp(u_i)}{\exp(u_i)+\exp(u_j)}=f(u_i-u_j),\]
where $\boldsymbol{u}=(u_1\dots,u_{n+m})$ represents the merit parameters of $n+m$ subjects and $f(x)=\frac{1}{1+\exp(-x)}$.
We consider the Bradley-Terry-Luce model under a family of random bipartite task assignment graphs $\mathcal{B}(n,m,d_{n,m})$.
Specifically, a task assignment graph $G(L\cup R, E)$ with $n$ vertices in $L$ and $m$ vertices in $R$, where $n\leq m$, 
is constructed by linking $d_{n,m}$ different random vertices in $R$ to each left vertex in $L$,
i.e., $L$ is regular but $R$ is not.

Given a task assignment graph $G$, denote $A$ as its adjacency matrix.
For any two subjects $i$ and $j$, the number of comparisons between them follows $A_{ij}\in\{0,1\}$.
We define $A'_{ij}$ as the number of times that subject $i$ beats subject $j$, thus $A'_{ij}+A'_{ji}=A_{ij}=A_{ji}$.
In other words, $A'$ is the adjacency matrix of the exam result graph $G'$.
Based on the observation of $G'$, the log-likelihood function is
\begin{equation}
\mathcal L(\boldsymbol{u})=\sum_{1\leq i\neq j\leq n+m}A'_{ij}\log p_{ij}=\sum_{1\leq i\neq j\leq n+m}A'_{ij}\log f(u_i-u_j).
\label{eq:likelihood}
\end{equation}

Denote $\boldsymbol{u}^*=(u_1^*,u_1^*,\dots,u_{n+m}^*)$ as the maximum likelihood estimators (MLEs) of $\boldsymbol{u}$.
Since $\mathcal{L}$ is additive invariant,  w.l.o.g.\ we assume $u_1=0$ and set $u^*_1=0$.
Since $(\log f(x))'=1-f(x)$ the likelihood equation can be simplified to
\begin{equation}\label{eq:likelihood:foc}
\sum_{j=1}^{n+m} A'_{ij}=\sum_{j=1}^{n+m} A_{ij}f(u_i^*-u_j^*),\forall~i.
\end{equation}

\subsection{Existence and Uniqueness of the MLEs}
\citet{zermeloBerechnungTurnierErgebnisseAls1929c} and \citet{fordSolutionRankingProblem1957} gave a necessary and sufficient condition
for the existence and uniqueness of the MLEs in \eqref{eq:likelihood:foc}.
\paragraph{Condition A.}
For every two nonempty sets that form a partition of the subjects, a subject in one set has beaten a subject in the other set at least once.\\

To provide an intuitive understanding of Condition A, we show its equivalence to the strong connectivity of the exam result graph $G'$. Then we state our theorem on
when Condition A holds.

\begin{theorem}
Condition A holds if and only if the exam result graph $G'$ is strongly connected.
\label{thm:equiv}
\end{theorem}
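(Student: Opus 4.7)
The plan is to establish both directions of the equivalence by short graph-theoretic arguments about $G'$ and its strongly connected components, using the fact that an edge $x \to y$ in $G'$ exists precisely when subject $x$ beat subject $y$ in the observed comparisons.

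For the $(\Leftarrow)$ direction, I would suppose $G'$ is strongly connected and fix any partition of the vertex set into nonempty $A$ and $B$. Choosing $u \in A$ and $v \in B$, strong connectivity furnishes a directed path $u \rightsquigarrow v$ in $G'$; because this path starts in $A$ and ends in $B$, it must traverse at least one edge $x \to y$ with $x \in A$ and $y \in B$, meaning some subject in $A$ has beaten some subject in $B$. Applying the same argument to a $v \rightsquigarrow u$ path produces a crossing edge in the opposite direction, so wins cross the partition both ways and Condition A is satisfied.

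For the $(\Rightarrow)$ direction, I would argue by contrapositive. Assume $G'$ is not strongly connected and pass to its condensation DAG, whose vertices are the strongly connected components of $G'$. This DAG has at least two vertices and, being finite and acyclic, admits at least one sink SCC $C$ with no edges leaving it to other components. Setting $A = C$ and $B = V \setminus C$ gives a nonempty bipartition with no directed edge from $A$ to $B$ in $G'$; equivalently, no subject in $A$ has ever beaten any subject in $B$, violating Condition A.

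Both implications reduce to the standard characterization of strong connectivity via cuts across the condensation DAG, so there is no substantial technical obstacle; the only thing to be careful about is the translation between the combinatorial \emph{beats} language of Condition A and the directed-edge language of $G'$, which is immediate from the definition of the exam result graph. The mildly delicate point worth flagging is that Condition A has to be read in the symmetric sense (wins occurring from each side of the partition to the other), since otherwise the forward implication would only yield a \emph{weakly} connected $G'$; the proof above in fact establishes this stronger symmetric version directly.
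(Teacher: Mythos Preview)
Your proof is correct and follows essentially the same approach as the paper's: both directions rely on the condensation of $G'$, with the paper choosing a source SCC (no incoming edges) for the contrapositive while you choose a sink SCC (no outgoing edges), which is an immaterial symmetric variant. Your $(\Leftarrow)$ direction spells out the path-crossing argument that the paper compresses into ``directly holds by the definition of strong connectivity,'' but the content is the same.
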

\begin{proof}
Condition A says that for any partition $(V_1, V_2)$ of the vertices $L\cup R$,
there exists an edge from $V_1$ to $V_2$ and also an edge from $V_2$ to $V_1$.
If $G'$ is strongly connected, Condition A directly holds by the definition of strong connectivity.
Otherwise, if $G'$ is not strongly connected, the condensation of $G'$ contains at least two SCCs.
We pick one strongly connected component with no indegree as $V_1$ and the remaining vertices as $V_2$,
then there is no edge from $V_2$ to $V_1$, i.e., Condition A fails.
\end{proof}

\begin{theorem}[Existence and Uniqueness of MLEs]
If
\begin{equation}
\frac{\exp(\alpha_{n,m})(n+m)\log(n+m)}{nd_{n,m}}\rightarrow 0\quad (n,m\rightarrow\infty),    
\label{condition:connectivity}
\end{equation}
where $\alpha_{n,m}=\max_{1\leq i,j\leq n+m} u_i-u_j$ is the largest difference between all possible pairs of merits,
then $\Pr\left[\text{Condition A is satisfied}\right]\rightarrow 1\quad (n,m\rightarrow\infty).$
\label{thm:connectivity}
\end{theorem}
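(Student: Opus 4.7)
The statement is, via \Cref{thm:equiv}, equivalent to the claim that the exam-result graph $G'$ is strongly connected with probability tending to $1$, and I would prove it by a union bound over all nonempty proper cuts of $L\cup R$. Write a cut as $V_1 = S_1 \cup Q_1$, $V_2 = S_2 \cup Q_2$ with $n_i = |S_i|$, $m_i = |Q_i|$. Strong connectivity fails iff some such $V_1$ has no outgoing edges to $V_2$ in $G'$. The only two mechanisms that create a $V_1\!\to\!V_2$ edge are (i) a student in $S_1$ correctly answering a question in $Q_2$, and (ii) a student in $S_2$ incorrectly answering a question in $Q_1$. Thus "$V_1$ closed" means every $S_1$-student's picks landing in $Q_2$ are all answered wrong, \emph{and} every $S_2$-student's picks landing in $Q_1$ are all answered right.

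The core estimate is a bound on $\Pr[V_1\text{ closed}]$. Conditional on the assignment graph $G$, each individual answer is Bernoulli and the "bad" direction occurs with probability at most $\gamma := e^{\alpha_{n,m}}/(1+e^{\alpha_{n,m}}) < 1$, since $|u_i-u_j|\leq \alpha_{n,m}$. For a single student $s\in S_1$, the count $|\text{picks}\cap Q_2|$ is hypergeometric with mean $dm_2/m$; because $x\mapsto \gamma^x$ is convex ($\gamma<1$), Hoeffding's sampling-without-replacement comparison yields $\E[\gamma^{|\text{picks}\cap Q_2|}]\leq (1-pm_2/m)^{d_{n,m}}\leq \exp(-pd_{n,m}m_2/m)$ with $p=1-\gamma\geq e^{-\alpha_{n,m}}/2$. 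The same argument applies to $S_2$ on $Q_1$, and multiplying the $n_1+n_2$ independent per-student contributions gives
\[\Pr[V_1 \text{ closed}]\;\leq\;\exp\!\left(-\frac{pd_{n,m}(n_1 m_2 + n_2 m_1)}{m}\right).\]

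I would then split the sum $\sum_{V_1}\Pr[V_1 \text{ closed}]$ by three regimes: (a) only questions in $V_1$ ($n_1=0,\, m_1\geq 1$), (b) only students ($m_1=0,\, n_1\geq 1$), (c) both nonzero. In (a), the bound becomes $\sum_{k=1}^{m}\binom{m}{k}\exp(-pd_{n,m}nk/m)\leq \sum_{k}\exp(k[\log(em/k) - pd_{n,m}n/m])$, which is $o(1)$ once $pd_{n,m}n/m\gg \log m$; plugging in $p\geq e^{-\alpha_{n,m}}/2$ and using $m\leq n+m\leq 2m$, the hypothesis $e^{\alpha_{n,m}}(n+m)\log(n+m)/(nd_{n,m})\to 0$ gives exactly the needed gap. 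Regime (b) is easier: $n_1 m_2 + n_2 m_1\geq n_1 m$ dominates the lighter entropy $n_1\log(en/n_1)$. In (c), I would write $\binom{n}{n_1}\binom{m}{m_1}\leq \exp(n_1\log(en/n_1) + m_1\log(em/m_1))$ and use the bound $n_1 m_2 + n_2 m_1\geq \tfrac12 n m_1$ whenever $m_1\leq m/2$ (resp.\ $\geq \tfrac12 n_1 m$ when $n_1\leq n/2$), together with the $V_1\leftrightarrow V_2$ symmetry, to ensure one side of the cut is small. The hypothesis then dominates both entropy contributions simultaneously.

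The main obstacle is the two-dimensional bookkeeping in regime (c): one must show $pd_{n,m}(n_1 m_2 + n_2 m_1)/m$ dominates $n_1\log(en/n_1)+m_1\log(em/m_1)$ uniformly in $(n_1,m_1)$, including intermediate sizes where neither side is obviously small. This is handled by partitioning on which half of $\{1,\dots,n\}$ and $\{1,\dots,m\}$ each of $n_1,m_1$ lies in and verifying each of the four sub-cases against the hypothesis; with that done, the three regime totals sum to $o(1)$ and $\Pr[G'\text{ strongly connected}]\to 1$, which by \Cref{thm:equiv} is exactly the claim.
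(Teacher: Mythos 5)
Your proposal is correct, but it reaches the union bound by a genuinely different route than the paper. The paper argues in two stages: it first proves a high-probability edge-expansion property of the \emph{undirected} task assignment graph (\Cref{thm:edge-expansion}), guaranteeing every cut of size $r\leq\frac{n+m}{2}$ carries at least $\frac{nd_{n,m}r}{2(n+m)}$ crossing edges, and only then conditions on this event and bounds the probability that all crossing edges of some cut are oriented the same way, each mis-orientation having probability at most $2^{-\exp(-\alpha_{n,m})}$. You instead collapse the two sources of randomness into a single per-cut computation: the moment generating function of the hypergeometric count of picks landing across the cut, controlled via Hoeffding's convex-order comparison with sampling with replacement, directly yields $\Pr[V_1\text{ closed}]\leq\exp\bigl(-p\,d_{n,m}(n_1m_2+n_2m_1)/m\bigr)$ with no conditioning step. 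Your bound is the ``right'' one for each individual cut --- essentially $p$ times the expected number of crossing edges --- and is sharper than the paper's for unbalanced cuts, at the cost of the two-dimensional regime bookkeeping in $(n_1,m_1)$, which you correctly resolve by the four half-by-half sub-cases and the $V_1\leftrightarrow V_2$ symmetry (note the mixed sub-cases are trivial since there $n_1m_2+n_2m_1\geq nm/4$). The paper's decomposition is looser per cut but isolates a clean deterministic expansion statement and keeps the union bound over cuts one-dimensional in $r=|V_1|$; both arguments consume the hypothesis \eqref{condition:connectivity} in the same way, through $\frac{nd_{n,m}}{e^{\alpha_{n,m}}(n+m)}\gg\log(n+m)$, and both conclude via \Cref{thm:equiv}.
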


To prove \Cref{thm:connectivity}, we analyze the edge expansion property (\Cref{thm:edge-expansion}) of the task assignment graph $G$
and take a union bound on all valid subsets to bound the probability
that $G'$ fails Condition A.

\begin{lemma}[Edge Expansion]
Under condition \eqref{condition:connectivity},
\[\Pr\left[\forall S\subset V,~\text{s.t.}~|S|\leq \frac{n+m}{2},
\quad \frac{|\partial S|}{|S|} > \frac{nd_{n,m}}{2(n+m)}\right]\rightarrow 1\quad (n,m\rightarrow\infty),\]
where $\partial S=\{(u,v)\in E:u\in S, v\in V\setminus S\}$ for the task assignment graph $G(V,E)$.
\label{thm:edge-expansion}
\end{lemma}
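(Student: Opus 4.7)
The plan is standard for sparse-graph edge expansion: fix the size $k = |S| \leq (n+m)/2$, lower-bound the expected cut $E[|\partial S|]$ uniformly over subsets, apply multiplicative Chernoff to the sum of independent edge contributions, and union-bound over all $S$ and $k$.

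For the expectation, write $a = |S \cap L|$ and $b = |S \cap R| = k - a$. By \Cref{alg:graph} the neighbor sets of distinct left vertices are independent, so
$$E[|\partial S|] = \frac{d_{n,m}}{m}\bigl(a(m-b) + b(n-a)\bigr).$$
The key deterministic step is proving $a(m-b) + b(n-a) \geq knm/(n+m)$ for every $0 \leq a \leq n$, $0 \leq b \leq m$ with $a + b = k \leq (n+m)/2$. Substituting $b = k - a$ turns this into the convex quadratic $q(a) = 2a^2 + a(m - n - 2k) + kn$, minimized either at the feasible boundary $a = 0$ (where $q(0) = kn \geq knm/(n+m)$) or at the critical point $a^\ast = (n + 2k - m)/4$, giving $q(a^\ast) = kn - (n + 2k - m)^2/8$. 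Reparameterizing $k = (m-n)/2 + t$ with $t \in [0, n]$, the required inequality becomes $(m - n + 2t)n^2 \geq t^2(n + m)$, which holds with equality at both endpoints $t = 0$ and $t = n$ and in between by a short analysis of this quadratic in $t$. Consequently $E[|\partial S|] \geq knd_{n,m}/(n+m)$.

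For the probabilistic step, $|\partial S| = \sum_{v \in L} Z_v$ is a sum of independent hypergeometric variables bounded in $[0, d_{n,m}]$, so the multiplicative Chernoff bound of Hoeffding applies:
$$\Pr\!\left[|\partial S| \leq \tfrac{1}{2}E[|\partial S|]\right] \leq \exp\!\left(-\tfrac{1}{8}\, E[|\partial S|]\right) \leq \exp\!\left(-\frac{knd_{n,m}}{8(n+m)}\right).$$
A union bound over the $\binom{n+m}{k} \leq ((n+m)e/k)^k$ subsets of size $k$, then over $1 \leq k \leq (n+m)/2$, bounds the total failure probability by
$$\sum_{k=1}^{\lfloor(n+m)/2\rfloor} \exp\!\left(k\log\frac{(n+m)e}{k} - \frac{knd_{n,m}}{8(n+m)}\right).$$
Condition \eqref{condition:connectivity} implies $nd_{n,m}/[(n+m)\log(n+m)] \to \infty$ since $\exp(\alpha_{n,m}) \geq 1$, so the (binding) $k = 1$ term has exponent $-\omega(\log(n+m))$, and the geometric-like sum vanishes as $n, m \to \infty$.

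The main obstacle I anticipate is the deterministic lower bound $E[|\partial S|] \geq knd_{n,m}/(n+m)$: the bipartite asymmetry, and the fact that $L$ is $d_{n,m}$-regular while $R$ is not, obstructs the clean $|S|(n + m - |S|)$-style bound used in the Erd\"os--R\'enyi analysis of \citet{hanAsymptoticTheorySparse2020}; the cut must be estimated through the cross-term $a(m-b) + b(n-a)$, whose uniform minimization over $a + b = k$ requires the convex quadratic analysis above. Once that bound is in hand, the Chernoff step and the union bound are routine.
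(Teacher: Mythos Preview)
Your proposal is correct and follows essentially the same route as the paper: the paper also reduces to the quadratic $q(a)=2a^2+(m-n-2k)a+kn$, splits according to whether the critical point is feasible, and obtains the identical lower bound $E[|\partial S|]\geq knd_{n,m}/(n+m)$ before applying the same $\exp(-\mu/8)$ Chernoff tail and union bound. One small slip: at $t=0$ your inequality $(m-n+2t)n^2\geq t^2(n+m)$ reads $(m-n)n^2\geq 0$, which is not an equality when $m>n$; the argument still goes through since the left-hand side minus the right is a concave quadratic in $t$ that is nonnegative at both endpoints of $[0,n]$.
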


\begin{proof}
Consider any subset of vertices $S$ with size $r\leq \frac{n+m}{2}$.
Denote $X=S\cap L, Y=S\cap R, |X|=x$, thus $|Y|=r-x, |L\setminus X|=n-x, |R\setminus Y|=m+x-r$.
$\partial S$ is a random variable that can be expressed as
\[|\partial S|=\sum_{u\in X}\sum_{v\in R\setminus Y}A_{uv}+\sum_{u\in L\setminus X}\sum_{v\in Y}A_{uv},\]
where $A$ is the adjacency matrix of the task assignment graph $G$.
Recall that the task assignment graph $G$ is generated by
linking $d_{n,m}$ random different vertices in $R$ to each vertex in $L$.
Thus for different $u_1\neq u_2\in L$, $A_{u_1\cdot}$ is independent with $A_{u_2\cdot}$,
while for a fixed $u\in L$, $A_{u\cdot}$ is chosen randomly without replacement.
Chernoff bound applies under such conditions, i.e.,
\[\Pr\left[|\partial S|\leq \frac{1}{2}\E\left[|\partial S|\right]\right]\leq \exp\left(-\frac{\E[|\partial S|]}{8}\right).\]
Then we lower bound $\E[|\partial S|]$ by
\begin{equation*}
\begin{aligned}
\E[|\partial S|]
&=\frac{d_{n,m}}{m}(|X||R\setminus Y|+|L\setminus X||Y|)\\
&=\frac{d_{n,m}}{m}\left(2x^2+(m-n-2r)x+nr\right).
\end{aligned}
\end{equation*}
For the case where $m-n-2r\leq 0$, i.e., $r\geq \frac{m-n}{2}$, we have 
\begin{equation*}
\begin{aligned}
\E[|\partial S|]
&=\frac{d_{n,m}}{m}\left(2x^2+(m-n-2r)x+nr\right)\\
&\geq \frac{d_{n,m}}{m}\left(-\frac{(m-n-2r)^2}{8}+nr\right)
    =\frac{d_{n,m}r}{m}\left(-\frac{1}{2}r-\frac{1}{8}\frac{(m-n)^2}{r}+\frac{1}{2}(n+m)\right)\\
&\geq \frac{d_{n,m}r}{m}\left(-\frac{n+m}{4}-\frac{1}{4}\frac{(m-n)^2}{n+m}+\frac{1}{2}(n+m)\right)
    =\frac{nd_{n,m}r}{n+m}
\end{aligned}
\end{equation*}
For the case where $m-n-2r > 0$, i.e., $r < \frac{m-n}{2}$, we have
\begin{equation*}
\begin{aligned}
\E[|\partial S|]=\frac{d_{n,m}}{m}\left(2x^2+(m-n-2r)x+nr\right)
\geq \frac{nd_{n,m}r}{m}\geq \frac{nd_{n,m}r}{n+m}.
\end{aligned}
\end{equation*}
Thus for any fixed set $S$ with size $r\leq \frac{n+m}{2}$,
\[\Pr\left[|\partial S|\leq \frac{d_{n,m}nr}{2(n+m)}\right]
\leq \Pr\left[|\partial S|\leq \frac{1}{2}\E\left[|\partial S|\right]\right]
\leq \exp\left(-\frac{\E[|\partial S|]}{8}\right)\leq \exp\left(-\frac{nd_{n,m}r}{8(n+m)}\right).\]

Finally, by union bound,
\begin{equation*}
\begin{aligned}
    &\Pr\left[\forall S\subset V,~\text{s.t.}~|S|\leq n,
        \quad \frac{|\partial S|}{|S|} > \frac{nd_{n,m}}{2(n+m)}\right]\\
    =&1-\Pr\left[\exists S\subset V,~\text{s.t.}~|S|\leq n,
        \quad \frac{|\partial S|}{|S|} \geq  \frac{nd_{n,m}}{2(n+m)}\right]\\
    \geq& 1-\sum_{r=1}^{(n+m)/2}\binom{n+m}{r}\exp\left(-\frac{nd_{n,m}r}{8(n+m)}\right)\\
    \geq& 1-\sum_{r=1}^{(n+m)/2}\exp\left(-\frac{nd_{n,m}r}{8(n+m)}+r\log(n+m)\right)\\
    \geq& 1-\sum_{r=1}^{(n+m)/2}\exp\left(-\frac{nd_{n,m}r}{16(n+m)}\right)\\
    \geq& 1-\exp\left(-\frac{nd_{n,m}}{16(n+m)}+\log (n+m)\right)\\
    \geq& 1-\exp\left(-\frac{nd_{n,m}}{32(n+m)}\right)\\
\end{aligned}
\end{equation*}
The third-to-last inequality and the last inequality hold when $d_{n,m}>\frac{32(n+m)\log (n+m)}{n}$.
Note that condition $\eqref{condition:connectivity}$ implies
$\frac{(n+m)\log (n+m)}{nd_{n,m}}\rightarrow 0\quad (n,m\rightarrow\infty)$ since $\alpha_{n,m}\geq 0$.
Thus for large enough $n$ and $m$,
\begin{equation*}
    \Pr\left[\forall S\subset V,~\text{s.t.}~|S|\leq n,
        \quad \frac{|\partial S|}{|S|} > \frac{nd_{n,m}}{2(n+m)}\right]
    \geq 1-\exp\left(-\frac{nd_{n,m}}{32(n+m)}\right)\rightarrow 1\quad (n,m\rightarrow\infty).
\end{equation*}
\end{proof}

\begin{proof}[Proof of Theorem \ref{thm:connectivity}]
For an edge between vertex $i$ and $j$ in the task assignment graph $G$, i.e. $A_{ij}=1$,
the corresponding directed edge in the exam result graph $G'$ goes from $i$ to $j$ with probability 
\[\Pr[A'_{ij}=1]=f(u_i-u_j)\leq \max_{1\leq i,j\leq n+m} f(u_i-u_j)
\leq \frac{1}{1+\exp(-\alpha_{n,m})}\leq 2^{-\exp(-\alpha_{n,m})}.\]
By Lemma \ref{thm:edge-expansion}, under condition \eqref{condition:connectivity},
\[\Pr\left[\forall S\subset V,~\text{s.t.}~|S|\leq n,
\quad \frac{|\partial S|}{|S|} > \frac{nd_{n,m}}{2(n+m)}\right]
\rightarrow 1\quad (n,m\rightarrow\infty).\]
Now consider any subset of vertices $S\subset V~\text{s.t.}~|S|=r\leq \frac{n+m}{2}$.
The probability that all edges between $S$ and $V\setminus S$ go in the same direction in $G'$ is
no more than $2\left(2^{-\exp(-\alpha_{n,m})}\right)^{\frac{nd_{n,m}}{2(n+m)}}$. Thus by union bound, the probability that Condition A holds is at least
\begin{equation*}
\begin{aligned}
    &1-2\sum_{1\leq r\leq (n+m)/2}\binom{n+m}{r}\left(2^{-\exp(-\alpha_{n,m})\frac{nd_{n,m}}{2(n+m)}}\right)\\
    \geq &  1-2\left(\sum_{0\leq r\leq n+m}\binom{n+m}{r}\left(2^{-\exp(-\alpha_{n,m})\frac{nd_{n,m}}{2(n+m)}}\right)-1\right)\\
    \geq & 1-2\left(\left(1+\left(2^{-\exp(-\alpha_{n,m})\frac{nd_{n,m}}{2(n+m)}}\right)\right)^{n+m}-1\right),\\
\end{aligned}
\end{equation*}
which converges to 1 when $n,m\rightarrow \infty$ under condition $\eqref{condition:connectivity}$.
\end{proof}

\subsection{Uniform Consistency of the MLEs}
Based on condition \eqref{condition:connectivity},
\Cref{thm:connectivity} shows the existence and uniqueness of the MLEs.
In this part, we give an outline of the proof
for the uniform consistency of the MLEs (\Cref{thm:consistency}).

\begin{theorem}[Uniform Consistency of MLEs]
If
\begin{equation}
    \exp\left(2(\alpha_{n,m}+1)\right)
        \Delta_{n,m}\rightarrow 0\quad(n,m\rightarrow\infty),
    \label{condition:consistency}
\end{equation}
where $\Delta_{n,m}= \sqrt{\frac{m\log^3 (n+m)}
    {nd_{n,m}\log^2 (\frac{n}{m}d_{n,m})}}$,
then the MLEs are uniformly consistent, i.e.,
$\|\boldsymbol{u}^*-\boldsymbol{u}\|_\infty\stackrel{\mathbb{P}}{\longrightarrow} 0$.
\label{thm:consistency}
\end{theorem}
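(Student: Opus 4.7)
My plan is to adapt the proof strategy of \citet{hanAsymptoticTheorySparse2020} for sparse Erd\H{o}s--R\'enyi BTL to the bipartite, left-regular regime here. The starting point is to linearize the likelihood equation \eqref{eq:likelihood:foc} around the true parameter. Writing $\boldsymbol{\delta}=\boldsymbol{u}^*-\boldsymbol{u}$ and subtracting the population version $\sum_j A_{ij}f(u_i-u_j)$ from both sides, the mean value theorem converts the equation into a weighted Laplacian system $L_W\boldsymbol{\delta}=R$, where $L_W$ has off-diagonal entries $-A_{ij}f'(\xi_{ij})$ for some $\xi_{ij}$ between $u_i-u_j$ and $u_i^*-u_j^*$, and $R_i=\sum_j A_{ij}(w_{ij}-f(u_i-u_j))$ is a mean-zero noise vector. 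As long as $\|\boldsymbol{\delta}\|_\infty\le 1$, every weight satisfies $f'(\xi_{ij})\asymp \exp(-2(\alpha_{n,m}+1))$ up to constants, so $L_W$ is sandwiched between two scalar multiples of the unweighted Laplacian $L_G$ of the task assignment graph.

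Next I would bound the noise and the Laplacian spectrum separately. A Bernstein bound on each coordinate $R_i$, together with a Chernoff/union bound controlling the maximum right-vertex degree at $\max(nd_{n,m}/m,\log(n+m))$ with high probability, yields an $\ell_\infty$ bound on $R$. On the orthogonal complement of the all-ones vector, the spectral gap of $L_G$ is lower bounded by combining the edge-expansion estimate of \Cref{thm:edge-expansion} with a matrix-concentration argument tailored to the asymmetric left-regular/right-irregular bipartite adjacency. Multiplying these pieces through $\boldsymbol{\delta}=L_W^+R$ produces an $\ell_2$ bound on $\boldsymbol{\delta}$.

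The main obstacle is upgrading the $\ell_2$ bound to $\ell_\infty$, since the pseudo-inverse of a sparse bipartite Laplacian is not uniformly bounded in entrywise $\ell_\infty$ norm. I would apply a leave-one-out decoupling: for each vertex $k\in L\cup R$, introduce an auxiliary MLE $\boldsymbol{u}^{(k)}$ in which the observed $w_{kj}$ on edges incident to $k$ are replaced by their means $f(u_k-u_j)$. Because $\boldsymbol{u}^{(k)}$ is independent of the noise on the $k$-star, I can get a sharp coordinate-wise bound on $u_k^{(k)}-u_k$ directly, while a perturbation argument on the Laplacian system controls $\|\boldsymbol{u}^*-\boldsymbol{u}^{(k)}\|_\infty$ by the noise on the $k$-star scaled by the spectral gap. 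A union bound over all $n+m$ vertices then yields $\|\boldsymbol{\delta}\|_\infty\lesssim \exp(2(\alpha_{n,m}+1))\Delta_{n,m}$, which tends to zero under condition \eqref{condition:consistency}. In $\Delta_{n,m}$, the $\log^3(n+m)$ factor absorbs the Bernstein tail and the vertex union bound, while the $\log^2(\tfrac{n}{m}d_{n,m})$ denominator arises from sharper spectral concentration at the right-vertex degree scale $\tfrac{n}{m}d_{n,m}$, analogous to the $\log^2(np)$ term in the Erd\H{o}s--R\'enyi analysis.

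A final bootstrap step closes the loop, since the weight comparability $f'(\xi_{ij})\asymp \exp(-2(\alpha_{n,m}+1))$ presupposed $\|\boldsymbol{\delta}\|_\infty\le 1$. Under condition \eqref{condition:consistency}, \Cref{thm:connectivity} supplies existence and uniqueness of the MLE, after which a coarse $\ell_\infty$ bound first establishes $\|\boldsymbol{\delta}\|_\infty=o(1)$ with high probability; reinserting this into the linearization then delivers the claimed sharp $\Delta_{n,m}$ rate.
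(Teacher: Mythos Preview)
Your proposal takes a genuinely different route from the paper, and it also mischaracterizes what \citet{hanAsymptoticTheorySparse2020} actually do. Both the paper and Han et al.\ use a combinatorial \emph{growing-sets (chaining) argument} in the style of Simons--Yao, not a spectral/leave-one-out analysis. Concretely, the paper sets $\varepsilon_i=u_i^*-u_i$, lets $\loweps=\min_i\varepsilon_i$, $\higheps=\max_i\varepsilon_i$, and tracks the sets $\lowset_k=\{j:\varepsilon_j-\loweps\le D_k\}$ and $\highset_k=\{j:\higheps-\varepsilon_j\le D_k\}$ over $K_{n,m}\approx \tfrac{2\log n}{\log(\frac{n}{m}d_{n,m})}$ rounds. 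Growth is driven by a vertex-expansion lemma for $\mathcal{B}(n,m,d_{n,m})$ (\Cref{thm:vertex-expansion}) combined with a pointwise analysis of the likelihood equation \eqref{eq:likelihood:foc} (\Cref{thm:local-property}) that shows a constant fraction $q_{n,m}$ of each vertex's neighbors land in the next set. After $K_{n,m}$ steps both sets exceed half the vertices, forcing $\higheps-\loweps\le 2D_{K_{n,m}}$. The factor $\log^{-2}(\tfrac{n}{m}d_{n,m})$ in $\Delta_{n,m}$ is precisely the $K_{n,m}^{2}$ coming from the number of chaining rounds, and the $\exp(2(\alpha_{n,m}+1))$ prefactor is $c_{n,m}^{-2}$ from the lower bound on $f'$.

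Your spectral-plus-leave-one-out outline is a plausible alternative strategy for proving consistency, but the accounting you give for the specific shape of $\Delta_{n,m}$ does not hold up. The $\log^{2}(\tfrac{n}{m}d_{n,m})$ in the denominator is not a spectral-concentration artifact; there is no step in a Laplacian-pseudoinverse or leave-one-out argument that naturally produces a $\bigl(\tfrac{\log n}{\log(\text{degree})}\bigr)^{2}$ factor. A clean leave-one-out analysis in this bipartite setting would more likely yield a rate of the form $\exp(O(\alpha_{n,m}))\sqrt{\tfrac{m\log(n+m)}{nd_{n,m}}}$ (possibly with one extra $\log$), which is \emph{stronger} than $\Delta_{n,m}$ in regimes where $\tfrac{n}{m}d_{n,m}$ is only polylogarithmic, and would still imply the theorem. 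So your sketch could in principle succeed, but you should not claim it reproduces the paper's $\Delta_{n,m}$; you would need to either carry the spectral route to an explicit rate and check it dominates $\Delta_{n,m}$, or switch to the chaining argument the paper actually uses.
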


Denote $\varepsilon_i=u^*_i-u_i$ as the estimation error of the maximum likelihood estimators.
Since we assume $u_1=0$ and set $u_1^*=0$, we have $\varepsilon_1=u^*_1-u_1=0$.
Consider the two subjects with the most negative estimation error and the most positive estimation error $\lowelement=\mathop{\arg\min}\limits_i \varepsilon_i\leq \varepsilon_1=0$, $\highelement = \mathop{\arg\max}\limits_i \varepsilon_i\geq \varepsilon_1=0$,
and their corresponding error $\loweps=\min\limits_i \varepsilon_i$, $\higheps = \max\limits_i \varepsilon_i,$
then we have $\|\boldsymbol{u}^*-\boldsymbol{u}\|_\infty=\max\{-\loweps,\higheps\}\leq \higheps-\loweps$. The goal is to identify a specific number $D$, such that
more than half $\varepsilon_i$s are at most $\loweps+D$,
and more than half $\varepsilon_i$s are at least $\higheps-D$.
Then at least one subject is on both sides,
thus $\higheps-\loweps$ is bounded by $2D$.

To identify $D$, we check a sequence of increasing numbers $\{D_k\}_{k=0}^{K_{n,m}}$,
and the two corresponding growing sets
$\{\lowset_k\}_{k=0}^{K_{n,m}}$ and $\{\highset_k\}_{k=0}^{K_{n,m}}$
that contains the subjects with estimation errors $D_k$-close to $\loweps$ and $\higheps$ respectively.
Under careful choice of $K_{n,m}$ and $\{D_k\}_{k=0}^{K_{n,m}}$,
we will show that $\lowset_{K_{n,m}}$ and $\highset_{K_{n,m}}$ both contain more than half subjects.

The main difficulty is showing the growth of
$\{\lowset_k\}_{k=0}^{K_{n,m}}$ and $\{\highset_k\}_{k=0}^{K_{n,m}}$.
We prove this by considering the local growth of the sets, i.e.,
$N(\lowset_k)\cap \lowset_{k+1}$ and $N(\highset_k)\cap \highset_{k+1}$.
By symmetry, we only consider $\lowset_k$.
\Cref{thm:vertex-expansion} analyzes the generation of the random task assignment graphs
and shows a vertex expansion property that describes the growth of the neighborhoods $N(\lowset_k)$.
\Cref{thm:local-property} starts with any vertex $i$ in $\lowset_k$,
analyzes the first order equations of the MLE to exclude the vertices that
are in the neighborhoods $N(\{i\})$ and but are not in $\lowset_{k+1}$,
and gives a lower bound on the size of $N(\{i\})\cap \highset_{k+1}$.
Finally, we jointly consider all vertices in $\lowset_k$
and provide a lower bound on the size of $N(\lowset_k)\cap \lowset_{k+1}$, which shows the growth rate of
$\lowset_k$ and finishes the proof.

\paragraph{Definition of Notations}
\begin{itemize}
    \item $K_{n,m} = 2\left\lceil\frac{\log n}{\log (\frac{n}{m}d_{n,m})}-1\right\rceil$ is the number of steps of the growth.
    \item $c_{n,m} = \frac{\exp(-(\alpha_{n,m}+1))}{4}$ is a lower bound on $f'(x)$ for $|x|\leq \alpha_{n,m}+1$.
    \item $q_{n,m}=\frac{c_{n,m}\log (\frac{n}{m}d_{n,m})}{5\log n}$ is a lower bound on the local growth rate $\frac{|N(\{i\})\cap \lowset_{k+1}|}{|N(\{i\})|}$ of vertex $i\in\lowset_{k}$.
    \item $z_{n,m}=\sqrt{\frac{32m\log (n+m)}{nd_{n,m}}}$ is the deviation used in the Chernoff bound.
    \item 
The sequence of numbers $\{D_k\}_{k=0}^{K_{n,m}}$ is set to be
\begin{align*}
&D_k=\frac{4k}{c_{n,m}}\sqrt{\frac{2m\log (n+m)}{(1-z_{n,m})nd_{n,m}}}
    \quad \text{for}~k=0,1,\dots,K_{n,m}-1,\\
&D_{K_{n,m}}=\frac{80K_{n,m}}{c_{n,m}^2}\sqrt{\frac{2m\log (n+m)}{(1-z_{n,m})nd_{n,m}}}.
\end{align*}
    \item The two growing sets $\{\lowset_k\}_{k=0}^{K_{n,m}}$ and $\{\highset_k\}_{k=0}^{K_{n,m}}$
which contains the subjects with estimation error $D_k$-close to
$\loweps$ and $\higheps$ respectively are defined as
\begin{align*}
&\lowset_k=\{j:\varepsilon_j-\loweps\leq D_k\},\\
&\highset_k=\{j:\higheps-\varepsilon_j\leq D_k\}.
\end{align*}

\end{itemize}

\begin{lemma}[Vertex Expansion]
Regarding the task assignment graph $G(L\cup R,E)\sim\mathcal B(n,m,d_{n,m})$,
for a fixed subset of left vertices $X\subset L$ with $|X|\leq \frac{n}{2}$,
w.p.\ $1-(n+m)^{-4|X|}$ it holds that
\begin{itemize}
    \item If $1\leq |X|<m/d_{n,m}$,
        $\frac{|N(X)|}{|X|}>(1-z_{n,m})\left(1-\frac{d_{n,m}|X|}{m}\right)d_{n,m};$
    \item  If $|X|\geq m/d_{n,m}$,
        $\frac{|N(X)|}{m}>1-z_{n,m}-e^{-1}.$
\end{itemize}
For a fixed subset of right vertices $Y\subset R$ with $|Y|\leq \frac{m}{2}$,
w.p.\ $1-(n+m)^{-4|Y|}$ it holds that
\begin{itemize}
    \item If $1\leq |Y|<m/d_{n,m}$,
    $\frac{|N(Y)|}{|Y|}>(1-z_{n,m})\left(1-\frac{d_{n,m}|Y|}{m}\right)\frac{nd_{n,m}}{m};$
    \item  If $|Y|\geq m/d_{n,m}$,
    $\frac{|N(Y)|}{n}>1-z_{n,m}-e^{-1}.$
\end{itemize}
In above inequalities, $z_{n,m}=\sqrt{\frac{32m\log (n+m)}{nd_{n,m}}}$ as previously defined.
\label{thm:vertex-expansion}
\end{lemma}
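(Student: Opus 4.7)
The strategy is to compute $\E[|N(X)|]$ exactly using the independence of the students' neighborhood selections, derive deterministic lower bounds on that expectation in the two regimes of the lemma, and then concentrate $|N(X)|$ around its mean; the right-side statement is handled in parallel. Fix $X\subset L$ with $|X|=x$. By construction the neighborhoods $J_u$ for $u\in X$ are mutually independent and each is uniform over size-$d_{n,m}$ subsets of $R$, so for any $v\in R$, $\Pr[v\notin N(X)] = (1-d_{n,m}/m)^x$ and
\[
    \E[|N(X)|] \;=\; m\bigl(1-(1-d_{n,m}/m)^x\bigr).
\]
For the small-$|X|$ case $x<m/d_{n,m}$ I would apply the alternating-series estimate $(1-p)^x \le 1 - xp + (xp)^2/2$ to conclude $\E[|N(X)|] \ge xd_{n,m}(1 - xd_{n,m}/m)$. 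For the large-$|X|$ case $x\ge m/d_{n,m}$ I would instead use $(1-p)^x \le e^{-xp}\le e^{-1}$ to get $\E[|N(X)|] \ge m(1-e^{-1})$.

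For the concentration step, $|N(X)|$ is a deterministic function of the independent random sets $\{J_u\}_{u\in X}$, and changing any single $J_u$ can alter $|N(X)|$ by at most $d_{n,m}$. Either McDiarmid's bounded-differences inequality directly, or---arguably cleaner for matching constants---Chernoff for negatively associated sums (the indicators $\{\mathbf{1}[v\notin N(X)]\}_{v\in R}$ are NA because each family $\{\mathbf{1}[v\notin J_u]\}_{v\in R}$ comes from the NA permutation distribution associated to a uniform without-replacement sample and independent products across $u\in X$ preserve NA coordinatewise) applies to $|R\setminus N(X)|$. In the small-$|X|$ case I would use the multiplicative form to conclude $|N(X)| > (1-z_{n,m})\,\E[|N(X)|]$, and in the large-$|X|$ case the additive (Hoeffding-type) form to conclude $|N(X)| > \E[|N(X)|] - m\,z_{n,m}$, which combined with $\E[|N(X)|]\ge m(1-e^{-1})$ yields $|N(X)|/m > 1 - z_{n,m} - e^{-1}$. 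The scaling $z_{n,m} = \sqrt{32m\log(n+m)/(nd_{n,m})}$ is calibrated so that each tail bound is at most $(n+m)^{-4x}$.

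For the right-side statement, fix $Y\subset R$ with $|Y|=y$. Here $|N(Y)| = \sum_{u\in L}\mathbf{1}[J_u\cap Y\neq\emptyset]$ is already a sum of \emph{independent} Bernoullis, because distinct students sample their neighborhoods independently; each succeeds with probability $1 - \binom{m-y}{d_{n,m}}/\binom{m}{d_{n,m}}$. The factor-by-factor bound
\[
    \binom{m-y}{d_{n,m}}\Big/\binom{m}{d_{n,m}} \;=\; \prod_{i=0}^{d_{n,m}-1}\!\bigl(1-y/(m-i)\bigr) \;\le\; (1-y/m)^{d_{n,m}},
\]
together with the same two elementary inequalities as above, yields $\E[|N(Y)|]\ge ynd_{n,m}/m \cdot (1-yd_{n,m}/m)$ for $y<m/d_{n,m}$ and $\E[|N(Y)|]\ge n(1-e^{-1})$ for $y\ge m/d_{n,m}$, after which standard Chernoff on independent Bernoullis closes the argument exactly as in the left-side case.

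\emph{Main obstacle.} The delicate piece is the left-side concentration: since $\{\mathbf{1}[v\in N(X)]\}_v$ are not independent across $v$, one must invoke negative association (or bounded differences with the larger Lipschitz constant $d_{n,m}$) rather than independent-Bernoulli Chernoff. Beyond this, the bookkeeping is dominated by balancing the multiplicative tail used in Case 1 (where $\E[|N(X)|]$ can be as small as $\Theta(xd_{n,m})$) against the additive tail used in Case 2 (where $\E[|N(X)|] = \Theta(m)$), so that both simultaneously give the unified tail $(n+m)^{-4x}$ at the single deviation scale $z_{n,m}$.
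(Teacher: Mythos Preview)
Your plan is sound and parallels the paper's structure: compute $\E[|N(X)|]$, lower-bound it in the two regimes, then concentrate. The chief difference is the concentration tool for $X\subset L$. The paper does not stay in the without-replacement model; it passes by stochastic domination to an auxiliary with-replacement balls-in-bins model (throw $d_{n,m}|X|$ independent balls into $m$ bins) and then applies Azuma with unit Lipschitz constant over those $d_{n,m}|X|$ steps, obtaining the tail $\exp\bigl(-z_{n,m}^2\mu^2/(2d_{n,m}|X|)\bigr)$. Your negative-association route is a valid alternative that sidesteps the coupling entirely: the occupancy indicators are NA exactly as you argue, and in the small-$|X|$ regime the NA-Chernoff tail $\exp(-z_{n,m}^2\mu/2)$ together with $\mu\ge d_{n,m}|X|/2$ already delivers $(n+m)^{-4|X|}$. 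For $Y\subset R$ both approaches use plain Chernoff on independent Bernoullis; your factor-by-factor bound on the non-hit probability is in fact cleaner than what the paper writes there.

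Two points to tighten. First, drop the McDiarmid alternative: with Lipschitz constant $d_{n,m}$ over only $|X|$ block variables the exponent loses a factor of $d_{n,m}$ relative to the paper's Azuma-on-balls and does not recover $(n+m)^{-4|X|}$ at the stated $z_{n,m}$, so the NA route is the one that actually works. Second, in the large-$|X|$ regime there is no need to switch to an additive Hoeffding step: the paper keeps the same multiplicative deviation $(1-z_{n,m})\mu$ and simply observes $(1-z_{n,m})(1-e^{-1})m\ge(1-z_{n,m}-e^{-1})m$, which is simpler and avoids an exponent that fails to grow with $|X|$. Minor aside: justify $(1-p)^x\le 1-xp+(xp)^2/2$ via $(1-p)^x\le e^{-xp}\le 1-xp+(xp)^2/2$ rather than the alternating-series heuristic.
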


\begin{proof}
Before proving the vertex expansion property of
the task assignment graph $\mathcal{B}(n,m,d_{n,m})$,
we first bound the vertex degree by Chernoff bound and union bound,
\begin{equation}\label{thm:degree}
\forall~i\in R,\quad \Pr\left[(1-z_{n,m})\frac{nd_{n,m}}{m}\leq
\left|N(\{i\})\right|\leq (1+z_{n,m})\frac{nd_{n,m}}{m}\right]\geq  1-(n+m)^{-4},
\end{equation}
where $z_{n,m}$ is defined above as $z_{n,m}=\sqrt{\frac{32m\log (n+m)}{nd_{n,m}}}\rightarrow 0$
when $n,m\rightarrow\infty$ under condition \eqref{condition:connectivity}.

We need to define another family of random bipartite graph $\tilde{\mathcal B}$.
Each graph in $\tilde{\mathcal B}(n,m,d_{n,m})$ contains $n$ vertices in the left part,
$m$ vertices in the right part,
and assigns $d_{n,m}$ random neighbors to each vertex in the left part (multi-edges are allowed).
For any $X\subset L$, it's easy to see that $|N(X)|$ in $G\sim \mathcal B(n,m,d_{n,m})$
stochastically dominates $|N(X)|$ in $G\sim\tilde{\mathcal B}(n,m,d_{n,m})$.
Thus it's sufficient to prove the theorem under $\tilde {\mathcal B}(n,m,d_{n,m})$.
On the other hand, counting $|N(X)|$ under $\tilde{\mathcal B}(n,m,d_{n,m})$ is
the same random process as counting the number of non-empty bins
after independently throwing $d_{n,m}|X|$ balls u.a.r.\ into $m$ bins.
By linearity of expectation over every bin, we know
\[\E[|N(X)|]=m\left(1-\left(1-\frac{1}{m}\right)^{d_{n,m}|X|}\right).\]
We need several lower bounds of $\E[|N(X)|]$ here. With the fact of
\[\frac{x}{2}\leq 1-\exp(-x)\leq x,\quad \forall~0\leq x<1,\]
we have
\begin{equation*}
\begin{aligned}
\E[|N(X)|]=m\left(1-\left(1-\frac{1}{m}\right)^{d_{n,m}|X|}\right)
\geq m\left(1-\exp\left(-\frac{d_{n,m}|X|}{m}\right)\right)\geq \frac{d_{n,m}|X|}{2}.
\end{aligned}
\end{equation*}
Therefore, using Azuma's inequality, we can lower bound $|N(X)|$, i.e.,
\[\Pr\left[|N(X)|\leq (1-z_{n,m})\E[|N(X)|]\right]
\leq \exp\left(-\frac{z_{n,m}^2\left(\E[|N(X)|]\right)^2}{2d_{n,m}|X|}\right)
= \exp\left(-\frac{z_{n,m}^2d_{n,m}|X|}{8}\right) \leq (n+m)^{-4|X|}.\]
Also, when $|X|<m/d_{n,m}$, we have
\[
\left(1-\left(1-\frac{1}{m}\right)^{d_{n,m}|X|}\right)
\geq \frac{d_{n,m}|X|}{m}\left(1-\frac{d_{n,m}|X|}{m}\right),
\]
thus with probability $1-(n+m)^{-4|X|}$,
\[|N(X)|\geq (1-z_{n,m})\E[|N(X)|]\geq
(1-z_{n,m})d_{n,m}|X|\left(1-\frac{d_{n,m}|X|}{m}\right);
\]
Similarly when $|X|\geq m/d_{n,m}$ , we have
\[
\left(1-\left(1-\frac{1}{m}\right)^{d_{n,m}|X|}\right)\geq 1-e^{-1},
\]
and
\[|N(X)|\geq (1-z_{n,m})\E[|N(X)|]
\geq (1-z_{n,m})\left(1-e^{-1}\right)m\geq \left(1-z_{n,m}-e^{-1}\right)m.
\]
The proof for $Y\subset R$ is almost the same except that it's
sufficient to use Chernoff bound rather than Azuma's inequality
since the independence among the subjects in $N(Y)$.

\begin{equation*}
\begin{aligned}
\E[|N(Y)|]=n\left(1-\left(1-\frac{|Y|}{m}\right)^{d_{n,m}}\right)
\geq n\left(1-\exp\left(-\frac{d_{n,m}|Y|}{m}\right)\right)\geq \frac{nd_{n,m}|Y|}{2m}.
\end{aligned}
\end{equation*}
Using Chernoff bound, we can lower bound $|N(Y)|$, i.e.,
\[\Pr\left[|N(Y)|\leq (1-z_{n,m})\E[|N(Y)|]\right]
\leq \exp\left(-\frac{z_{n,m}^2\E[|N(Y)|]}{2}\right)
= \exp\left(-\frac{z_{n,m}^2nd_{n,m}|Y|}{4m}\right) \leq (n+m)^{-4|Y|}.\]
Thus when $|Y| < m / d_{n,m}$, with probability $1-(n+m)^{-4|Y|}$,
\[|N(Y)|\geq (1-z_{n,m})\E[|N(Y)|]\geq
(1-z_{n,m})\frac{nd_{n,m}|Y|}{m}\left(1-\frac{d_{n,m}|Y|}{m}\right);
\]
when $|Y|\geq m / d_{n,m}$, with probability $1-(n+m)^{-4|Y|}$,
\[|N(Y)|\geq (1-z_{n,m})\E[|N(Y)|]
\geq (1-z_{n,m})\left(1-e^{-1}\right)n\geq \left(1-z_{n,m}-e^{-1}\right)n.
\]
\end{proof}

\begin{lemma}[Local Growth of $\lowset_k$]
For $n$ and $m$ large enough, $k<K_{n,m}$ and a fixed subject $i\in \lowset_k$,
it holds w.p.\ $1-2(n+m)^{-4}$ that
\begin{itemize}
    \item If $k<K_{n,m}-1$,
        \[|N(\{i\})\cap \lowset_{k+1}|\geq q_{n,m}|N(\{i\})|,\]
        where $q_{n,m}=\frac{c_{n,m}\log (\frac{n}{m}d_{n,m})}{5\log n}$
            and $c_{n,m}=\frac{\exp(-(\alpha_{n,m}+1))}{4}$ as previously defined;
    \item If $k=K_{n,m}-1$,
        \[|N(\{i\})\cap \lowset_{k+1}|\geq \frac{75}{81}|N(\{i\})|.\]
\end{itemize}
\label{thm:local-property}
\end{lemma}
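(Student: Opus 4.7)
My plan is to start from the first-order condition of the MLE restricted to vertex $i$ and split the sum over its neighbors $N(\{i\})$ according to membership in $\lowset_{k+1}$. The right-hand side of the FOC is controlled by Hoeffding concentration, while each term on the left can be signed via a mean-value bound on $f$. Writing $g_j := f(u_i^* - u_j^*) - f(u_i - u_j)$, equation \eqref{eq:likelihood:foc} gives
\[
\sum_{j \in N(\{i\})} g_j \;=\; \sum_{j \in N(\{i\})} \bigl(A'_{ij} - f(u_i - u_j)\bigr) \;=:\; S_i,
\]
where $S_i$ is a sum of independent mean-zero $[-1,1]$-valued random variables. Hoeffding's inequality yields $|S_i| \leq Z := O\bigl(\sqrt{|N(\{i\})| \log(n+m)}\bigr)$ with probability at least $1 - (n+m)^{-4}$; \Cref{thm:vertex-expansion} applied to the singleton $\{i\}$ controls $|N(\{i\})|$ with another $(n+m)^{-4}$ slack, producing the claimed $1-2(n+m)^{-4}$ after a union bound.

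Set $\eta_l := \varepsilon_l - \loweps \geq 0$, so $u_i^* - u_j^* - (u_i - u_j) = \eta_i - \eta_j$. Classify $j \in N(\{i\})$ as \emph{good} if $\eta_j \leq D_{k+1}$ (equivalently $j \in \lowset_{k+1}$) and \emph{bad} otherwise, with counts $M^+$ and $M^- = |N(\{i\})| - M^+$. For bad $j$, $\eta_j - \eta_i > D_{k+1} - D_k$, so $u_i^* - u_j^* < (u_i - u_j) - (D_{k+1} - D_k)$. Since condition \eqref{condition:consistency} forces $D_{K_{n,m}} < 1$ for large $n,m$, in particular $D_{k+1} - D_k \leq 1$, the interval $[u_i - u_j - (D_{k+1} - D_k),\, u_i - u_j]$ lies inside $[-(\alpha_{n,m}+1), \alpha_{n,m}+1]$, where $f' \geq c_{n,m}$; monotonicity of $f$ plus the mean-value theorem then gives $g_j \leq -c_{n,m}(D_{k+1} - D_k)$. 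For good $j$, both $\eta_i, \eta_j \in [0, D_{k+1}]$, whence $|\eta_i - \eta_j| \leq D_{k+1}$ and the global bound $|f'| \leq 1/4$ gives $|g_j| \leq D_{k+1}/4$.

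Summing and using $S_i \geq -Z$ yields $M^- c_{n,m}(D_{k+1}-D_k) \leq M^+ D_{k+1}/4 + Z$, hence with $M := M^+ + M^-$,
\[
M^+ \;\geq\; \frac{M\, c_{n,m}(D_{k+1} - D_k) - Z}{c_{n,m}(D_{k+1} - D_k) + D_{k+1}/4}.
\]
For $k < K_{n,m} - 1$, the identity $D_{k+1} = (k+1)(D_{k+1} - D_k)$ turns the denominator into $(D_{k+1} - D_k)\bigl(c_{n,m} + (k+1)/4\bigr)$; using $k+1 \leq K_{n,m}$ and $K_{n,m} = 2\lceil \log n/\log(\tfrac{n}{m} d_{n,m}) - 1 \rceil$, this ratio is of order $c_{n,m} \log(\tfrac{n}{m} d_{n,m}) / \log n$, matching $q_{n,m}$ up to the prescribed factor of $5$. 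For the terminal $k = K_{n,m} - 1$, the enlarged jump $D_{K_{n,m}} - D_{K_{n,m}-1}$, which scales like $K_{n,m}/c_{n,m}^2$ times $\sqrt{m\log(n+m)/(nd_{n,m})}$, dwarfs $D_{K_{n,m}}/4$ in the denominator by a factor of $c_{n,m}^{-1}$, giving the stronger bound $M^+/M \geq 75/81$.

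The main obstacle is verifying that the Hoeffding slack $Z/(c_{n,m}(D_{k+1} - D_k))$ is genuinely negligible compared to $M$; this uses the vertex-degree lower bound from \Cref{thm:vertex-expansion} together with the specific scale $\sqrt{m\log(n+m)/(nd_{n,m})}$ appearing in $D_{k+1}-D_k$. The secondary subtlety is the bootstrap-flavored requirement that $D_{K_{n,m}} < 1$ before we may apply $f' \geq c_{n,m}$, which is exactly what condition \eqref{condition:consistency} supplies for $n,m$ large enough. Carefully tracking the constants $4$, $5$, and $80$ in the definitions of $D_k$, $q_{n,m}$, and $K_{n,m}$ is what pins down the precise fractions $q_{n,m}$ and $75/81$.
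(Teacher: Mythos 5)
Your overall strategy is the same as the paper's: start from the first-order condition \eqref{eq:likelihood:foc} at vertex $i$, control the centered sum $\sum_j\bigl(A'_{ij}-A_{ij}f(u_i-u_j)\bigr)$ by concentration, and convert the resulting inequality on $\sum_j A_{ij}\,g_j$ into a count of neighbors inside $\lowset_{k+1}$ via the two mean-value bounds $f'\leq 1/4$ and $f'\geq c_{n,m}$. For $k<K_{n,m}-1$ your version goes through: your denominator $c_{n,m}(D_{k+1}-D_k)+D_{k+1}/4$ differs from the paper's $c_{n,m}(D_{k+1}-D_k)+\tfrac14 D_k$ only by replacing $k$ with $k+1$, which still clears $q_{n,m}$.

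The terminal step $k=K_{n,m}-1$, however, has a genuine gap. You bound $|g_j|\leq D_{k+1}/4$ for every good $j$ using only $|\eta_i-\eta_j|\leq D_{k+1}$. Writing $\sigma=\sqrt{2m\log(n+m)/((1-z_{n,m})nd_{n,m})}$, at the last step $D_{K_{n,m}}/4=\frac{20K_{n,m}}{c_{n,m}^2}\sigma$ while $c_{n,m}(D_{K_{n,m}}-D_{K_{n,m}-1})\approx\frac{80K_{n,m}}{c_{n,m}}\sigma$, so the denominator is dominated by $D_{K_{n,m}}/4$ and not by the jump term; your claim that the jump ``dwarfs $D_{K_{n,m}}/4$'' is backwards. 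Your ratio then evaluates to roughly $\frac{80K_{n,m}/c_{n,m}}{80K_{n,m}/c_{n,m}+20K_{n,m}/c_{n,m}^2}=\frac{4c_{n,m}}{4c_{n,m}+1}\leq\frac{e^{-1}}{e^{-1}+1}<\frac13$, far short of $75/81$. The repair is the paper's one-sided treatment of the good neighbors: terms with $\varepsilon_j>\varepsilon_i$ satisfy $f(u_i-u_j)-f(u_i^*-u_j^*)\geq 0$ and can simply be dropped, while for $\varepsilon_j\leq\varepsilon_i$ one has $\varepsilon_i-\varepsilon_j\leq\varepsilon_i-\loweps\leq D_k$ (using $\eta_j\geq 0$, i.e.\ the inductive control on $i$ rather than the membership of $j$ in $\lowset_{k+1}$). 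The error budget in the denominator is then $D_{K_{n,m}-1}/4=\frac{K_{n,m}-1}{c_{n,m}}\sigma$, which \emph{is} smaller than $c_{n,m}(D_{K_{n,m}}-D_{K_{n,m}-1})$ by roughly a factor of $80$; this is exactly where $75/81$ comes from. Aside from this, your handling of the mean-value point via monotonicity of $f$ on the subinterval of length $D_{k+1}-D_k$ is a slightly cleaner way to justify $f'(\xi'_{ij})\geq c_{n,m}$ than the paper's, and the probability bookkeeping is equivalent.
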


\begin{proof}
Pick a subject $i\in \lowset_k$.
% From \eqref{thm:degree} we know that with probability $1-(n+m)^{-4}$,
% \[(1-z_{n,m})d_{n,m}\leq \left|N(\{i\})\right|\leq (1+z_{n,m})d_{n,m},\]
% where $z_{n,m}=\sqrt{\frac{32m\log (n+m)}{nd_{n,m}}}$.
For any task assignment graph $G$ and its adjacency matrix $A$,
the corresponding adjacency matrix $A'$ of the exam result graph is a random variable of $A$.
Specifically, for any $A_{ij}=1$, $A_{ij}'$s are independent Bernoulli random variables
with probability $f(u_i-u_j)$ to be 1.
In other words, $\E[A_{ij}']=A_{ij}f(u_i-u_j)$.
By Chernoff bound,
\begin{equation*}
\begin{aligned}
&\Pr\left[\left|\sum_j A_{ij}'-\sum_jA_{ij}f(u_i-u_j)\right|
    \geq \sqrt{2|N(\{i\})|\log (n+m)}\right]\\
\leq &2\exp\left(-\frac{4|N(\{i\})|\log (n+m)}{|N(\{i\})|}\right)\leq 2(n+m)^{-4}.   
\end{aligned}
\end{equation*}
Below we use the above inequality and some analysis of function $f$
to count the number of subjects in $N(\{i\})\cap \lowset_{k+1}$.
The fact we use about function $f$ is
\begin{equation*}
\begin{aligned}
    & f'(x)=\frac{\exp(-x)}{(1+\exp(-x))^2}\leq \frac{1}{4}\quad \text{and}\\
    &f'(x)\geq \frac{\exp(-(\alpha_{n,m}+1))}{(1+\exp(-(\alpha_{n,m}+1)))^2}
    \geq \frac{\exp(-(\alpha_{n,m}+1))}{4}=c_{n,m},~\forall|x|\leq \alpha_{n,m}+1.
\end{aligned}
\end{equation*}
Thus for another subject $j$ such that $\varepsilon_j \leq \varepsilon_i$,
by mean value theorem, we have
\[f\left(u_i^*-u_j^*\right)-f\left(u_i-u_j\right)
    =f'(\xi_{ij})\left(\varepsilon_i-\varepsilon_j\right)
    \leq \frac{1}{4}\left(\varepsilon_i-\loweps\right)
    \leq \frac{D_k}{4},\]
where $\xi_{ij}\in\left[u_i-u_j,u_i^*-u_j^*\right]$.
Similarly, for a subject $j$ with $\varepsilon_j > \varepsilon_i + D_{k+1}-D_k$, we have
\[f\left(u_i-u_j\right)-f\left(u_i^*-u_j^*\right)
    =f'(\xi'_{ij})\left(\varepsilon_j-\varepsilon_i\right)
    \geq c_{n,m}(D_{k+1}-D_k),\]
where $\xi'_{ij}\in\left[u_i^*-u_j^*,u_i-u_j\right]$. Since
\[u_i-u_j-D_{K_{n,m}}\leq u_i-u_j - \left(\varepsilon_j-\varepsilon_i\right)
    =u_i^*-u_j^* \leq \xi'_{ij}\leq u_i-u_j,\]
and $D_{K_{n,m}}\rightarrow 0$ as $n,m\rightarrow\infty$ under condition \eqref{condition:consistency},
$|\xi'_{ij}|$ is bounded by $\alpha_{n,m}+1$ when $n$ and $m$ is large enough,
thus $f'(\xi'_{ij})\geq c_{n,m}$. Therefore, on the one hand,
\begin{equation}
\begin{aligned}
& \sum_{u_j^*-u_j>u_i^*-u_i}A_{ij}\left(f(u_i-u_j)-f(u_i^*-u_j^*)\right)\\
= &\sum_{j}A_{ij}\left(f(u_i-u_j)-f(u_i^*-u_j^*)\right)
    -\sum_{u_j^*-u_j\leq u_i^*-u_i}A_{ij}\left(f(u_i-u_j)-f(u_i^*-u_j^*)\right)\\
\leq & \sqrt{2N(\{i\})\log (n+m)}
    +\frac{1}{4}D_k\sum_{u_j^*-u_j\leq u_i^*-u_i}A_{ij}.
\end{aligned}
\label{thm:local:part1}
\end{equation}
On the other hand,
\begin{equation}
\begin{aligned}
& \sum_{u_j^*-u_j>u_i^*-u_i}A_{ij}\left(f(u_i-u_j)-f(u_i^*-u_j^*)\right)\\
\geq & \sum_{u_j^*-u_j> u_i^*-u_i+D_{k+1}-D_k}A_{ij}\left(f(u_i-u_j)-f(u_i^*-u_j^*)\right)\\
\geq & c_{n,m}(D_{k+1}-D_{k})\sum_{u_j^*-u_j> u_i^*-u_i+D_{k+1}-D_k}A_{ij}.
\end{aligned}
\label{thm:local:part2}
\end{equation}
Combining \eqref{thm:local:part1} and \eqref{thm:local:part2}, we have 
\[|N(\{i\})\cap \lowset_{k+1}|\geq
    \sum_{u_j^*-u_j\leq u_i^*-u_i+D_{k+1}-D_k} A_{ij}
    \geq \frac{c_{n,m}(D_{k+1}-D_k)-\sqrt{\frac{2m\log (n+m)}{(1-z_{n,m})nd_{n,m}}}}
        {c_{n,m}(D_{k+1}-D_k)+\frac{1}{4}D_k}|N(\{i\})|.\]
For $k<K_{n,m}-1$,
\[\frac{c_{n,m}(D_{k+1}-D_k)-\sqrt{\frac{2m\log (n+m)}{(1-z_{n,m})nd_{n,m}}}}
    {c_{n,m}(D_{k+1}-D_k)+\frac{1}{4}D_k}|N(\{i\})|
    \geq q_{n,m}|N(\{i\})|.\]
For $k=K_{n,m}-1$,
\[\frac{c_{n,m}(D_{k+1}-D_k)-\sqrt{\frac{2m\log (n+m)}{(1-z_{n,m})nd_{n,m}}}}
    {c_{n,m}(D_{k+1}-D_k)+\frac{1}{4}D_k}|N(\{i\})|\geq \frac{75}{81}|N(\{i\})|.\]
\end{proof}

\begin{proof}[Proof of Theorem \ref{thm:consistency}]
Denote $X_k=\lowset_k\cap L$ and $Y_k=\lowset_k\cap R$.
We inductively prove the following fact,
for $n$ and $m$ large enough,
with probability $1-(n+m)^{-2}$,
\begin{itemize}
    \item for $1\leq k\leq K_{n,m}-2$, and $k$ is odd,
        \[|X_k|,|Y_k|\geq \left(\frac{n}{m}d_{n,m}\right)^{(k-1)/2};\]
    \item for $1\leq k\leq K_{n,m}-2$, and $k$ is even,
        \[|X_k|\geq \left(\frac{n}{m}\right)^{k/2}d_{n,m}^{(k-1)/2},\quad 
        |Y_k|\geq \left(\frac{n}{m}\right)^{k/2-1}d_{n,m}^{(k-1)/2};\]
    \item for $k = K_{n,m}-1$,
        \[|X_k|,|Y_k|\geq \frac{m}{d_{n,m}};\]
    \item for $k = K_{n,m}$,
        \[|X_k|> \frac{n}{2},\quad |Y_k| > \frac{m}{2}.\]
\end{itemize}
We will use the following fact,
\begin{equation}
\begin{aligned}
|Y_{k+1}|\geq |N(X_k)\cap \lowset_{k+1}|&=|N(X_k)|-|N(X_k)\cap \overline{\lowset_{k+1}}|\\
&\geq |N(X_k)|-\sum_{i\in X_k}|N(\{i\})\cap \overline{\lowset_{k+1}}|\\
&=|N(X_k)|-\sum_{i\in X_k}\left(|N(\{i\})|-|N(\{i\})\cap \lowset_{k+1}|\right),\\
\end{aligned}
\label{thm:consistency:fact}
\end{equation}
and similarly
\[|X_{k+1}|\geq |N(Y_k)\cap \lowset_{k+1}|\geq |N(Y_k)|-\sum_{i\in Y_k}\left(|N(\{i\})|-|N(\{i\})\cap \lowset_{k+1}|\right),\]
to show the growth of $X_k$ and $Y_k$ respectively.

From now on we only consider $n$ and $m$ large enough.
Since $\lowelement\in \lowset_0$, w.l.o.g.\ we assume $|X_0|=1$.
if $X_0$ contains other subjects, we take a subset with size 1.
Then by fact \eqref{thm:consistency:fact}, \eqref{thm:degree} and
Lemma \ref{thm:local-property}, we know with probability $1-4(n+m)^{-4}$ that
\[|Y_1|\geq |N(X_0)\cap \lowset_{k+1}|\geq q_{n,m}|N(X_0)|>0.\]
% we prove with probability $??$, for $|X_k|\leq n/d_n^{3/2}$, we have $|Y_{k+1}|\geq d^{1/2}|X_k|$. 
For $1<k\leq K_{n,m}-2$, and odd $k$, we prove inductively.
We assume $|X_k|=\left(\frac{n}{m}d_{n,m}\right)^{(k-1)/2}$.
If $X_k$ is larger, we pick any subset with size $\left(\frac{n}{m}d_{n,m}\right)^{(k-1)/2}$.
Fact \eqref{thm:consistency:fact} show that
\[
|Y_{k+1}|\geq |N(X_k)|-\sum_{i\in X_k}\left(|N(\{i\})|-|N(\{i\})\cap \lowset_{k+1}|\right).
\]
By Lemma \ref{thm:vertex-expansion} and
union bound over all subset of $L$ with size $\left(\frac{n}{m}d_{n,m}\right)^{(k-1)/2}$,
it holds with probability $1-(n+m)^{-3|X_k|}$ that,
\[|N(X_k)|>(1-z_{n,m})\left(1-\frac{d_{n,m}|X_k|}{m}\right)d_{n,m}|X_k|.\]
By Lemma\ref{thm:local-property} and union bound over all possible subject $i\in X_k$,
it holds with probability $1-2(n+m)^{-3}$ that,
\[\forall i\in X_k,~|N(\{i\})\cap \lowset_{k+1}|\geq q_{n,m}|N(\{i\})|.\]

Therefore, with probability $1-3(n+m)^{-3}$ we have
\begin{equation*}
\begin{aligned}
|Y_{k+1}|
&\geq |N(X_k)|-\sum_{i\in X_k}\left(|N(\{i\})|-|N(\{i\})\cap \lowset_{k+1}|\right)\\
&\geq |N(X_k)|-(1-q_{n,m})\sum_{i\in X_k}|N(\{i\})|\\
&\geq (1-z_{n,m})\left(1-\frac{d_{n,m}|X_k|}{m}\right)d_{n,m}|X_k|-
    (1-q_{n,m})d_{n,m}|X_k|\\
&\geq |X_k|\left(\frac{m}{n}d_{n,m}\right)^{1/2}
    \left(\left(\frac{n}{m}\right)^{1/2}(q_{n,m}-z_{n,m})d_{n,m}^{1/2}-
    (1-z_{n,m})\frac{\left(\frac{n}{m}\right)^{1/2}d_{n,m}^{3/2}|X_k|}{m}\right)\\
&= |X_k|\left(\frac{m}{n}d_{n,m}\right)^{1/2}
\left(\left(\frac{n}{m}\right)^{1/2}(q_{n,m}-z_{n,m})d_{n,m}^{1/2}-
(1-z_{n,m})\frac{\left(\frac{n}{m}d_{n,m}\right)^{3/2}|X_k|}{n}\right)\\
&\geq |X_k|\left(\frac{m}{n}d_{n,m}\right)^{1/2}
\left(\left(\frac{n}{m}\right)^{1/2}(q_{n,m}-z_{n,m})d_{n,m}^{1/2}-1\right)\\
\end{aligned}
\end{equation*}
where the last inequality holds because we assume
$|X_k|=\left(\frac{n}{m}d_{n,m}\right)^{(k-1)/2}$.
Finally, under condition \eqref{condition:consistency},
we have for large enough $n$ and $m$,
$\left(\frac{n}{m}\right)^{1/2}(q_{n,m}-z_{n,m})d_{n,m}^{1/2}-1
    \geq \left(\frac{n}{m}\right)^\frac{1}{2}$,
thus $|Y_{k+1}|\geq d_{n,m}^{1/2}|X_k|$.
The same calculation applies to the case of $1<k\leq K_{n,m}-2$ and even $k$.
Similarly, we can prove for $1<k\leq K_{n,m}-2$,
    $|X_{k+1}|\geq \frac{n}{m}\left(d_{n,m}\right)^{1/2}|Y_k|$.
Therefore, we finish the proof for all $k < K_{n,m}$.

Similarly for $k=K_{n,m}$ and large enough $n$ and $m$,
with probability $1-4(n+m)^{-3}$, 
\begin{equation*}
\begin{aligned}
|Y_{K_{n,m}}|
&\geq |N(X_{K_{n,m}-1})|-\sum_{i\in X_{K_{n,m}-1}}\left(|N(\{i\})|
    -|N(\{i\})\cap \lowset_{K_{n,m}}|\right)\\
&\geq |N(X_{K_{n,m}-1})|-\left(1-\frac{75}{81}\right)\sum_{i\in X_{K_{n,m}-1}}|N(\{i\})|\\
&\geq (1-z_{n,m}-e^{-1})m-\frac{6}{81}m\\
&> \frac{m}{2}.
\end{aligned}
\end{equation*}
The same proof applies for $|X_{K_{n,m}}|$.
To summarize, with probability $1-(n+m)^{-2}$,
$|X_{K_{n,m}}| > n/2$ and $|Y_{K_{n,m}}|> m/2$,
thus $|\lowset_{K_{n,m}}|> (n+m)/2$.
By symmetry, $|\highset_{K_{n,m}}|>(n+m)/2$ with probability $1-(n+m)^{-2}$.
Then with probability $1-2(n+m)^{-2}$,
at least one subject $i\in \lowset_{K_{n,m}}\cap \highset_{K_{n,m}}$
lies in both $\lowset_{K_{n,m}}$ and $\highset_{K_{n,m}}$.
By definition, subject $i$ satisfies
\[
\varepsilon_i-\loweps\leq D_{K_{n,m}}\quad \text{and}\quad 
\higheps-\varepsilon_i\leq D_{K_{n,m}},
\]
thus
\[\|\boldsymbol{u}^*-\boldsymbol{u}\|_\infty\leq \higheps-\loweps\leq 2D_{K_{n,m}},\]
which tends to 0 under condition \eqref{condition:consistency}.
\end{proof}

\begin{corollary}[Rates]
In the case where $\alpha_{n,m}=O(1)$, and $d_{n,m}=\Omega\left(\frac{m\log^3(n+m)}{n}\right)$, with probability $1-2(n+m)^{-2}$, we have
\[\|\boldsymbol{u}^*-\boldsymbol{u}\|_\infty=O\left(\frac{\log n}{\log (\frac{n}{m}d_{n,m})}\sqrt{\frac{m\log (n+m)}{nd_{n,m}}}\right).\]
\end{corollary}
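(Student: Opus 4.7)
The plan is to extract the rate directly from the proof of \Cref{thm:consistency}, which already establishes the high-probability bound $\|\boldsymbol{u}^*-\boldsymbol{u}\|_\infty \leq 2D_{K_{n,m}}$ with probability $1-2(n+m)^{-2}$. The corollary then follows by simplifying $D_{K_{n,m}}$ under the stronger hypotheses $\alpha_{n,m}=O(1)$ and $d_{n,m}=\Omega\!\left(\frac{m\log^3(n+m)}{n}\right)$.

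First I would verify that the corollary's hypotheses imply conditions \eqref{condition:connectivity} and \eqref{condition:consistency}, so that \Cref{thm:consistency} is applicable. For \eqref{condition:connectivity}, $\exp(\alpha_{n,m})=O(1)$ and $(n+m)/m \leq 2$ (since $n\leq m$), so the ratio is at most $O(1/\log^2(n+m))\to 0$. For \eqref{condition:consistency}, the hypothesis gives $\frac{m\log^3(n+m)}{nd_{n,m}}=O(1)$ and $\frac{n}{m}d_{n,m}=\Omega(\log^3(n+m))$, hence $\Delta_{n,m}=O(1/\log\log(n+m))\to 0$, while $\exp(2(\alpha_{n,m}+1))=O(1)$.

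Next I would simplify the three quantities appearing in the definition of $D_{K_{n,m}}$. Under $\alpha_{n,m}=O(1)$, $c_{n,m}=\frac{\exp(-(\alpha_{n,m}+1))}{4}=\Theta(1)$. Under the degree hypothesis, $z_{n,m}=\sqrt{\frac{32m\log(n+m)}{nd_{n,m}}}=O(1/\log(n+m))=o(1)$, so $1-z_{n,m}=\Theta(1)$. The number of growth steps is $K_{n,m}=2\left\lceil\frac{\log n}{\log(\frac{n}{m}d_{n,m})}-1\right\rceil=\Theta\!\left(\frac{\log n}{\log(\frac{n}{m}d_{n,m})}\right)$.

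Finally, substituting these asymptotics into
\[D_{K_{n,m}}=\frac{80 K_{n,m}}{c_{n,m}^2}\sqrt{\frac{2m\log(n+m)}{(1-z_{n,m})nd_{n,m}}}\]
yields $D_{K_{n,m}}=O\!\left(\frac{\log n}{\log(\frac{n}{m}d_{n,m})}\sqrt{\frac{m\log(n+m)}{nd_{n,m}}}\right)$, and the corollary follows from $\|\boldsymbol{u}^*-\boldsymbol{u}\|_\infty\leq 2D_{K_{n,m}}$. There is no real obstacle here beyond bookkeeping; the only step requiring any care is checking that the stronger assumption $d_{n,m}=\Omega(m\log^3(n+m)/n)$ is indeed sufficient to make the $o(1)$ quantities $z_{n,m}$ and $\Delta_{n,m}$ disappear into constants when we turn the limit statement of \Cref{thm:consistency} into a concrete big-O rate.
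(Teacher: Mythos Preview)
Your proposal is correct and is precisely the argument the paper intends: the corollary is stated in the paper without a separate proof, as it follows immediately from the bound $\|\boldsymbol{u}^*-\boldsymbol{u}\|_\infty\leq 2D_{K_{n,m}}$ established in the proof of \Cref{thm:consistency} once one substitutes $c_{n,m}=\Theta(1)$, $1-z_{n,m}=\Theta(1)$, and $K_{n,m}=\Theta\bigl(\log n/\log(\tfrac{n}{m}d_{n,m})\bigr)$ under the stated hypotheses. Your verification that the corollary's assumptions imply conditions~\eqref{condition:connectivity} and~\eqref{condition:consistency} is a useful addition that the paper leaves implicit.
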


\subsection{Analysis of Our Algorithm}

Our algorithm uses the MLEs to predict the student's performance within the component. Based on the consistency of the MLEs, we show the bias of our algorithm when Condition A is satisfied.
\begin{theorem}\label{thm:ex-post error}
When Condition A is satisfied, the exam result graph is strongly connected. In this case, the MLE is unique and we have
\[
\left(\mathrm{alg}_i-\mathrm{opt}_i\right)^2\leq \frac{1}{4}\|\boldsymbol{u}-\boldsymbol{u}^*\|_{\infty}^2.
\]
\end{theorem}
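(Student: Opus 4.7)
The plan is to reduce the ex-post error to a purely deterministic expression in the MLE, at which point the $\frac{1}{4}$-Lipschitz property of $f$ finishes the job. Because Condition A holds, the exam result graph is a single strongly connected component, so every pair $(i,j) \in S \times Q$ falls into Case 1 (existing edge, $h_{ij}=w_{ij}$) or Case 2 (missing edge inside the SCC, $h_{ij}=f(u_i^*-u_j^*)$); Cases 3 and 4 do not arise. Thus $\mathrm{alg}_i = \frac{1}{|Q|}\sum_{j\in Q} h_{ij}$ splits cleanly into a sum over $E$-edges and a sum over non-edges.

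The key step is to invoke the first-order condition of the MLE, equation \eqref{eq:likelihood:foc}, which for any student $i$ reads $\sum_{j:(i,j)\in E} w_{ij} = \sum_{j:(i,j)\in E} f(u_i^*-u_j^*)$. Substituting this into the existing-edge portion of $\mathrm{alg}_i$ replaces the stochastic $w_{ij}$'s by smooth MLE predictions, so that
\[
\mathrm{alg}_i - \mathrm{opt}_i \;=\; \frac{1}{|Q|}\sum_{j\in Q}\bigl(f(u_i^*-u_j^*) - f(u_i-u_j)\bigr).
\]
This rewriting is the heart of the argument: without the FOC, the observed $w_{ij}$'s could be bounded away from $f(u_i-u_j)$ by a constant, and no $\|\boldsymbol{u}^*-\boldsymbol{u}\|_\infty$-type bound could hold termwise.

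From there it is a short calculation. Since $f'(x)=f(x)(1-f(x))\leq \tfrac14$, the mean value theorem gives
\[
\bigl|f(u_i^*-u_j^*) - f(u_i-u_j)\bigr| \;\leq\; \tfrac{1}{4}\bigl|(u_i^*-u_i)-(u_j^*-u_j)\bigr| \;\leq\; \tfrac{1}{2}\|\boldsymbol{u}^*-\boldsymbol{u}\|_\infty,
\]
by the triangle inequality. Averaging over $j\in Q$ yields $|\mathrm{alg}_i-\mathrm{opt}_i|\leq \tfrac12 \|\boldsymbol{u}^*-\boldsymbol{u}\|_\infty$, and squaring gives the claimed bound.

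I do not anticipate a real obstacle: the proof is essentially algebraic once one recognizes that the MLE's first-order condition is exactly the tool needed to absorb the randomness of the observations into the smooth function $f$ evaluated at the estimator. The only points to be careful about are (i) ensuring the FOC is used for the correct summation (over edges incident to student $i$, not over all of $Q$), and (ii) keeping track of the two factors of $\tfrac12$ — one from Lipschitz-ness with constant $\tfrac14$ applied to a difference $(u_i^*-u_i)-(u_j^*-u_j)$, not to a single coordinate.
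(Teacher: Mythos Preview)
Your proof is correct and follows essentially the same route as the paper. In fact, your argument is slightly more careful: the paper's proof simply asserts that ``the algorithm \ldots\ gives student $i$ a grade of $\mathrm{alg}_i=\frac{1}{|Q|}\sum_{j\in Q}f(u_i^*-u_j^*)$'' and proceeds from there, without commenting on the fact that for existing edges the algorithm actually outputs $h_{ij}=w_{ij}$ rather than $f(u_i^*-u_j^*)$. You correctly identify that the likelihood first-order condition \eqref{eq:likelihood:foc} is precisely what makes these two expressions coincide, so that the observed $w_{ij}$'s can be replaced by the smooth MLE predictions. After that step, both proofs are identical: apply the mean value theorem with $|f'|\le \tfrac14$, bound $|\varepsilon_i-\varepsilon_j|\le 2\|\boldsymbol{u}^*-\boldsymbol{u}\|_\infty$, average over $j$, and square.
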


\begin{proof}
When the exam result graph is strongly connected, the algorithm calculates the MLEs $\boldsymbol{u}^*$ and gives student $i$ a grade of $\mathrm{alg}_i=\frac{1}{|Q|}\sum_{j\in Q}f(u_i^*-u_j^*)$, while the ground truth probability of answering a random question correctly is $\mathrm{opt}_i=\frac{1}{|Q|}\sum_{j\in Q}f(u_i-u_j)$. Thus we have
\begin{equation}
\begin{aligned}
|\mathrm{alg}_i-\mathrm{opt}_i|
= &\left|\frac{1}{|Q|}\sum_{j}f(u_i^*-u_j^*)-\frac{1}{|Q|}\sum_{j}f(u_i-u_j)\right|\\
\leq &\frac{1}{|Q|}\sum_{j}\left|f(u_i^*-u_j^*)-f(u_i-u_j)\right|\\
= &\frac{1}{|Q|}\sum_{j}\left|f'(\xi_{ij})\right|\left|\varepsilon_i-\varepsilon_j\right|\\
\leq &\frac{2}{n}\|\boldsymbol{u}-\boldsymbol{u}^*\|_{\infty}\sum_{j}\left|f'(\xi_{ij})\right|\\
\leq &\frac{1}{2}\|\boldsymbol{u}-\boldsymbol{u}^*\|_{\infty},
\end{aligned}
\end{equation}
where the third-to-last equality is because of the mean value theorem, the next-to-last inequality is because $\left|\varepsilon_i-\varepsilon_j\right|\leq 2 \|\boldsymbol{u}-\boldsymbol{u}^*\|_{\infty}$, and the last inequality is because $|f'(x)|\leq \frac{1}{4}$.
Thus $\left(\mathrm{alg}_i-\mathrm{opt}_i\right)^2\leq \frac{1}{4}\|\boldsymbol{u}-\boldsymbol{u}^*\|_{\infty}^2$.
\end{proof}

Next we discuss the performance of our algorithm on several extreme cases of the task assignment graph. For example, the extremely sparse cases when $N(\{i\})$ is mutually disjoint for each student $i$ or each student receives only $d=1$ question. Another example is that the task assignment graph is a complete bipartite graph. In all of the above cases, our algorithm gives the same grade as simple averaging.

\begin{theorem}
When the task assignment graph satisfies that $N(i)$ is mutually disjoint for each student $i$ or each student receives only $d=1$ question, our algorithm gives the same grade as simple averaging.
\end{theorem}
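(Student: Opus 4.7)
The plan is to establish two structural facts about the exam result graph $G'$ under either extreme hypothesis, and then observe that they force the algorithm's output to reduce to simple averaging. The first fact is that every strongly connected component of $G'$ is a singleton, so Case~2 of the algorithm never invokes a non-trivial MLE computation. The second fact is that every student--question pair $(i,j')$ with $j' \notin N(i)$ lies in incomparable components, so Case~3 never fires either. Combined, these imply that for every $j' \notin N(i)$ the value $h_{ij'}$ is filled in Case~4 by the average of the already-filled Case~1 values $\{w_{ij} : j \in N(i)\}$, which is exactly $\mathrm{avg}_i$.

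When the neighborhoods $N(i)$ are mutually disjoint, the argument is essentially immediate: the task assignment graph decomposes into vertex-disjoint stars, one centered at each student, plus isolated unused questions. Each such star, oriented as in $G'$, is a tree and hence acyclic, so every SCC is a singleton. Moreover, no directed path in $G'$ can leave a star, so student $i$ and any question $j' \notin N(i)$ lie in different weakly connected components of $G'$ and are therefore incomparable.

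The $d=1$ case is where the main obstacle lies, since questions can now be shared across students and $G'$ is no longer a disjoint union of stars. The key observation is that every student has exactly one incident edge in $G'$, so is either a pure source (correct answer) or a pure sink (incorrect answer). Because $G'$ is bipartite between $S$ and $Q$, any cycle would have to alternate through both parts and hence pass through some student vertex, but a pure source/sink cannot lie on a cycle; this rules out cycles and forces all SCCs to be singletons. For incomparability, fix $j' \neq j$ with $N(i) = \{j\}$ and observe that reaching $j'$ from $i$ would require leaving $j$ toward some other question, but the only successors of $j$ in $G'$ are sink-students, which have no outgoing edges, so $i \not\rightsquigarrow j'$; the symmetric argument gives $j' \not\rightsquigarrow i$.

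With both structural facts in hand, the final step is a one-line computation:
\begin{equation*}
g_i \;=\; \frac{1}{|Q|}\sum_{j' \in Q} h_{ij'} \;=\; \frac{1}{|Q|}\left(\sum_{j \in N(i)} w_{ij} \;+\; (|Q|-|N(i)|)\,\mathrm{avg}_i\right) \;=\; \mathrm{avg}_i,
\end{equation*}
where we used $\sum_{j \in N(i)} w_{ij} = |N(i)|\,\mathrm{avg}_i$ from the definition of simple averaging.
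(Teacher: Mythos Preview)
Your proof is correct and follows the same overall strategy as the paper's: show that every SCC is a singleton so Case~2 is vacuous, show that every non-assigned question is incomparable with the student so Case~3 is vacuous, and conclude that Case~4 fills every missing entry with $\mathrm{avg}_i$. Your argument is considerably more detailed than the paper's, which simply asserts these structural facts without justification; in particular, your treatment of the $d=1$ case (observing that every student is a pure source or sink, so no cycle can pass through a student, and that the out-neighbors of any question are sink-students) supplies the reasoning the paper omits.
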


\begin{proof}
In both cases, the exam result graph satisfies that every SCC is a single point, thus the algorithm's output totally relies on cross-component predictions. For each student, the comparable components for each student are exactly the questions that student receives. Thus the algorithm gives the same prediction as the student's correctness on those questions. The prediction for remaining questions is the average accuracy on the assigned questions by the algorithm's rule for incomparable components. Therefore, the algorithm's grade for the student is exactly the same as simple averaging.
\end{proof}

\begin{theorem}
When the task assignment graph is a complete bipartite graph, our algorithm gives the same grade as simple averaging.
\end{theorem}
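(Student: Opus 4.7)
The plan is to observe that when the task assignment graph is complete bipartite, every student-question pair $(i,j)$ lies in $E$, so Case 1 of \Cref{alg:grade} applies to every entry of the prediction matrix $h$. The other three cases (same component, comparable components, incomparable components) never fire because they are all restricted to pairs in $(S\times Q)\setminus E$, which is empty.

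First I would write out what Case 1 gives: $h_{ij}=w_{ij}$ for every $i\in S$ and $j\in Q$. Then the grade aggregation step yields
\[
\mathrm{alg}_i=\frac{1}{|Q|}\sum_{j\in Q}h_{ij}=\frac{1}{|Q|}\sum_{j\in Q}w_{ij}.
\]
Next I would compute simple averaging in this regime: since student $i$ is assigned every question, $\sum_j \mathbf{1}_{(i,j)\in E}=|Q|$ and $\sum_j \mathbf{1}_{(i,j)\in E'}=\sum_{j}w_{ij}$, so by \Cref{example:Avg},
\[
\mathrm{avg}_i=\frac{\sum_j \mathbf{1}_{(i,j)\in E'}}{\sum_j \mathbf{1}_{(i,j)\in E}}=\frac{1}{|Q|}\sum_{j\in Q}w_{ij}=\mathrm{alg}_i.
\]
Since this holds for every $i$, the two grading rules coincide. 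There is no real obstacle here; the only thing worth flagging is that the MLE computation in Case 2 is bypassed entirely, so we do not need to invoke \Cref{thm:equiv} or worry about existence/uniqueness of $\boldsymbol{u}^*$ for this statement.
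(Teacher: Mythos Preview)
Your proposal is correct and follows essentially the same approach as the paper's proof, which observes in one line that the output of the algorithm only relies on existing edges and hence coincides with simple averaging. You have simply expanded this observation with the explicit computations for $h_{ij}$, $\mathrm{alg}_i$, and $\mathrm{avg}_i$, which is fine.
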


\begin{proof}
In this case, the output of the algorithm only relies on existing edges. It directly follows that the algorithm gives the same grade as simple averaging.
\end{proof}
\section{Experiments}

\subsection{Real-World Data}
We use the anonymous answer sheets from a previously administered exam
with $|S|=35$ students and $|Q|=22$ questions.
The task assignment graph of the exam is a complete bipartite graph, i.e., each student is assigned with all questions.
The corresponding exam result graph happens to be strongly connected,
thus we are able to infer student abilities and question difficulties (\Cref{fig:complete-data:ecdf}). Below we study results from counterfactual
subgraphs with real exam answers and from data generated according to the model
with the inferred abilities and difficulties.

\begin{figure}[htbp]
    \centering
    \includegraphics[width=0.5\textwidth]{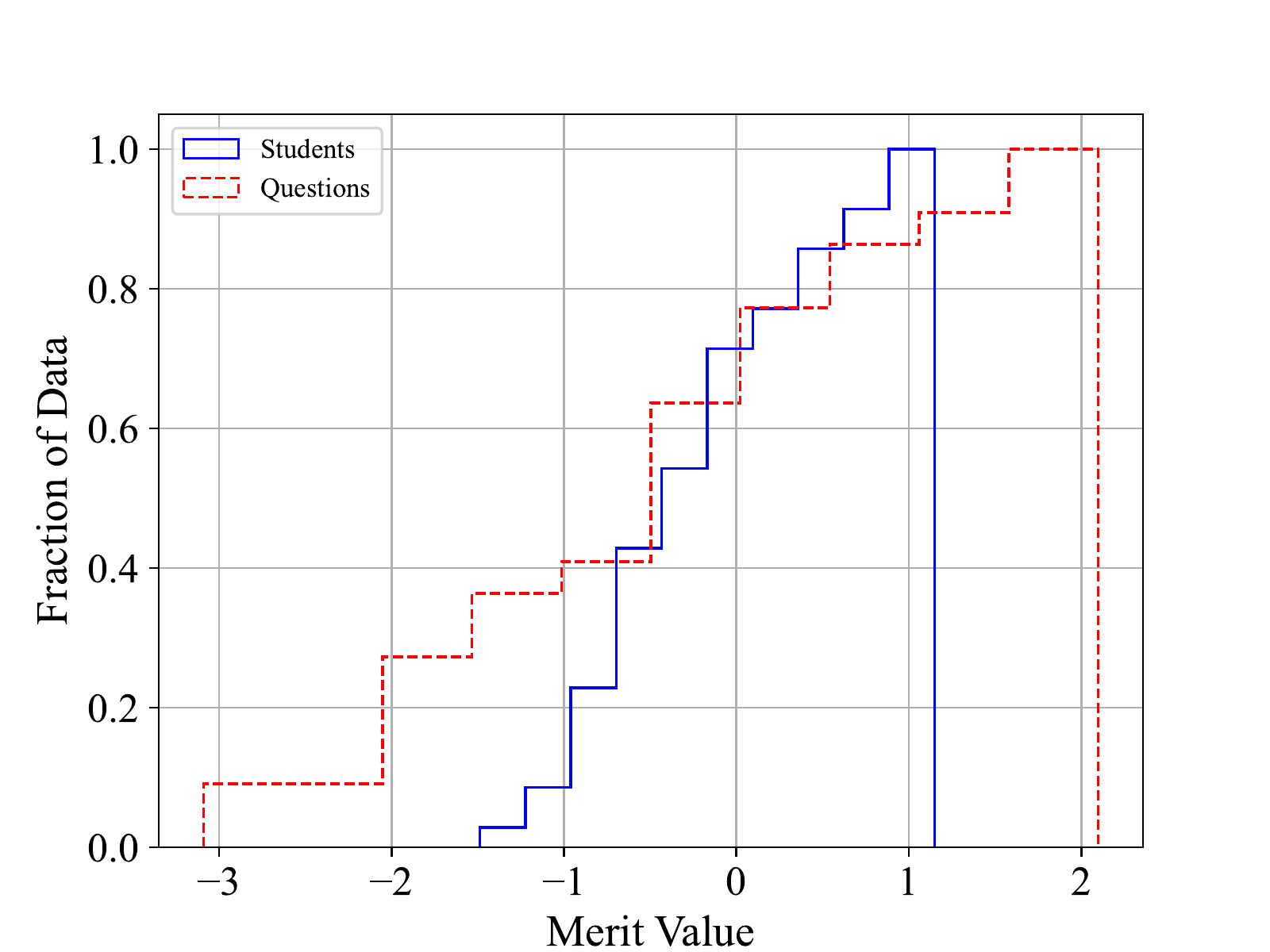}
    \caption{Empirical Cumulative Distribution of Merit Value.
        We analyze all students and questions under the Bradley-Terry-Luce model
        and show the empirical cumulative density function of inferred student abilities and question difficulties.
        The abilities ranges from -1.486 to 1.149 while the difficulties ranges from -3.090 to 2.099.}
    \label{fig:complete-data:ecdf}
\end{figure}

\subsection{Algorithms}

\paragraph{Simple Averaging} The grade for student $i$ is its average correctness on assigned questions.
See the formal definition in \Cref{example:Avg}.

\paragraph{Our Algorithm} The grade for student $i$ is an aggregation of the algorithm's prediction on her performance on each question.
All predictions can be classified into four cases, including existing edges (keep the fact as prediction),
same component (maximum likelihood estimators), comparable components (answer in line with the path direction)
and incomparable components (heuristic as simple averaging).  See the formal definition in \Cref{sec:method:grading rule}.

\subsection{Ex-post Bias}\label{sec:ex-post bias}

\subsubsection{Simulation 1: A Visualization of Simple Averaging's Ex-post Unfairness}\label{subsec:visual}
We compare the ex-post bias (\Cref{definition:ex-post bias}) between our algorithm and simple averaging
given a fixed random task assignment graph.
We use inferred parameters of all $35$ students and $22$ questions
according to \Cref{fig:complete-data:ecdf}.
The task assignment graph is generated by \Cref{alg:graph} with the parameter $m=22$ and $d=10$,
i.e.\ each student is assigned 10 random questions from the whole question bank.
The exam result graph is repeatedly generated according to the model.

\Cref{fig:fixed} shows the performance of two algorithms.
The left plot corresponds to our algorithm and the right plot corresponds to simple averaging.
In each plot, there are 35 confidence intervals,
each corresponding to the difference between the student's expected grade and her benchmark,
i.e. $\E_w[\mathrm{alg}_i]-\mathrm{opt}_i$.
The confidence intervals in the left plot are significantly closer to 0, compared to the right plot,
which visualizes the intuition that students are facing different overall question difficulties
under the random assignment and simple averaging fails to adjust their grades.
Instead, our algorithm infers the question difficulties and the student abilities and adjusts their grades accordingly, largely reducing the ex-post bias.

\begin{figure}[htbp]
    \centering
    \subfigure[Ex-post Grade Deviation of Our Algorithm]{
        \begin{minipage}{0.45\linewidth}
            \centering
            \includegraphics[width=\textwidth]{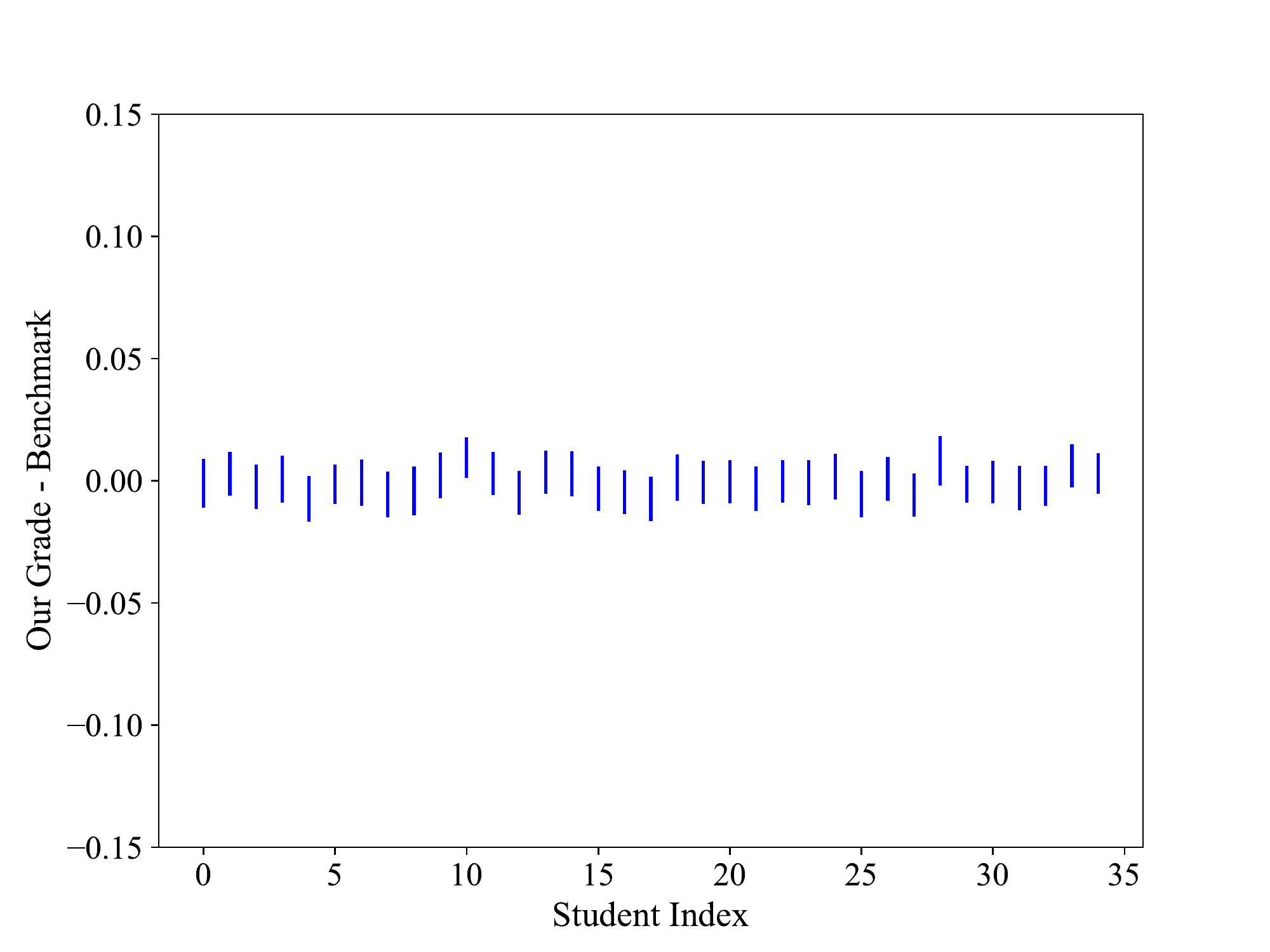}
        \end{minipage}
    }
    \subfigure[Ex-post Grade Deviation of Simple Averaging]{
        \begin{minipage}{0.45\linewidth}
            \centering
            \includegraphics[width=\textwidth]{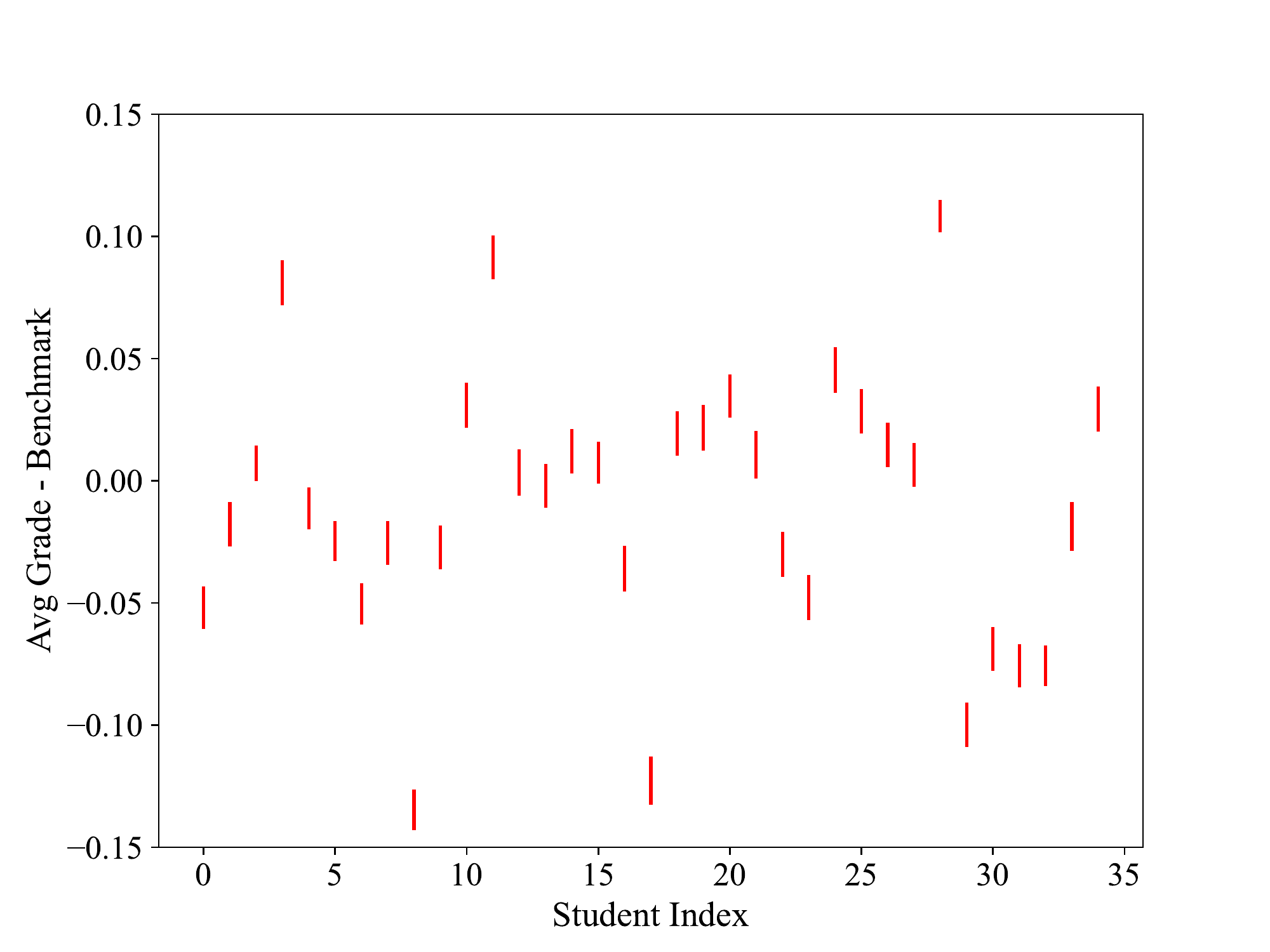}
        \end{minipage}
    }
    \caption{A Visualization of the Ex-post Grade Deviation with Degree Constraint $d=10$}
    \label{fig:fixed}
\end{figure}

\subsubsection{Simulation 2: The Effect of the Degree Constraint}

We compare the expected maximum ex-post bias, i.e., $\E_G\left[\max_{i\in S} \left(\E_w[\mathrm{alg}_i]-\mathrm{opt}_i\right)^2\right]$ and the expected average ex-post bias, i.e., $\E_G\E_{i\sim\mathcal U(S)}\left[\left(\E_w[\mathrm{alg}_i]-\mathrm{opt}_i\right)^2\right]$
between our algorithm and simple averaging.
We use inferred parameters of all $35$ students and $22$ questions
according to \Cref{fig:complete-data:ecdf}.
For each degree constraint $d$ from 1 to 22, we repeatedly generate task assignment graphs using algorithm~\Cref{alg:graph}
with the other parameter $m=22$, i.e. each student is assigned $d$ independent questions from the whole question bank.
For each task assignment graph, the exam result graph is repeatedly generated according to the model.

\Cref{fig:ex-post bias} shows two algorithms' expected maximum ex-post bias (\Cref{fig:ex-post bias:maximum})
and expected average ex-post bias (\Cref{fig:ex-post bias:average}) under different degree constraints,
where our algorithm (blue curve) outperforms simple averaging (red curve) on every degree constraint $d$.
Our algorithm's expected ex-post bias with the degree constraint $d=5$ is close to simple averaging's with the degree constraint $d=20$,
which means our algorithm can ask 15 fewer questions to each student to achieve the same grading accuracy as simple averaging.

\begin{figure}[htbp]
    \centering
    \subfigure[Expected Maximum Ex-post Bias]{
        \begin{minipage}{0.45\linewidth}
            \centering
            \includegraphics[width=\textwidth]{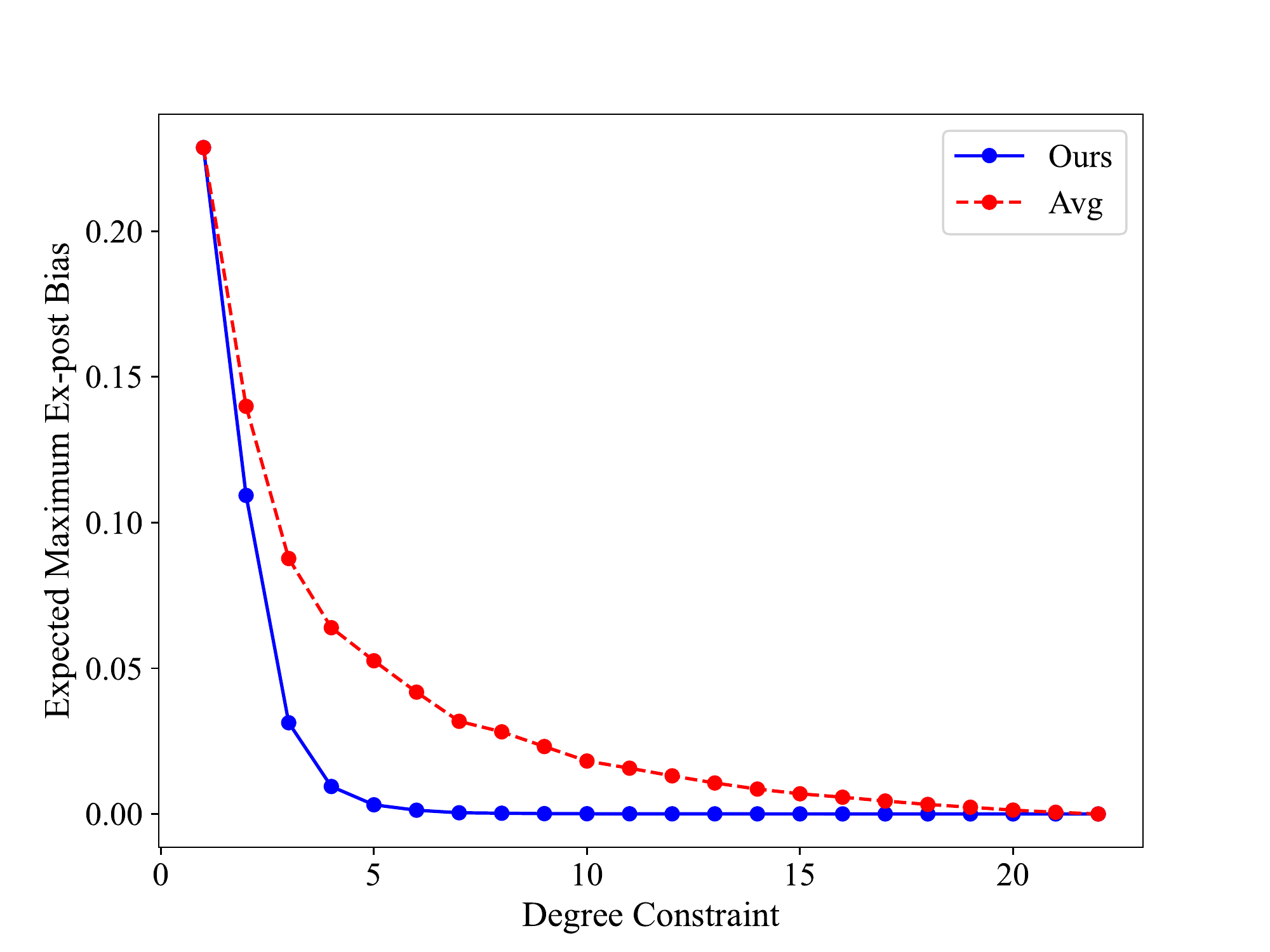}
        \end{minipage}
        \label{fig:ex-post bias:maximum}
    }
    \subfigure[Expected Average Ex-post Bias]{
        \begin{minipage}{0.45\linewidth}
            \centering
            \includegraphics[width=\textwidth]{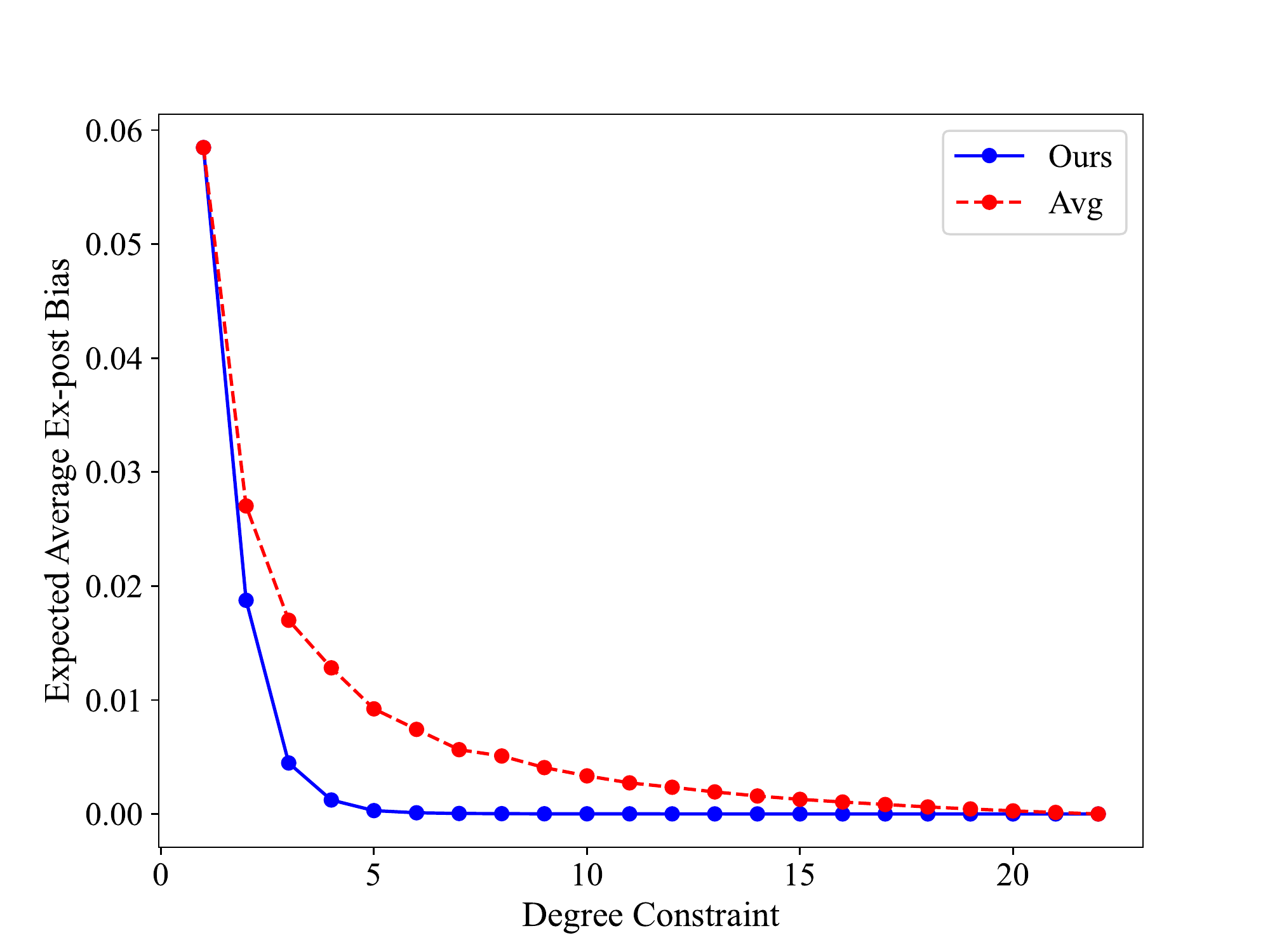}
        \end{minipage}
        \label{fig:ex-post bias:average}
    }
    \caption{Expected Aggregated Ex-post Bias v.s. Degree Constraint. The scale of expected average ex-post bias is about 4 times smaller than the scale of expected maximum ex-post bias.}
    \label{fig:ex-post bias}
\end{figure}

We emphasize that our algorithm does not only perform better in expectation over random graphs but also on each of them.
If we zoom in on a specific degree constraint, we can see two algorithms' maximum ex-post bias and average ex-post bias
on every task assignment graph.
\Cref{fig:random} shows the case with the degree constraint $d=10$.
\Cref{fig:random:maximum} corresponds to the maximum ex-post bias and \Cref{fig:random:average} corresponds to the average ex-post bias.
In each case, the left plot contains 100 points, corresponding to a different task assignment graph,
whose x-axis is the aggregated ex-post bias of simple averaging and whose y-axis is that of our algorithm;
the right plot is the histogram of the difference in the aggregated ex-post bias between our algorithm and simple averaging.
The simulation results show that our algorithm has a negligible aggregated ex-post bias
compared to simple averaging on every task assignment graph.

\begin{figure}[htbp]
    \centering
    \subfigure[Expected Maximum Ex-post Bias: 2.43e-4 for Our Algorithm and 1.80e-2 for Simple Averaging]{
        \begin{minipage}{0.8\linewidth}
            \centering
            \includegraphics[width=\textwidth]{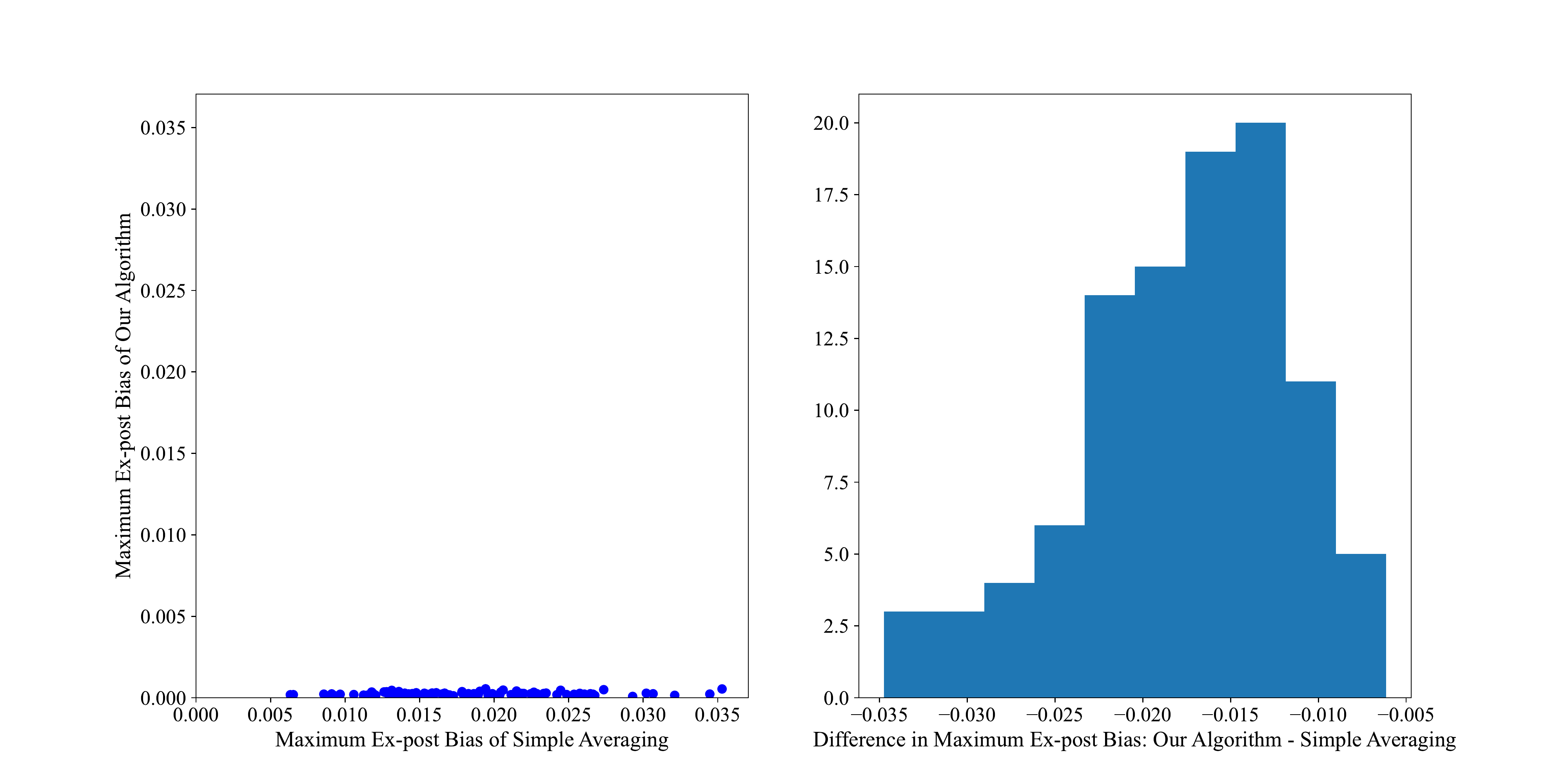}
        \end{minipage}
        \label{fig:random:maximum}
    }
    \subfigure[Expected Average Ex-post Bias: 4.08e-5 for Our Algorithm and 3.31e-3 for Simple Averaging]{
        \begin{minipage}{0.8\linewidth}
            \centering
            \includegraphics[width=\textwidth]{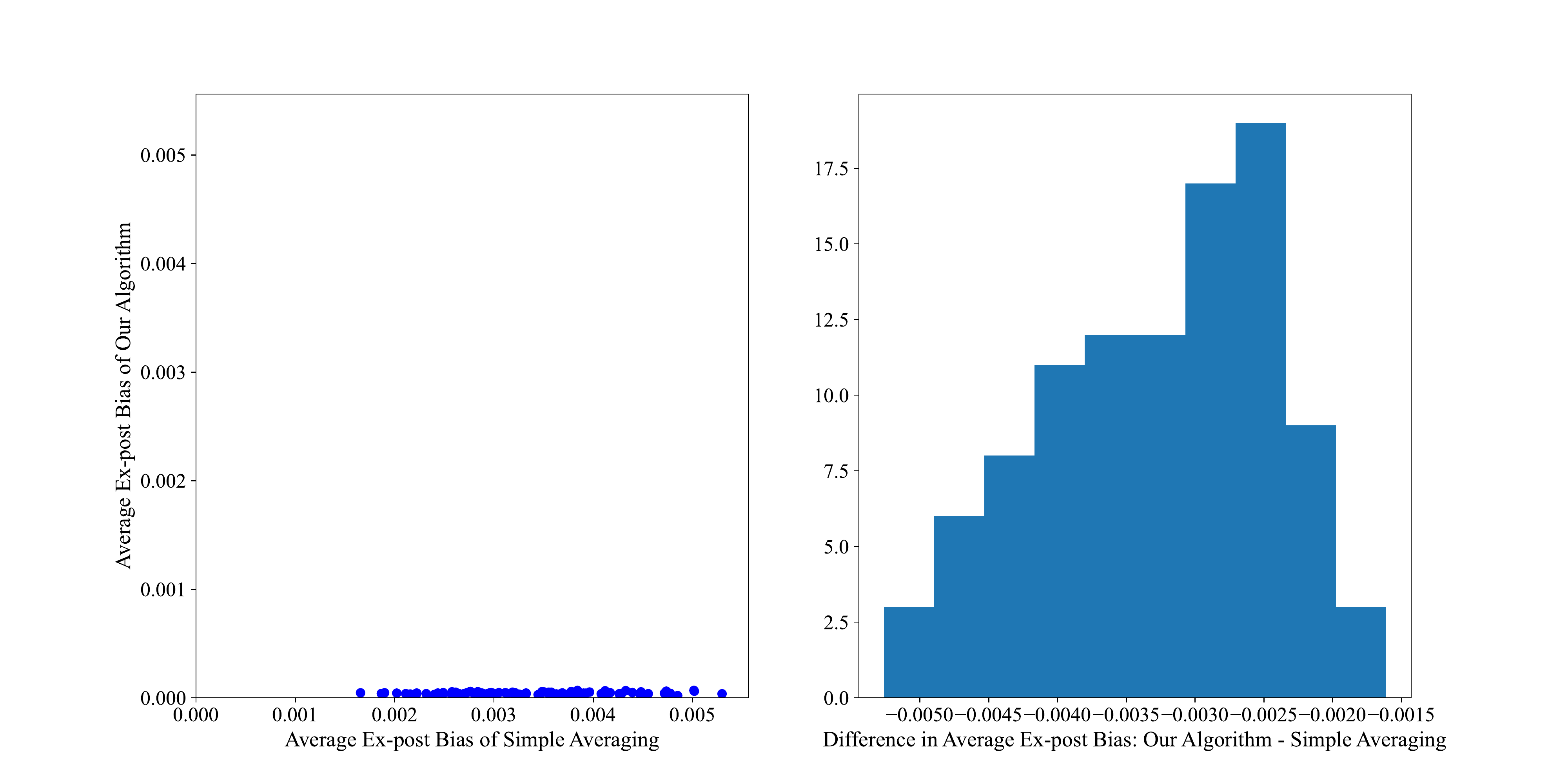}
        \end{minipage}
        \label{fig:random:average}
    }
    \caption{Aggregated Ex-post Bias on Task Assignment Graphs with Degree Constraint $d=10$}
    \label{fig:random}
\end{figure}

\subsection{Ex-post Error and Bias-Variance Decomposition}\label{sec:ex-post error}
In this part, we are investigating the expected average ex-post error (\Cref{definition:ex-post error}), i.e.,\\
$\E_G\E_i\E_w[\left(\mathrm{alg}_i-\mathrm{opt}_i\right)^2]$. Through
bias-variance decomposition,
we relate the ex-post error to the ex-post bias
and the variance in the algorithm performance.
\begin{theorem}[Bias-Variance Decomposition]\label{thm:bias-variance decomposition}
\[\E_G\E_i\E_w[\left(\mathrm{alg}_i-\mathrm{opt}_i\right)^2]=\E_G\E_i[\left(\E_w[\mathrm{alg}_i]-\mathrm{opt}_i\right)^2]+\E_G\E_i\E_w[\left(\mathrm{alg}_i-\E_w[\mathrm{alg}_i]\right)^2].\]
\end{theorem}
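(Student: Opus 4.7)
The plan is to reduce the claimed decomposition to the standard pointwise bias-variance identity for a single real-valued random variable, and then integrate over the remaining expectations. Fix a task assignment graph $G$ and a student index $i$. Under this conditioning, $\mathrm{opt}_i$ is deterministic (it depends only on the true merit vector $\boldsymbol{u}$ and the question bank), while $\mathrm{alg}_i$ is a function of the random answering process $w$. So the only randomness inside $\E_w$ comes from $w$, and $\E_w[\mathrm{alg}_i]$ is a constant (depending on $G$ and $i$, but not on $w$).

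First, I would add and subtract $\E_w[\mathrm{alg}_i]$ inside the square:
\begin{equation*}
(\mathrm{alg}_i - \mathrm{opt}_i)^2
= \bigl((\mathrm{alg}_i - \E_w[\mathrm{alg}_i]) + (\E_w[\mathrm{alg}_i] - \mathrm{opt}_i)\bigr)^2.
\end{equation*}
Expanding the square yields three terms: a pure variance-like term $(\mathrm{alg}_i - \E_w[\mathrm{alg}_i])^2$, a pure bias-like term $(\E_w[\mathrm{alg}_i] - \mathrm{opt}_i)^2$, and a cross term $2(\mathrm{alg}_i - \E_w[\mathrm{alg}_i])(\E_w[\mathrm{alg}_i] - \mathrm{opt}_i)$.

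Next I would take $\E_w$ of both sides for fixed $G,i$. The bias-like term is a constant in $w$ and passes through. The variance-like term becomes the conditional variance. The cross term vanishes because $\E_w[\mathrm{alg}_i - \E_w[\mathrm{alg}_i]] = 0$ while the second factor is $w$-independent and therefore factors out of the $w$-expectation. This gives the pointwise identity
\begin{equation*}
\E_w\bigl[(\mathrm{alg}_i - \mathrm{opt}_i)^2\bigr]
= (\E_w[\mathrm{alg}_i] - \mathrm{opt}_i)^2 + \E_w\bigl[(\mathrm{alg}_i - \E_w[\mathrm{alg}_i])^2\bigr].
\end{equation*}

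Finally, I would apply $\E_G\E_i$ to both sides and use linearity of expectation to obtain the stated identity. There is no real obstacle here; the only thing to be careful about is verifying that $\mathrm{opt}_i$ and $\E_w[\mathrm{alg}_i]$ are indeed constants with respect to $w$ (so that the cross term genuinely vanishes), which follows directly from the definitions since $\mathrm{opt}_i$ depends only on $\boldsymbol{u}$ and $|Q|$ and $\E_w[\mathrm{alg}_i]$ is a deterministic function of $G$ (once $\boldsymbol{u}$ is fixed by the model).
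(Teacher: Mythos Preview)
Your proposal is correct and follows essentially the same approach as the paper: the paper also fixes $G$ and $i$, adds and subtracts $\E_w[\mathrm{alg}_i]$ inside the square, expands, and kills the cross term using that $\E_w[\mathrm{alg}_i]-\mathrm{opt}_i$ is constant in $w$ while $\E_w[\mathrm{alg}_i-\E_w[\mathrm{alg}_i]]=0$, before taking the outer expectations.
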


\begin{proof}
We prove a stronger argument of the decomposition for any fixed student $i$ and any fixed task
assignment graph $G$,
\begin{align*}
&\forall i,G,~\E_w[\left(\mathrm{alg}_i-\mathrm{opt}_i\right)^2]\\
=&\E_w[\left(\mathrm{alg}_i-\E_w[\mathrm{alg}_i]+\E_w[\mathrm{alg}_i]-\mathrm{opt}_i\right)^2]\\
=&\left(\E_w[\mathrm{alg}_i]-\mathrm{opt}_i\right)^2+\E_w[\left(\mathrm{alg}_i-\E_w[\mathrm{alg}_i]\right)^2]+2\E_w[\left(\mathrm{alg}_i-\E_w[\mathrm{alg}_i]\right)\left(\E_w[\mathrm{alg}_i]-\mathrm{opt}_i\right)]\\
=&\left(\E_w[\mathrm{alg}_i]-\mathrm{opt}_i\right)^2+\E_w[\left(\mathrm{alg}_i-\E_w[\mathrm{alg}_i]\right)^2]+2\left(\E_w[\mathrm{alg}_i]-\mathrm{opt}_i\right)\E_w[\left(\mathrm{alg}_i-\E_w[\mathrm{alg}_i]\right)]\\
=&\left(\E_w[\mathrm{alg}_i]-\mathrm{opt}_i\right)^2+\E_w[\left(\mathrm{alg}_i-\E_w[\mathrm{alg}_i]\right)^2].
\end{align*}
\end{proof}

With the same setting in \Cref{subsec:visual}, we show the expected average ex-post error of
our algorithm and simple averaging in \Cref{table: real-world parameter}. Our algorithm achieves
a factor of 8 percent smaller ex-post error in total. But after the decomposition, we can see that our
algorithm achieves a factor of 99 percent smaller ex-post bias with the cost of a factor of 10 percent larger 
variance. In practice, students will only take the exam once, so inevitably the variance of
the algorithm would contribute to the total error. Our algorithm does not focus on how to
reduce variance over the noisy answering process, instead, it focuses on the expected performance
of the algorithm, i.e., it makes the ex-post bias much closer to zero. To verify that our algorithm
does not increase the variance too much, we also run the simulation under ``the worst case'' of our
algorithm, i.e., all students have the same abilities and all questions have the same difficulties.
In this setting, our algorithm faces the risk of over-fitting, while simple averaging works perfectly. In \Cref{table: all-same parameter}, we can see that both algorithms achieve ex-post
biases close to 0, and our algorithm has a factor of 1.6 percent larger variance than simple averaging which is the main contribution to the difference in ex-post errors.

\begin{center}
\begin{table}[htbp]
\centering
\begin{tabular}{|p{0.15\textwidth}|p{0.15\textwidth}|p{0.15\textwidth}|p{0.15\textwidth}|}
\hline 
 & Ex-post Bias & Variance & Ex-post Error \\
\hline 
 Ours & ~0.00004 & 0.0188 & ~0.0188 \\
\hline 
 Avg & ~0.00331 & 0.0170 & ~0.0203 \\
\hline 
 Ours-Avg & -0.00327 & 0.0018 & -0.0015 \\
 \hline
\end{tabular}
\caption{Bias-Variance Decomposition in the setting of real-world parameters}
\label{table: real-world parameter}
\end{table}
\end{center}

\begin{center}
\begin{table}[htbp]
\centering        
\begin{tabular}{|p{0.15\textwidth}|p{0.15\textwidth}|p{0.15\textwidth}|p{0.15\textwidth}|}
\hline 
 & Ex-post Bias & Variance & Ex-post Error \\
\hline 
 Ours & 0.0000500 & 0.0254 & 0.0255 \\
\hline 
 Avg & 0.0000493 & 0.0250 & 0.0250 \\
\hline 
 Ours-Avg & 0.0000007 & 0.0004 & 0.0005 \\
 \hline
\end{tabular}
\caption{Bias-Variance Decomposition in the setting of all-the-same parameters}
\label{table: all-same parameter}
\end{table}
\end{center}

\subsection{Real-World Data Experiment: Cross Validation} We cannot repeat an exam in the real world and check the ex-post bias of the algorithms.
Thus, we sample part of the data we have as a new exam result graph and use them to predict the students' actual average on the data.
We randomly split the real-world data into training data and test data.
Specifically, for a fixed student sample size $d_1$ and a degree constraint $d_2$, in each repetition,
we randomly sample $d_1$ students and randomly choose $d_2$ questions and corresponding answers for each student independently as the training data,
use our algorithm (Ours) and simple averaging (Avg) to predict every student's average accuracy on the whole question bank,
and calculate the mean squared error.
Formally, the mean squared error $\mathrm{MSE}$ is defined as
$\mathrm{MSE}=\E_{X,\Tilde{S}}\left[\frac{1}{|\Tilde{S}|}\sum_{i\in \Tilde{S}}\left(\mathrm{alg}_{i}-\frac{1}{|Q|}\sum_{j\in Q}w_{ij}\right)^2\right],$
where $X$ is the training set given a fixed degree $d$,
$\Tilde{S}$ is the sampled student set,
$\mathrm{alg}_i$ is student $i$'s grade given by the algorithm
and $w_{ij}$ is the correctness of student $i$'s answer to question $j$. This measurement is closer to ex-post error except that the answering process is not repeatedly drawn.

In \Cref{fig:grading-rule:MSE}, we fix the student sample size $d_1 = 35$, i.e., $\Tilde{S} = S$
and change the degree constraint $d_2$ from 1 to $22$ and show the curve of the logarithm of $\mathrm{MSE}$. Our algorithm performs better than simple averaging when the degree constraint $d_2$ is larger than 5 and has a factor of 16\% to 20\% smaller MSE
compared to simple averaging when the degree constraint $d_2$ is larger than 10.
In \Cref{fig:grading-rule:Threshold}, we consider for every possible student sample size $d_1$,
what the smallest degree constraints $d_2$ is for our algorithm to perform better than simple averaging.
It provides a reference for choosing the grading rule in different situations.

\begin{figure}[htbp]
    \centering
    \subfigure[Logarithm of MSE v.s.\ Degree Constraint]{
        \begin{minipage}[b]{0.45\linewidth}
        \centering
        \includegraphics[width=0.98\textwidth]{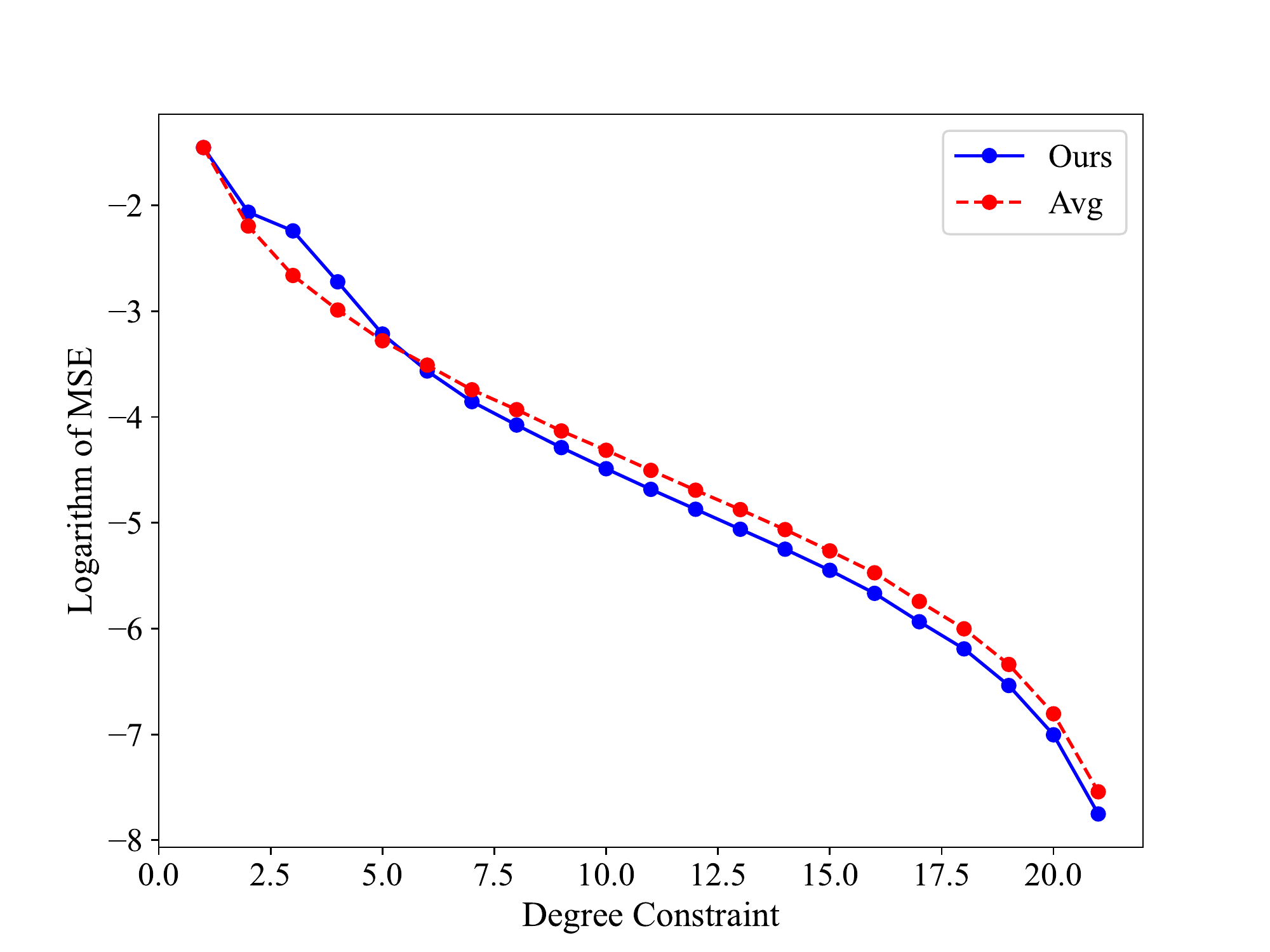}
        \end{minipage}
        \label{fig:grading-rule:MSE}
    }
    \hfill
    \subfigure[Threshold v.s.\ Student Sample Size]{
        \begin{minipage}[b]{0.45\linewidth}
        \centering
        \includegraphics[width=0.98\textwidth]{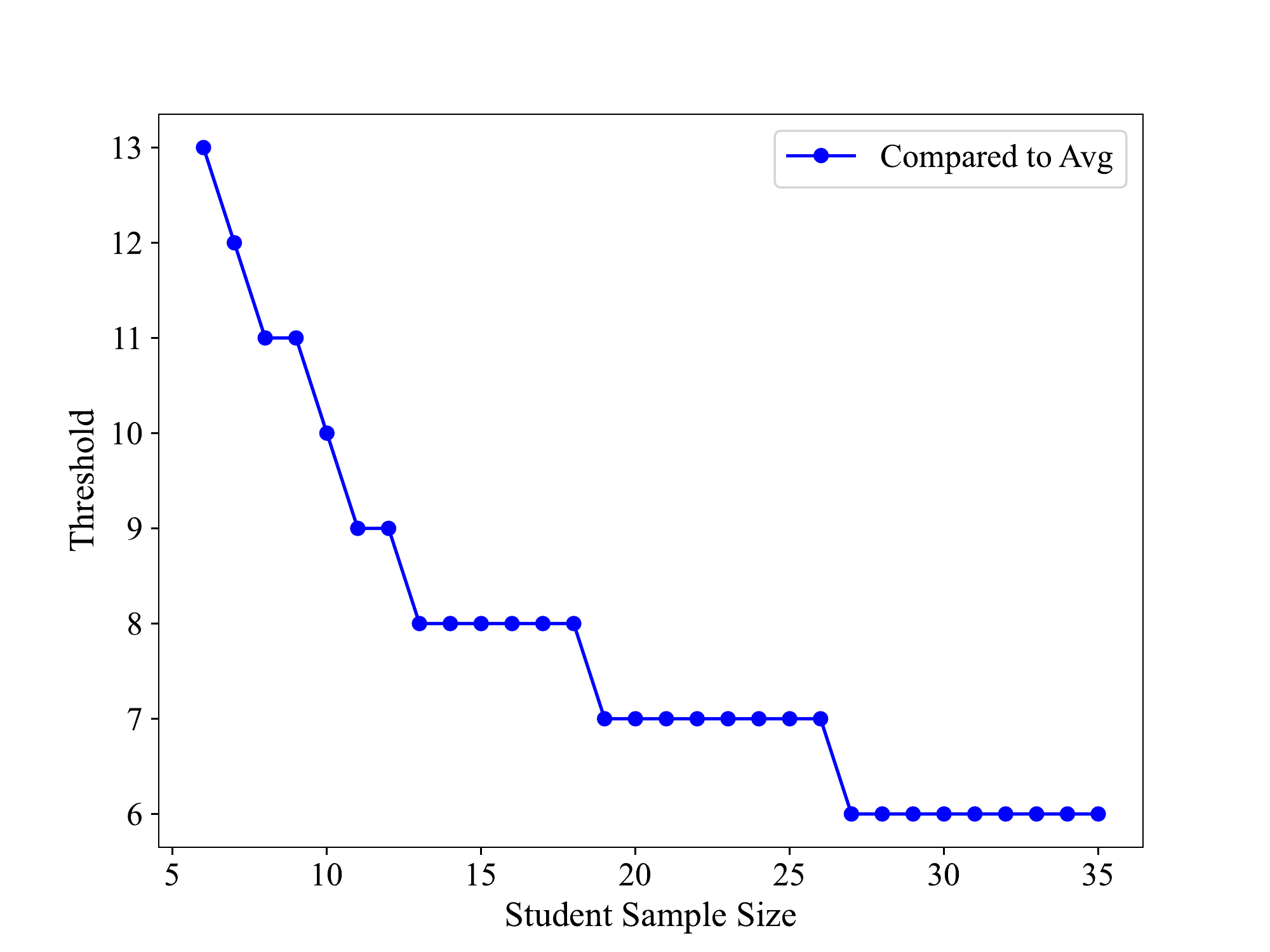}
        \end{minipage}
        \label{fig:grading-rule:Threshold}
    }
    \caption{Cross Validation}
    \label{fig:grading-rule}
\end{figure}

We also run the same cross-validation in numerical simulation. We assume two different Normal prior distributions for student abilities and question difficulties,
fitted by the data in \Cref{fig:complete-data:ecdf}.
For a fixed student sample size $d_1$ and degree constraint $d_2$, in every repetition,
we first draw $d_1$ i.i.d. student abilities and $|Q|=22$ i.i.d. question difficulties from those two prior distributions.
We use a complete task assignment graph to generate the exam result graph according to the model.
Then we randomly choose $d_2$ questions and the corresponding answers of each student as the training set
and use our algorithm and simple averaging to predict each student's average accuracy over the whole question bank.
Note that we do exact same things in the cross-validation,
so when calculating the MSE, we compare the algorithms' grades to the students' average accuracy given by the exam result graph
instead of the benchmark.
From \Cref{fig:simulated grading-rule} we observe that the simulation result is quite similar to that of the real-world cross-validation,
which suggests that the numerical simulation results we have in
\Cref{sec:ex-post bias} and \Cref{sec:ex-post error} could be a good reference for the
practical use of our algorithm.

\begin{figure}[htbp]
    \centering
    \subfigure[Logarithm of MSE v.s. Degree Constraint]{
        \begin{minipage}[b]{0.45\linewidth}
        \centering
        \includegraphics[width=0.98\textwidth]{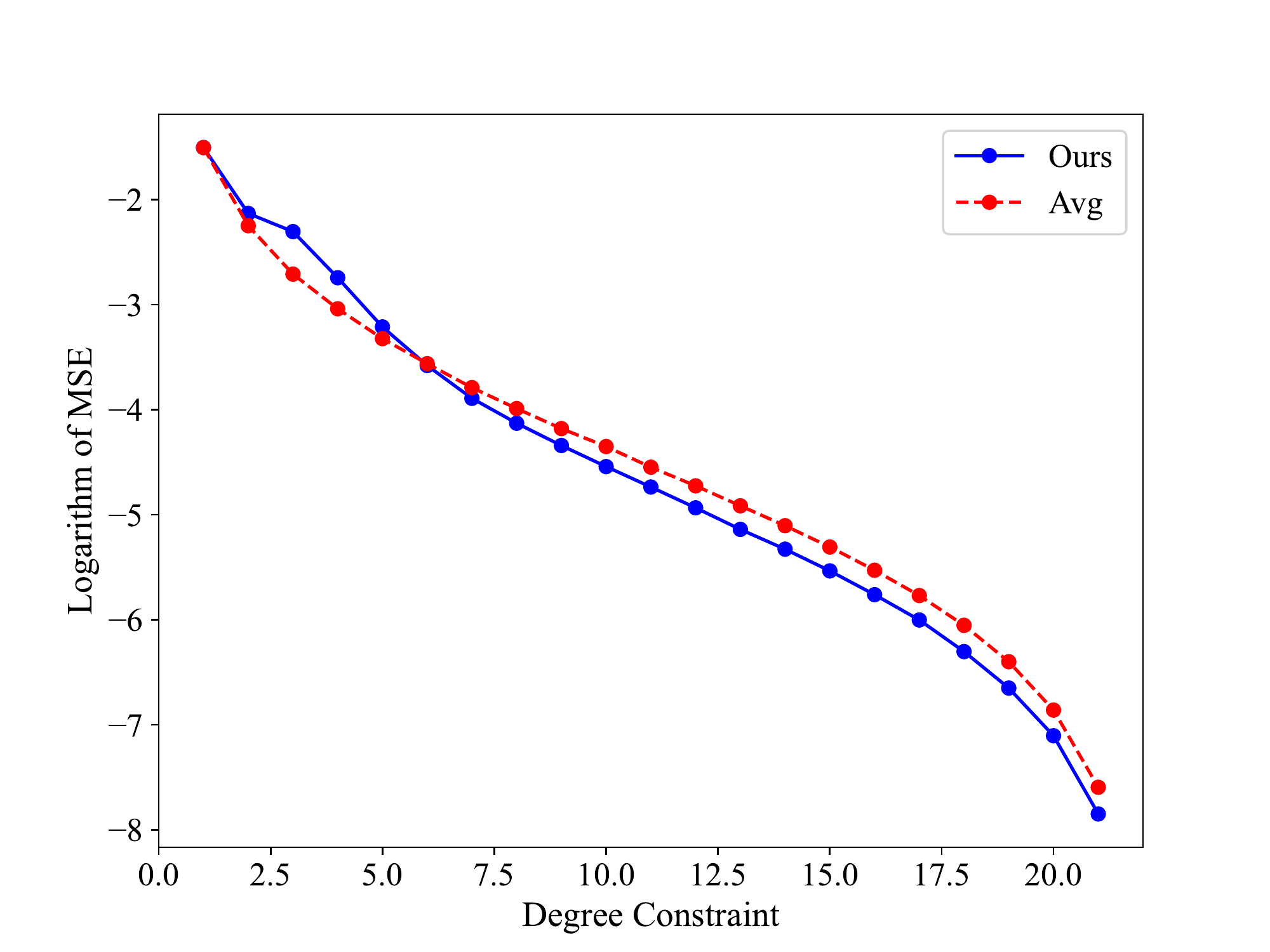}
        \end{minipage}
        \label{fig:simulated grading-rule:MSE}
    }
    \hfill
    \subfigure[Threshold v.s. Student Sample Size]{
        \begin{minipage}[b]{0.45\linewidth}
        \centering
        \includegraphics[width=0.98\textwidth]{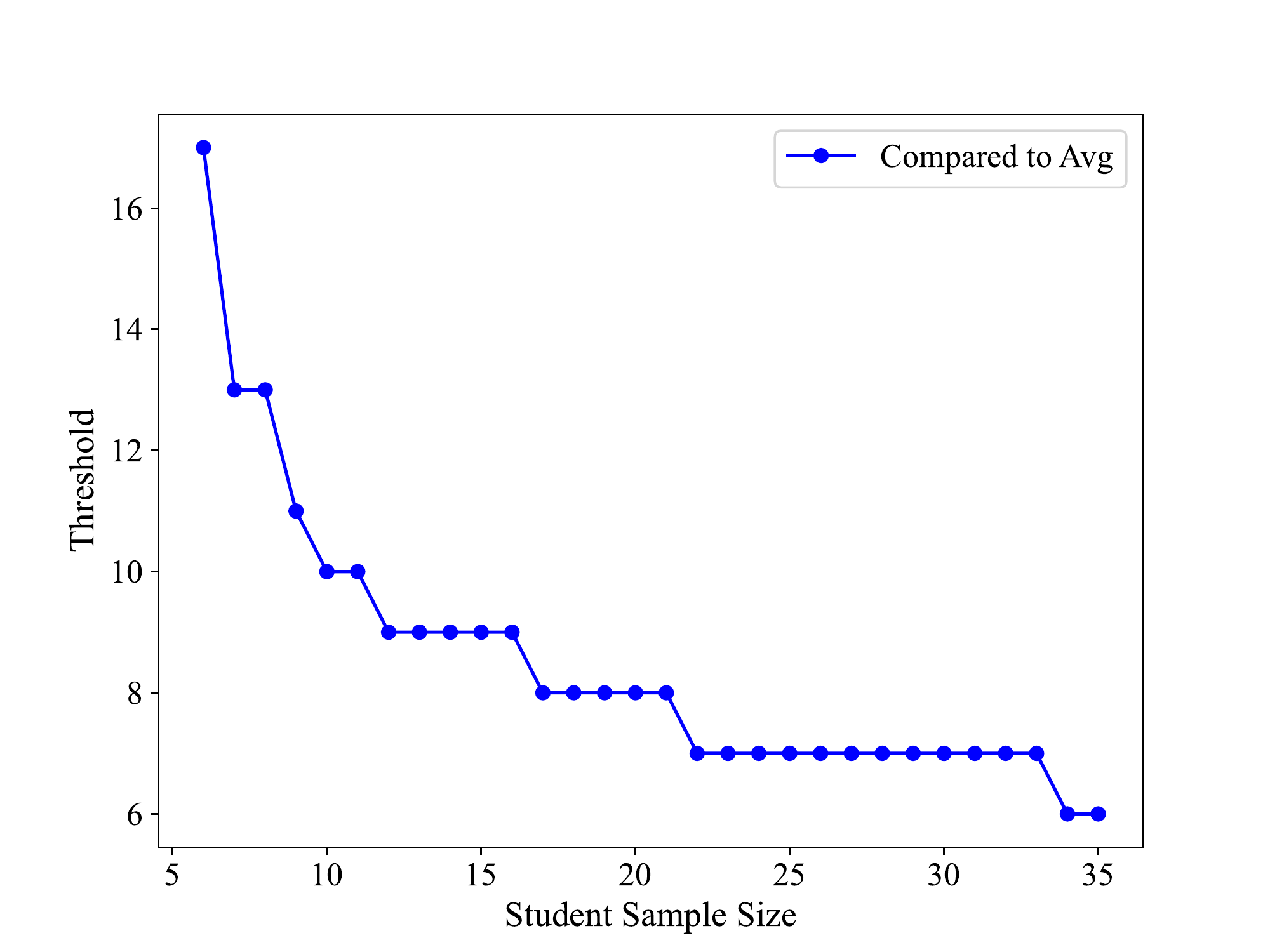}
        \end{minipage}
        \label{fig:simulated grading-rule:Threshold}
    }
    \caption{Simulated Cross Validation}
    \label{fig:simulated grading-rule}
\end{figure}

\subsection{The Effect of the Question Sample Size (Exam Design Problem)}
\label{sec:exam design problem}
We now consider the infinite question bank and
compare the expected maximum ex-post bias and the expected average ex-post bias between our algorithm and simple averaging.
We sampled and fixed $|S|=5$ random students among the previous $35$ students.
The question difficulties are i.i.d. distributed according to the linear interpolation of the difficulties
shown in \Cref{fig:complete-data:ecdf}.
For each question sample size $m$, we repeatedly generate task assignment graphs using \Cref{alg:graph} with the other parameter $d=5$.
It can be expected that when $m=d$ and $m\to\infty$, two algorithms should have the same performance.
\Cref{fig:infinite:small} shows a consistantly smaller expected maximum ex-post bias (\Cref{fig:infinite:small:maximum})
and expected average ex-post bias (\Cref{fig:infinite:small:average}) of our algorithm (blue curve) than simple averaging (red curve).
As the question sample size grows, the expected aggregated ex-post bias of our algorithm first decreases and then increases.
The turning point is about 6-9 for expected maximum ex-post bias and 10-15 for expected average ex-post bias.

\begin{figure}[htbp]
\centering
    \subfigure[Expected Maximum Ex-post Bias]{
        \begin{minipage}{0.45\linewidth}
            \centering
            \includegraphics[width=\textwidth]{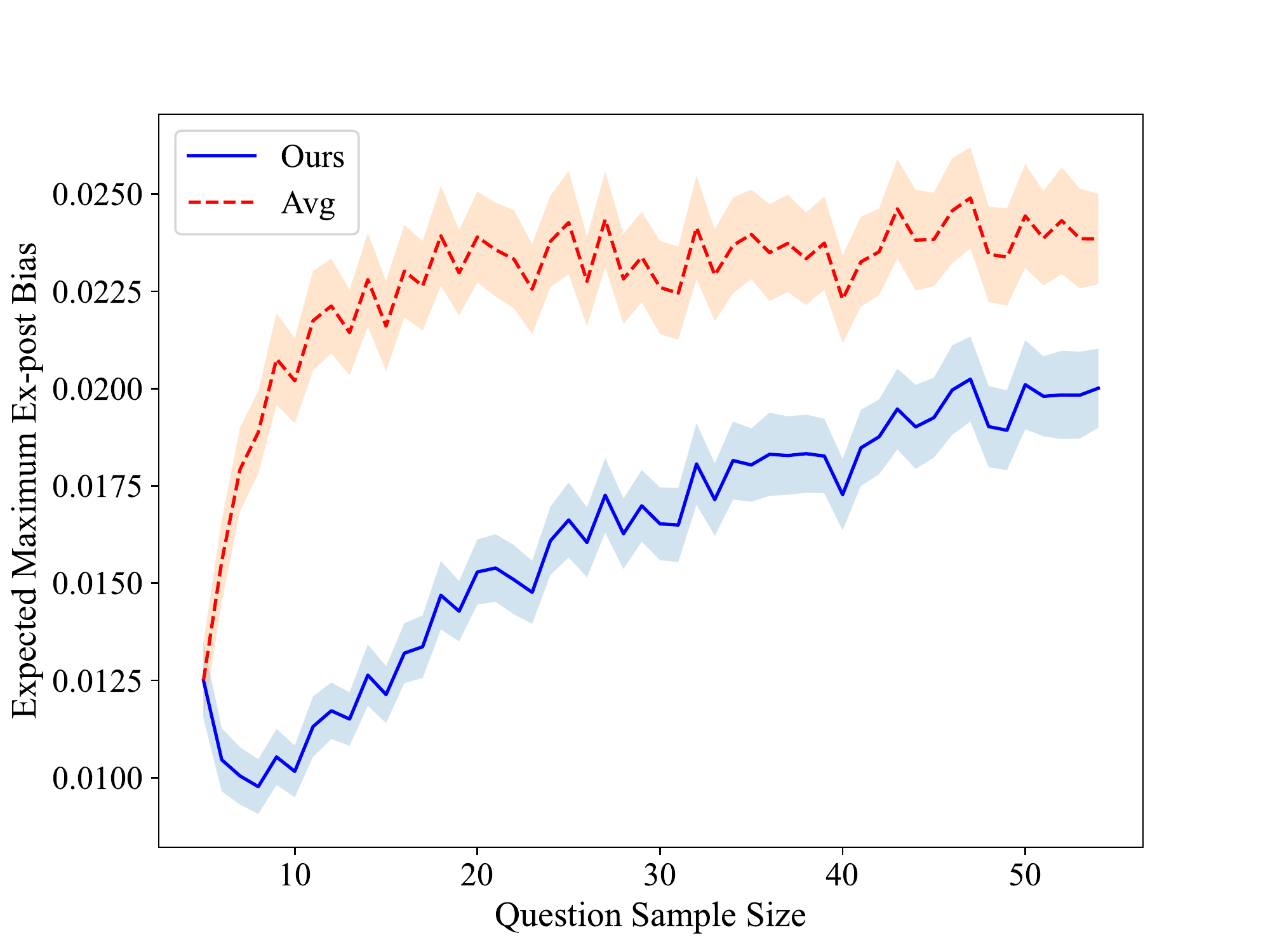}
        \end{minipage}
        \label{fig:infinite:small:maximum}
    }
    \subfigure[Expected Average Ex-post Bias]{
        \begin{minipage}{0.45\linewidth}
            \centering
            \includegraphics[width=\textwidth]{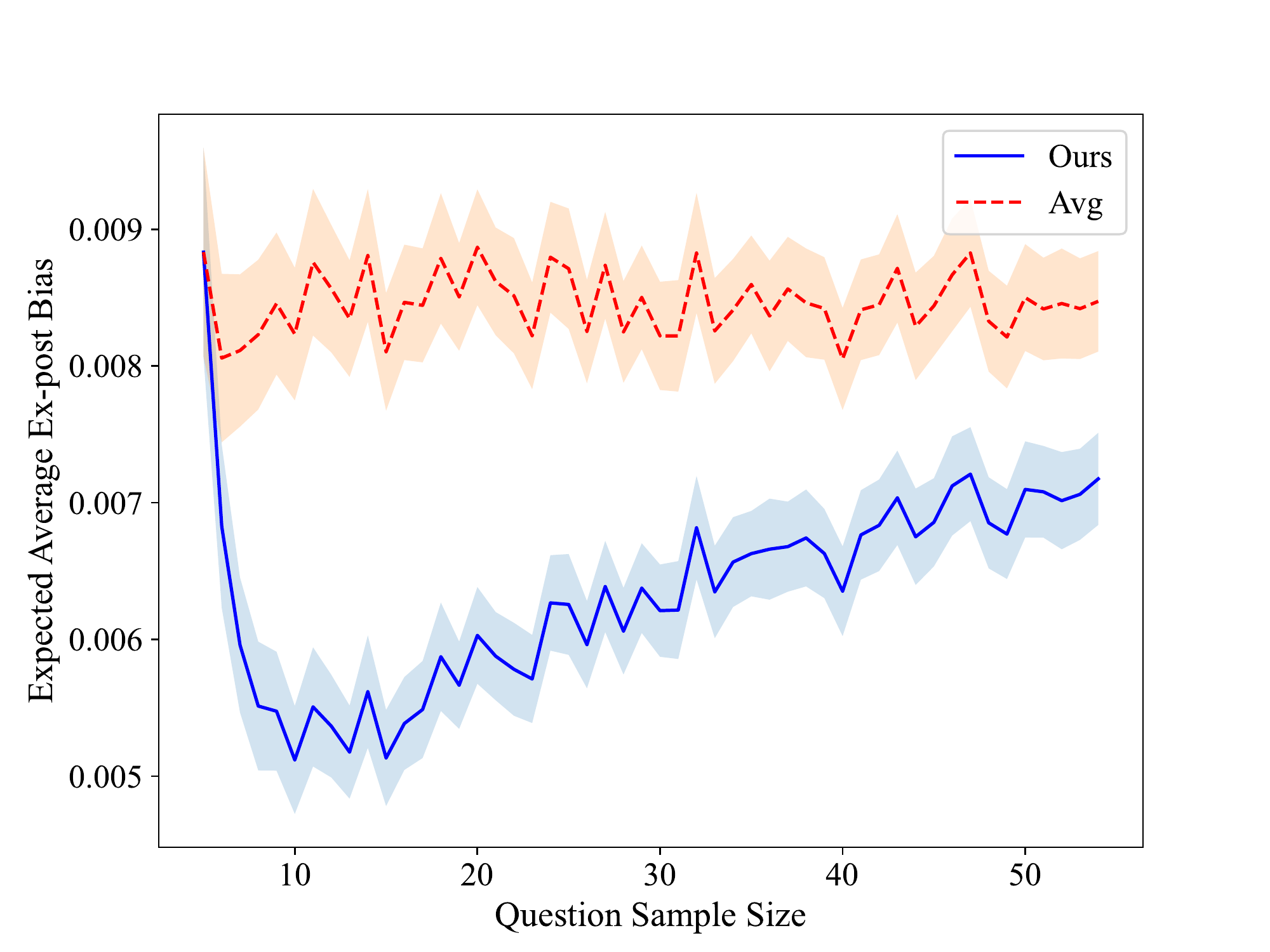}
        \end{minipage}
        \label{fig:infinite:small:average}
    }
    \caption{Expected Aggregated Ex-post Bias v.s. Question Sample Size}
    \label{fig:infinite:small}
\end{figure}
\section{Conclusions}\label{sec:conclusion}

We formulate and study the fair exam grading problem under the
Bradley-Terry-Luce model. We propose an algorithm that is a
generalization of the maximum likelihood estimation method. To
theoretically validate our algorithm, we prove the existence,
uniqueness, and uniform consistency of the maximum likelihood
estimators under the Bradley-Terry-Luce model on sparse bipartite
graphs. Our algorithm significantly outperforms simple averaging in
numerical simulation. On real-world data, our algorithm is better when
the students are assigned a sufficient number of questions (i.e., on
sufficiently long exams). We provide guidelines for how to choose
the grading rule given a certain number of students and a fixed exam
length.

Our model in this paper mainly considers true-or-false questions,
which can be extended to multiple-choice questions and to the case
where it can be assumed that students would guess if they cannot solve
a question. Our model treats student abilities and question difficulties
as one-dimensional, which can be extended to a multi-dimensional model that
takes different topics into account. Another potential extension of the model
is to introduce different groups of students, so each question might have
different difficulties for each group and we could ask for fairness across groups.
Our method to treat missing edges across comparable
components -- which predicts 0 or 1 -- needs to be improved, especially in the low-degree
environment (i.e., short exam lengths where the exam result graph is unlikely to be strongly connected). Also, it would be important to provide a simple and clear explanation to students for practical use.

\bibliography{ref.bib}
\bibliographystyle{plainnat}
\end{document}